\documentclass{llncs}
\usepackage{amsmath}
\usepackage{amssymb}
\usepackage{times}
\usepackage{indentfirst}
\usepackage{graphicx}

\begin{document}

\title{Geometric lattice structure of covering-based rough sets through matroids}

\author{Aiping Huang, William Zhu~\thanks{Corresponding author.
E-mail: williamfengzhu@gmail.com(William Zhu)}}
\institute{ Lab of Granular Computing, Zhangzhou Normal University, Zhangzhou 363000, China}

\date{\today}

\date{\today}
\maketitle

\begin{abstract}
Covering-based rough set theory is a useful tool to deal with inexact, uncertain or vague knowledge in information systems.
Geometric lattice has widely used in diverse fields, especially search algorithm design which plays important role in covering reductions.
In this paper, we construct four geometric lattice structures of covering-based rough sets through matroids, and compare their relationships.
First, a geometric lattice structure of covering-based rough sets is established through the transversal matroid induced by the covering, and its
characteristics including atoms, modular elements and modular pairs are studied.
We also construct a one-to-one correspondence between this type of geometric lattices and transversal matroids in the context of covering-based rough sets.
Second, sufficient and necessary conditions for three types of covering upper approximation operators to be closure operators of matroids are presented.
We exhibit three types of matroids through closure axioms, and then obtain three geometric lattice structures of covering-based rough sets.
Third, these four geometric lattice structures are compared.
Some core concepts such as reducible elements in covering-based rough sets are investigated with geometric lattices.
In a word, this work points out an interesting view, namely geometric lattice, to study covering-based rough sets.

\textbf{Keywords.} Rough sets, Covering, Matroid, Geometric lattice, Closure operator, Upper approximations.
\end{abstract}

\section{Introduction}
Rough set theory~\cite{Pawlak82Rough} was proposed by Pawlak to deal with granularity in information systems.
It is based on equivalence relations.
However, the equivalence relation is rather strict, hence the applications of the classical rough set theory are quite limited.
For this reason, rough set theory has been extended to generalized rough set theory based on tolerance relation~\cite{SkowronStepaniuk96tolerance}, similarity relation~\cite{SlowinskiVanderpooten00AGeneralized} and arbitrary binary relation~\cite{LiuZhu08TheAlgebraic,Yao98OnGeneralizing,Yao98Relational,Yao98Constructive,ZhuWang06ANew}.
Through extending a partition to a covering, rough set theory is generalized to covering-based rough sets~\cite{QinGaoPei07OnCovering,WangZhuZhu10Structure,ZhuWang02Some,WangZhu12Quantitative}.
Because of its high efficiency in many complicated problems such as attribute reduction and rule learning in incomplete information/decision~\cite{QianLiangLiWangMa10Approximation}, covering-based rough set theory has been attracting increasing research interest~\cite{YaoYao12Covering,WangYangYangWu12Relationships}.

Lattice is suggested by the form of the Hasse diagram depicting it.
In mathematics, lattice are partially ordered sets in which any two elements have a unique supremum (also called a least upper bound or join) and a unique infimum (also called a greatest lower bound or meet). They encode the algebraic behavior of the entailment relation and such basic logical connectives as ``and" (conjunction) and ``or"(disjunction), which result in adequate algebraic semantics for a variety of logical systems.
Lattices, especially geometric lattices, are one of the most important algebraic structures and are used extensively in both theoretical and
applicable fields, such as data analysis, formal concept analysis~\cite{WangLiuCao10ANew,Wille82Restructuring,YaoChen04Rough} and domain theory~\cite{Birhoff95Lattice}.

Matroid theory~\cite{Oxley93Matroid,Lai01Matroid} borrows extensively from linear algebra and graph theory.
There are dozens of equivalent ways to define a matroid.
Significant definitions of matroid include those in terms of independent sets, bases, circuits, closed sets or flats, closure operators, and rank functions, which provides well-established platforms to connect with other theories.
In application, matroids have been widely used in many fields such as combinatorial optimization, network flows, and algorithm design, especially greedy algorithm design~\cite{Lawler01Combinatorialoptimization,Edmonds71Matroids}.
Some works on the connection between rough sets and matroids have been conducted~\cite{WangZhu11Matroidal,WangZhuMin11Transversal,WangZhuZhuMin12Matroidal,ZhuWang11Matroidal}.

In this paper, we pay our attention to geometric lattice structures of covering based-rough sets through matroids.
First, a geometric lattice in covering-based rough sets is generated by the transversal matroid induced by a covering.
Moreover, we study the characteristics of the geometric lattice, such as atoms, modular elements and modular pairs.
We also point out a one-to-one correspondence between this type of geometric lattices and transversal matroids in the context of covering-based rough sets.
Second, generally, covering upper approximation operators are not necessarily closure operators of matroids.
Then we present sufficient and necessary conditions for three types of covering upper approximation operators to be closure operators of matroids, and exhibit representations of corresponding special coverings.
We study the properties of these matroids, and their closed-set lattices which are also geometric lattices.
Third, we compare these four geometric lattices through corresponding matroids.
Furthermore, some core concepts such as reducible and immured elements in covering-based rough sets are studied by geometric lattices.

The rest of this paper is organized as follows.
In Section~\ref{Preliminaries}, we recall some fundamental concepts related to covering-based rough sets, lattices and matroids.
Section~\ref{S:Thepropertiesoftheclosedsetlatticeofmatroidinducedbycovering} establishes a geometric lattice structure of covering-based rough sets through the transversal matroid induced by a covering.
In Section~\ref{S:Matroidalstructuresbasedonthreetypesofupperapproximations}, we present three geometric lattice structures of covering-based rough sets through three types of approximation operators.
Section~\ref{S:Therelationshipamongfourtypesofmatroidalstructuresandclosedsetlatticestructures} studies the relationship among these four geometric lattice structures.
This paper is concluded and further work is pointed out in Section~\ref{S:Conclusions}.

\section{Preliminaries}
\label{Preliminaries}

In this section, we review some basic concepts of matroids, lattices and covering-based rough sets.

\subsection{Matriod}

\begin{definition}(Matroid)~\cite{Oxley93Matroid}
A matroid is an ordered pair $(E,\mathcal{I})$ consisting of a finite set $E$ and a collection $\mathcal{I}$ of subsets of $E$ satisfying the following three conditions:\\
(1) $\emptyset \in \mathcal{I}$;\\
(2) If $I\in \mathcal{I}$ and $I^{'}\subseteq I$, then $I^{'}\in \mathcal{I}$;\\
(3) If $I_{1},I_{2}\in \mathcal{I}$ and $|I_{1}|<|I_{2}|$, then there is an element $e\in I_{2}-I_{1}$ such that $I_{1}\bigcup e\in \mathcal{I}$, where $|X|$ denotes the cardinality of $X$.
\end{definition}

Let $M(E,\mathcal{I})$ be a matroid.
The members of $\mathcal{I}$ are the independent sets of $M$.
A set in $\mathcal{I}$ is maximal, in the sense of inclusion, is called a base of the matroid $M$.
If $A\notin \mathcal{I}$, $A$ is called dependent set.
In the sense of inclusion, a minimal dependent subset of $E$ is called a circuit of the matroid $M$.
If $\{a\}$ is a circuit, we call $\{a\}$ a loop.
Moreover, if $\{a,b\}$ is a circuit, then $a$ and $b$ are said to be parallel.
A matroid is called simple matroid if it has no loops and no parallel elements.
The rank function of a matroid is a function $r_{M}:2^{E}\rightarrow N$ defined by $r_{M}(X)=max\{|I|: I\subseteq X, I\in \mathcal{I}\}$ $(X\subseteq E)$.
For each $X\subseteq E$, we say $cl_{M}(X)=\{a\in E:r_{M}(X)=r_{M}(X\bigcup \{a\})\}$ is the closure of $X$ in $(E,\mathcal{I})$.
When there is no confusion, we use the symbol $cl(X)$ for short.
$X$ is called a closure set if $cl(X)=X$.

The rank function of a matriod, directly analogous to a similar theorem of linear algebra, has the following proposition.

\begin{proposition}\cite{Oxley93Matroid}
Let $M(E,\mathcal{I})$ be a matroid and $r_{M}$ is rank function of $M$. For all $X,Y\subseteq E$, the following properties hold:\\
(R1) For all $X\in 2^{E}$, $0\leq r_{M}(X)\leq|X|$;\\
(R2) If $X\subseteq Y\subseteq E$, then $r_{M}(X)\leq r_{M}(Y)$;\\
(R3) If $X,Y\subseteq E$, then $r_{M}(X\bigcup Y)+r_{M}(X\bigcap Y)\leq r_{M}(X)+r_{M}(Y)$.
\end{proposition}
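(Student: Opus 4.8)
The plan is to derive all three properties directly from the definition $r_{M}(X)=\max\{|I|: I\subseteq X, I\in \mathcal{I}\}$, disposing of the monotonicity facts (R1) and (R2) quickly and concentrating the effort on the submodular inequality (R3).

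For (R1), since $\emptyset\in\mathcal{I}$ by condition (1) and $\emptyset\subseteq X$, the family $\{I: I\subseteq X, I\in\mathcal{I}\}$ is nonempty, so $r_{M}(X)$ is a well-defined maximum that is at least $|\emptyset|=0$; moreover every independent $I\subseteq X$ has $|I|\leq|X|$, whence $r_{M}(X)\leq|X|$. For (R2), if $X\subseteq Y$ then every independent subset of $X$ is also an independent subset of $Y$, so the family maximized to obtain $r_{M}(X)$ is contained in the one maximized to obtain $r_{M}(Y)$, and taking maxima preserves the inequality $r_{M}(X)\leq r_{M}(Y)$.

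The real content is (R3). First I would fix a maximum-cardinality independent subset $B_{0}$ of $X\cap Y$, so that $|B_{0}|=r_{M}(X\cap Y)$, together with a maximum-cardinality independent subset $B^{*}$ of $X\cup Y$, so that $|B^{*}|=r_{M}(X\cup Y)$. The key step is to enlarge $B_{0}$ to an independent set $B$ satisfying $B_{0}\subseteq B\subseteq X\cup Y$ and $|B|=r_{M}(X\cup Y)$: as long as the current enlargement has cardinality strictly smaller than $|B^{*}|$, condition (3) of the matroid definition, applied to it and $B^{*}$, supplies an element $e\in B^{*}$ outside the current set that can be adjoined while preserving independence; since $e\in B^{*}\subseteq X\cup Y$, the enlargement stays within $X\cup Y$, and iterating until the cardinality reaches $|B^{*}|$ yields the desired $B$. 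Because $B\subseteq X\cup Y$, we have $B=(B\cap X)\cup(B\cap Y)$, and since $B\cap X$ and $B\cap Y$ are independent subsets of $X$ and $Y$ respectively, $|B\cap X|\leq r_{M}(X)$ and $|B\cap Y|\leq r_{M}(Y)$.

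To finish, I would apply inclusion--exclusion to $B\cap X$ and $B\cap Y$:
\[
|B\cap X|+|B\cap Y|=|B|+|B\cap X\cap Y|.
\]
Since $B_{0}\subseteq B$ and $B_{0}\subseteq X\cap Y$ give $B_{0}\subseteq B\cap X\cap Y$, we have $|B\cap X\cap Y|\geq|B_{0}|=r_{M}(X\cap Y)$. Combining with the two bounds above yields $r_{M}(X)+r_{M}(Y)\geq|B|+|B\cap X\cap Y|\geq r_{M}(X\cup Y)+r_{M}(X\cap Y)$, which is exactly (R3). I expect the main obstacle to be precisely the enlargement step: justifying carefully, from the augmentation condition (3) alone, that $B_{0}$ can be grown inside $X\cup Y$ all the way up to cardinality $r_{M}(X\cup Y)$; once this is in place, the remaining cardinality bookkeeping is routine.
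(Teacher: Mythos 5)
Your proof is correct. The paper states this proposition as a known result cited from Oxley's \emph{Matroid Theory} and gives no proof of its own, so there is nothing to compare against; your argument --- deriving (R1) and (R2) directly from the definition of $r_{M}$ and proving (R3) by augmenting a maximum independent subset $B_{0}$ of $X\bigcap Y$ inside $X\bigcup Y$ via axiom (3) and then counting with inclusion--exclusion --- is precisely the standard textbook proof of submodularity, and every step (in particular the enlargement step you flagged as the main obstacle) is justified correctly.
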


The following proposition is the closure axiom of a matroid.
It means that a operator satisfies (1)-(4) if and only if the operator is the closure operator of a matroid.

\begin{proposition}\cite{Oxley93Matroid}
Let $E$ be a set. A function $cl_{M}:2^{E}\rightarrow 2^{E}$ is the closure operator of a matroid on $E$ if and only if it satisfies the following conditions:\\
(1) If $X\subseteq E$, then $X\subseteq cl_{M}(X)$.\\
(2) If $X\subseteq Y\subseteq E$, then $cl_{M}(X)\subseteq cl_{M}(Y)$.\\
(3) If $X\subseteq E$, $cl_{M}(cl_{M}(X))=cl_{M}(X)$.\\
(4) If $X\subseteq E,x\in E$, and $y\in cl_{M}(X\bigcup \{x\})-cl_{M}(X)$, then $x\in cl_{M}(X\bigcup \{y\})$.
\end{proposition}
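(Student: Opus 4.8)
The statement is a biconditional, so I would establish its two implications separately, relying throughout on the rank function together with the properties (R1)--(R3) above and the definition $cl_M(X)=\{a\in E:r_M(X)=r_M(X\cup\{a\})\}$.

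For necessity I would assume $cl_M$ arises from a matroid $M(E,\mathcal{I})$ and check (1)--(4) directly. Condition (1) is immediate, since $X\cup\{a\}=X$ whenever $a\in X$. For condition (2), given $X\subseteq Y$ and $a\in cl_M(X)$, applying submodularity (R3) to $Y$ and $X\cup\{a\}$ together with $r_M(X\cup\{a\})=r_M(X)$ yields $r_M(Y\cup\{a\})\le r_M(Y)$, and (R2) supplies the reverse inequality, so $a\in cl_M(Y)$. For condition (3) the key auxiliary fact is $r_M(cl_M(X))=r_M(X)$, which I would prove by adjoining the elements of $cl_M(X)\setminus X$ one at a time and using (2) to see that none raises the rank; idempotence then follows because $a\in cl_M(cl_M(X))$ gives $r_M(X\cup\{a\})\le r_M(cl_M(X)\cup\{a\})=r_M(cl_M(X))=r_M(X)$, forcing $a\in cl_M(X)$. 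Condition (4) rests on the remark, itself a consequence of (R1) and (R3), that a single element raises the rank by at most one: from $y\notin cl_M(X)$ we get $r_M(X\cup\{y\})=r_M(X)+1$, and a short case analysis using $y\in cl_M(X\cup\{x\})$ forces $r_M(X\cup\{x\})=r_M(X)+1$ and hence $r_M(X\cup\{x,y\})=r_M(X\cup\{y\})$, that is, $x\in cl_M(X\cup\{y\})$.

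For sufficiency I would start from an operator $cl_M$ satisfying (1)--(4) and build the matroid explicitly. The natural construction is to call $I\subseteq E$ independent exactly when $x\notin cl_M(I\setminus\{x\})$ for every $x\in I$, and to let $\mathcal{I}$ be the family of all such sets. I would then verify the three independence axioms: $\emptyset\in\mathcal{I}$ holds vacuously; the hereditary axiom (2) follows from monotonicity, since shrinking $I$ shrinks each $cl_M(I\setminus\{x\})$; and the augmentation axiom (3) is exactly where the Mac Lane--Steinitz exchange property (4) is used. Finally I would check that the closure operator of this matroid reproduces the given $cl_M$, so that the construction recovers the operator we began with.

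The main obstacle is the sufficiency direction, and within it two interlocking points: deriving the independence-augmentation axiom (3) from the single-element exchange property (4), which calls for an inductive argument that enlarges an independent set one element at a time while repeatedly invoking (4), and then showing that the induced rank and closure reproduce the original operator rather than merely some operator sharing its fixed points. Since this is a standard characterization recorded in~\cite{Oxley93Matroid}, I would ultimately be content to cite it, but the outline above is how I would reconstruct the proof from the rank and closure data already at hand.
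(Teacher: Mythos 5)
The paper offers no proof of this proposition: it is quoted verbatim from Oxley's \emph{Matroid Theory} as a known closure axiomatization, so there is no in-paper argument to compare against. Your outline is the standard one — necessity via the rank-function definition of $cl_{M}$ together with (R1)--(R3), sufficiency via the independence family $\{I:\forall x\in I,\ x\notin cl_{M}(I\setminus\{x\})\}$ — its individual steps check out, and you correctly flag the two genuinely laborious points (deriving augmentation from the exchange property and verifying that the induced closure recovers the given operator), so citing the textbook, as the paper does, is the appropriate resolution.
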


Transversal theory is a branch of a matroid theory. It shows how to induce a matroid, namely, transversal matroid, by a family of subsets of a set. Hence, the transversal matroid establishes a bridge between collections of subsets of a set and matroids.

\begin{definition}(transversal)\cite{Oxley93Matroid}
Let $S$ be a nonempty finite set, $J=\{1,2,\cdots,m\}$. $\mathcal{F}$ denotes the family $\{F_{1},F_{2},\cdots,F_{m}\}$ of subsets of $S$. A transversal or system of distinct representatives of $\{F_{1},F_{2},\cdots,F_{m}\}$ is a subset $\{e_{1},e_{2},\cdots,e_{m}\}$ of $S$ such that $e_{i}\in F_{i}$ for all $i$ in $J$. If for some subset $K$ of $J$, $X$ is a transversal of $\{F_{i}:i\in K\}$, then $X$ is said to be a partial transversal of $\{F_{1},F_{2},\cdots,F_{m}\}$.
\end{definition}

\begin{example}
Let $S=\{1,2,3,4\}$, $F_{1}=\{2,3\},F_{2}=\{4\},F_{3}=\{2,4\}$. For $\mathcal{F}=\{F_{1},F_{2},F_{3}\}$, $T=\{2,3,4\}$ is a transversal of $\mathcal{F}$, since $2\in F_{3}$, $3\in F_{1}$, $4\in F_{2}$. $T^{'}=\{2,4\}$ is a partial transversal of $\mathcal{F}$, since there exists $K=\{1,2\}\subseteq J$ such that $T^{'}$ is a transversal of $\{F_{i}:i\in K\}$.
\end{example}

\begin{proposition}\cite{Oxley93Matroid}
Let $\mathcal{F}=\{F_{i}:i\in J\}$ be a family of subsets of $E$. $M(\mathcal{F})=(E, \mathcal{I}(\mathcal{F}))$ is a matroid where $\mathcal{I}(\mathcal{F})$ is the family of all partial transversals of $\mathcal{F}$.
\end{proposition}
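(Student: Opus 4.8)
The plan is to verify the three independence axioms of a matroid directly for the collection $\mathcal{I}(\mathcal{F})$ of partial transversals, recasting partial transversals as matchable vertex sets in a bipartite graph. I would form the bipartite graph $G$ with vertex classes $E$ and $J$, placing an edge between $e\in E$ and $i\in J$ precisely when $e\in F_{i}$. A subset $X\subseteq E$ is then a partial transversal of $\mathcal{F}$ if and only if there is a matching of $G$ that saturates $X$, i.e. each vertex of $X$ is the $E$-endpoint of a distinct edge. This translation turns the conditions on distinct representatives into statements about matchings, which are easier to manipulate.

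Axioms (1) and (2) are immediate in this language. For (1), the empty set is saturated by the empty matching, so $\emptyset\in\mathcal{I}(\mathcal{F})$. For (2), if $I$ is saturated by a matching $M$ and $I'\subseteq I$, then deleting from $M$ the edges incident to $I\setminus I'$ yields a matching saturating $I'$; hence $I'\in\mathcal{I}(\mathcal{F})$.

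The substance of the proof lies in the augmentation axiom (3). Given partial transversals $I_{1},I_{2}$ with $|I_{1}|<|I_{2}|$, I would choose matchings $M_{1}$ and $M_{2}$ saturating $I_{1}$ and $I_{2}$ respectively, and examine their symmetric difference $M_{1}\triangle M_{2}$. Each connected component of this subgraph is either an alternating path or an alternating cycle whose edges come alternately from $M_{1}$ and $M_{2}$. Because $M_{2}$ saturates strictly more vertices of $E$ than $M_{1}$ does, a counting argument forces the existence of a component that is a path containing one more $M_{2}$-edge than $M_{1}$-edge and whose $E$-endpoint lies in $I_{2}\setminus I_{1}$; call this endpoint $e$. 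Swapping along this path, that is, taking the $M_{2}$-edges and discarding the $M_{1}$-edges on the path while retaining the rest of $M_{1}$, produces a new matching that saturates $I_{1}\cup\{e\}$, so $I_{1}\cup\{e\}\in\mathcal{I}(\mathcal{F})$.

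The main obstacle is this last step: one must argue carefully that the alternating-path component ending in $I_{2}\setminus I_{1}$ exists and that the swap is well defined, namely that the new edge set is genuinely a matching. The delicate point is tracking which endpoints of each component lie in $I_{1}$, in $I_{2}$, or in neither, so as to guarantee simultaneously that $e\in I_{2}\setminus I_{1}$ and that no vertex of $I_{1}$ becomes unsaturated by the exchange. This is exactly the content of the classical augmenting-path lemma from matching theory, and invoking it is what makes the hereditary collection of partial transversals satisfy the exchange property, completing the verification that $M(\mathcal{F})$ is a matroid.
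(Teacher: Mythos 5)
Your proof is correct. The paper does not prove this proposition at all: it is quoted verbatim from Oxley's \emph{Matroid Theory} as a known result, so there is no in-paper argument to compare against. What you give is precisely the classical proof from the matching-theory literature (and essentially the one in the cited source): encode partial transversals as matchable subsets of $E$ in the bipartite graph $G(E,J)$, dispose of axioms (1) and (2) trivially, and establish the exchange axiom by an augmenting-path argument on $M_{1}\triangle M_{2}$. The one point you flag as delicate is handled correctly by your setup: if an endpoint of a path component is incident to an $M_{2}$-edge of that component, it cannot be $M_{1}$-saturated, since its $M_{1}$-edge would either lie in $M_{1}\cap M_{2}$ (contradicting that $M_{2}$ is a matching) or lie in the symmetric difference (contradicting that the vertex is an endpoint); hence the swap never unsaturates a vertex of $I_{1}$, and the surplus of $M_{2}$-edges guarantees a path component whose $E$-endpoint lies in $I_{2}\setminus I_{1}$.
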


\begin{definition}\cite{WangZhuMin11Transversal}
Let $\mathcal{F}=\{F_{i}:i\in J\}$ be a family of subsets of $E$. We say $M(\mathcal{F})=(E,\mathcal{I}(\mathcal{F}))$ is the transversal matroid induced by $\mathcal{F}$.
\end{definition}

\subsection{Lattice}

Let $P$ be an ordered set and $a,b \in P$.
We say that $a$ is covered by $b$ (or $b$ covers $a$) if $a<b$ and there is no element $c$ in $P$ with $a<c<b$.
A chain in $P$ from $x_{0}$ to $x_{n}$ is a subset $\{x_{0},x_{1},\cdots,x_{n}\}$ of $P$ such that $x_{0}<x_{1}<\cdots<x_{n}$.
The length of such a chain is $n$, and the chain is maximal if $x_{i}$ covers $x_{i-1}$ for all $i\in \{1,2,\cdots,n\}$.
A poset $(\mathcal{L},\leq)$ is a lattice if $a\bigvee b$ and $a\bigwedge b$ exist for all $a,b\in \mathcal{L}$. If $\mathcal{L}$ is a lattice with zero element $0$, then $a\in \mathcal{L}$ is called an atoms of $\mathcal{L}$ if $a$ covers $0$. If, for every pair $\{a,b\}$ of elements of $P$ with $a<b$, all maximal chains from $a$ to $b$ have the same length, then $P$ is said to satisfy the Jordan-Dedekind chain condition. The height $h_{P}(y)$ of an element $y$ of $P$ is the maximum length of a chain from $0$ to $y$. Thus, in particular, the atoms of $P$ are precisely the elements of height one. It is not difficult to check that every finite lattice has a zero and the one. A finite lattice $\mathcal{L}$ is called semimodular if it satisfies the Jordan-Dedekind chain condition and for every pair $x,y$ of elements of $\mathcal{L}$, the equality $h_{\mathcal{L}}(x)+h_{\mathcal{L}}(y)\geq h_{\mathcal{L}}(x\bigvee y)+h_{\mathcal{L}}(x\bigwedge y)$ holds. A geometric lattice is a finite semimodular lattice in which every element is join of atoms.

\begin{definition}\cite{Birhoff95Lattice}
Let $\mathcal{L}$ be a lattice, $a,b\in \mathcal{L}$.\\
(ME) For all $x, z\in \mathcal{L}$, $x\geq z$ implies $x\bigwedge(a\bigvee z)=(x\bigwedge a)\bigvee z$, $a$ is called a modular element of $\mathcal{L}$.\\
(MP) For all $z\in \mathcal{L}$, $b\geq z$ implies $b\bigwedge(a\bigvee z)=(b\bigwedge a)\bigvee z$, $(a,b)$ is called a modular pair of $\mathcal{L}$.
\end{definition}

\begin{lemma}\cite{Birhoff95Lattice}
Let $\mathcal{L}$ be a semimodular lattice, then $(a,b)$ is a modular pair if and only if $h_{\mathcal{L}}(a \bigvee b)+h_{\mathcal{L}}(a \bigwedge b)=h_{\mathcal{L}}(a)+h_{\mathcal{L}}(b)$ for all $a,b \in \mathcal{L}$
\end{lemma}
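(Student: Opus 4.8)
The plan is to analyze the two mutually adjoint maps between the intervals $[a\wedge b,b]$ and $[a,a\vee b]$ and to translate the modular-pair identity into a purely order-theoretic statement about them. Define $\phi:[a\wedge b,b]\to[a,a\vee b]$ by $\phi(z)=a\vee z$ and $\psi:[a,a\vee b]\to[a\wedge b,b]$ by $\psi(w)=b\wedge w$; both are order preserving, and the equivalences $\phi(z)\le w\iff z\le w\iff z\le\psi(w)$ (using $a\le w$ and $z\le b$) show that $(\phi,\psi)$ is a Galois connection. Hence in any lattice $\psi\phi\ge\mathrm{id}$, $\phi\psi\le\mathrm{id}$ and $\phi\psi\phi=\phi$. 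Writing $h$ for $h_{\mathcal{L}}$, the first observation is that the condition (MP) is \emph{equivalent} to $\psi\phi=\mathrm{id}$ on $[a\wedge b,b]$: for any $z\le b$ put $z'=(a\wedge b)\vee z\in[a\wedge b,b]$; then $a\vee z=a\vee z'$ and $(b\wedge a)\vee z=z'$, so $b\wedge(a\vee z)=(b\wedge a)\vee z$ is exactly $\psi\phi(z')=z'$, and $z'$ ranges over all of $[a\wedge b,b]$. I will therefore work with the equivalent goal $\psi\phi=\mathrm{id}$.

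For the forward direction, suppose $(a,b)$ is a modular pair, i.e. $\psi\phi=\mathrm{id}$. Then $\phi$ has a left inverse and so is injective. Pushing a maximal chain of $[a\wedge b,b]$, which has length $h(b)-h(a\wedge b)$ by the Jordan--Dedekind condition, forward under the order-preserving injection $\phi$ yields a strict chain from $a$ to $a\vee b$ of the same length, so $h(a\vee b)-h(a)\ge h(b)-h(a\wedge b)$, that is $h(a\vee b)+h(a\wedge b)\ge h(a)+h(b)$. The reverse inequality is precisely the submodular inequality built into semimodularity, so equality follows.

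The reverse direction is the main obstacle, since the elementary inequalities only deliver submodularity and the hypothesis of \emph{equality} must be propagated to every $z$. Assuming $h(a\vee b)+h(a\wedge b)=h(a)+h(b)$, I will first establish the exact formula $h(a\vee z)=h(a)+h(z)-h(a\wedge b)$ for all $z\in[a\wedge b,b]$. Fix a maximal chain $a\wedge b=z_{0}\prec z_{1}\prec\cdots\prec z_{\ell}=b$ and set $w_{i}=a\vee z_{i}$. Applying the submodular inequality to $w_{i-1}$ and $z_{i}$, using $w_{i-1}\vee z_{i}=w_{i}$ and $w_{i-1}\wedge z_{i}\ge z_{i-1}$, shows that each step satisfies $h(w_{i})-h(w_{i-1})\in\{0,1\}$. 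Consequently $g(z_{i})=h(a)+h(z_{i})-h(a\wedge b)-h(w_{i})$ is nondecreasing in $i$, while $g(z_{0})=0$ and $g(z_{\ell})=0$ by the height hypothesis; a nondecreasing sequence with $g(z_{0})=g(z_{\ell})=0$ vanishes identically. Since every $z\in[a\wedge b,b]$ lies on some such maximal chain, the formula holds throughout the interval.

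Finally I will combine this formula with the Galois identity. For $z\in[a\wedge b,b]$ put $z''=\psi\phi(z)\in[a\wedge b,b]$; then $z\le z''$, and $\phi(z'')=\phi\psi\phi(z)=\phi(z)$. The height formula gives $h(\phi(z''))=h(z'')+c$ and $h(\phi(z))=h(z)+c$ with $c=h(a)-h(a\wedge b)$, so $h(z'')=h(z)$; together with $z\le z''$ and the Jordan--Dedekind condition this forces $z''=z$, i.e. $\psi\phi(z)=z$, whence $(a,b)$ is a modular pair. I expect the only delicate point to be the monotonicity-and-pinning argument for $g$, where semimodularity (step sizes in $\{0,1\}$) and the equality hypothesis jointly do the work.
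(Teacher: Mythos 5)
The paper states this lemma as a cited result from Birkhoff's \emph{Lattice Theory} and supplies no proof of its own, so there is nothing in the paper to compare your argument against line by line; what can be said is that your proof is correct and complete for the notion of semimodularity the paper actually uses (Jordan--Dedekind plus the height inequality $h(x)+h(y)\geq h(x\vee y)+h(x\wedge y)$). Your reduction of (MP) to the identity $\psi\phi=\mathrm{id}$ on $[a\wedge b,b]$ via the substitution $z'=(a\wedge b)\vee z$ is the one point that genuinely needs care, since the paper's (MP) quantifies over all $z\leq b$ rather than over the interval, and you handle it correctly. The forward direction (injectivity of $\phi$ pushes a maximal chain of $[a\wedge b,b]$ to a strict chain in $[a,a\vee b]$, giving the reverse of the submodular inequality) and the reverse direction (the step sizes $h(w_i)-h(w_{i-1})\in\{0,1\}$ along a maximal chain, the telescoping function $g$ pinned to $0$ at both ends, and then $h(\psi\phi(z))=h(z)$ with $z\leq\psi\phi(z)$ forcing equality) both check out. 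Compared with the classical treatment, which phrases the same content in terms of the transposed intervals $[a\wedge b,b]$ and $[a,a\vee b]$ being perspective and argues by induction along chains, your Galois-connection packaging buys a cleaner separation between the purely order-theoretic identities ($\psi\phi\geq\mathrm{id}$, $\phi\psi\phi=\phi$) and the single place where semimodularity and the height hypothesis enter; the cost is only that you must verify the interval form of (MP) is equivalent to the paper's unrestricted form, which you do.
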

\subsection{Closed-set lattice of matroid}

If $M$ is a matroid, then $\mathcal{L}(M)$ will denote the set of all closed sets of $M$ ordered by inclusion. For a matroid $M$, the zero of $\mathcal{L}(M)$ is $cl_{M}(\emptyset)$, while the one is $E$.

\begin{lemma}\cite{Oxley93Matroid}
$\mathcal{L}(M)$ is a lattice, for all closed sets $X$ and $Y$ of $M$,
$X\bigwedge Y=X\bigcap Y$ and $X\bigvee Y=cl_{M}(X\bigcup Y)$.
\end{lemma}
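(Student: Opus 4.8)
The plan is to verify directly that the poset $(\mathcal{L}(M),\subseteq)$ admits both a meet and a join for every pair of closed sets $X,Y$, and to identify these with the stated formulas. Throughout I would use only the first three closure axioms stated above — the extensive property $X\subseteq cl_{M}(X)$, monotonicity, and idempotency $cl_{M}(cl_{M}(X))=cl_{M}(X)$; the exchange axiom is not needed for this lemma. The argument splits naturally into the meet part and the join part.

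First I would treat the meet and show that $X\bigcap Y$ is itself closed. Since $X\bigcap Y\subseteq X$, monotonicity gives $cl_{M}(X\bigcap Y)\subseteq cl_{M}(X)=X$, using that $X$ is closed; symmetrically $cl_{M}(X\bigcap Y)\subseteq Y$, whence $cl_{M}(X\bigcap Y)\subseteq X\bigcap Y$. Combined with the extensive property $X\bigcap Y\subseteq cl_{M}(X\bigcap Y)$, this yields $cl_{M}(X\bigcap Y)=X\bigcap Y$, so $X\bigcap Y\in\mathcal{L}(M)$. It is clearly a lower bound of $X$ and $Y$, and any closed set $Z$ with $Z\subseteq X$ and $Z\subseteq Y$ satisfies $Z\subseteq X\bigcap Y$; hence $X\bigcap Y$ is the greatest lower bound, that is, $X\bigwedge Y=X\bigcap Y$.

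Next I would treat the join. Put $W=cl_{M}(X\bigcup Y)$. By idempotency $cl_{M}(W)=W$, so $W$ is closed; and the extensive property gives $X\bigcup Y\subseteq W$, so $W$ is an upper bound of both $X$ and $Y$ in $\mathcal{L}(M)$. To see that it is the least such, let $Z$ be any closed set with $X\subseteq Z$ and $Y\subseteq Z$; then $X\bigcup Y\subseteq Z$, and monotonicity together with $cl_{M}(Z)=Z$ gives $W=cl_{M}(X\bigcup Y)\subseteq cl_{M}(Z)=Z$. Thus $W$ is the least upper bound, that is, $X\bigvee Y=cl_{M}(X\bigcup Y)$.

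Since every pair of elements of $\mathcal{L}(M)$ possesses both a meet and a join, $(\mathcal{L}(M),\subseteq)$ is a lattice with the operations as claimed. The step most easily overlooked — and the one I would flag as the main subtlety — lies in the join: the set-theoretic union of two closed sets need not be closed, so one must pass to its closure, and the real content is checking that this closure is the smallest closed set lying above both, which is precisely where monotonicity and the closedness of a competing upper bound $Z$ enter. The meet, by contrast, is genuinely the intersection because closed sets are stable under intersection, an observation that in fact extends verbatim to arbitrary families and shows $\mathcal{L}(M)$ to be a complete lattice.
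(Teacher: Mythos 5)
Your proof is correct and complete: the meet part correctly shows closed sets are stable under intersection, and the join part correctly identifies $cl_{M}(X\bigcup Y)$ as the least closed upper bound using only extensivity, monotonicity and idempotency. Note that the paper itself offers no proof of this lemma --- it is quoted directly from Oxley's book --- and your argument is the standard one found there, so there is nothing to compare beyond observing that your reasoning is the expected textbook derivation.
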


\begin{lemma}\cite{Oxley93Matroid}
Let $h_{\mathcal{L}(M)}$ be the height function of lattice $\mathcal{L}(M)$, $r_{M}$ the rank function of $M$. For all $X\in \mathcal{L}(M)$, $h_{\mathcal{L}(M)}(X)=r_{M}(X)$.
\end{lemma}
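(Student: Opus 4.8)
The plan is to establish the two inequalities $h_{\mathcal{L}(M)}(X)\leq r_{M}(X)$ and $h_{\mathcal{L}(M)}(X)\geq r_{M}(X)$ separately, relying throughout on two auxiliary facts about rank and closure: that closure preserves rank, i.e. $r_{M}(cl_{M}(Z))=r_{M}(Z)$ for every $Z\subseteq E$, and that adjoining an element outside the closure raises the rank by exactly one, i.e. $r_{M}(Z\cup\{e\})=r_{M}(Z)+1$ whenever $e\notin cl_{M}(Z)$. Both follow from the rank axioms (R1)--(R3) together with the defining characterization $cl_{M}(X)=\{a:r_{M}(X)=r_{M}(X\cup\{a\})\}$, and I would dispose of these preliminaries first, since every later step leans on them. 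In particular they give $r_{M}(cl_{M}(\emptyset))=r_{M}(\emptyset)=0$, which identifies the rank of the zero of $\mathcal{L}(M)$.

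For the lower bound I would fix a maximal independent subset $\{e_{1},\ldots,e_{r}\}$ of $X$, where $r=r_{M}(X)$, and form the chain of closed sets
$$cl_{M}(\emptyset)\subseteq cl_{M}(\{e_{1}\})\subseteq cl_{M}(\{e_{1},e_{2}\})\subseteq\cdots\subseteq cl_{M}(\{e_{1},\ldots,e_{r}\}).$$
Since each $\{e_{1},\ldots,e_{i}\}$ is independent its rank is $i$, so by closure-preserves-rank each term $cl_{M}(\{e_{1},\ldots,e_{i}\})$ has rank exactly $i$; hence the inclusions are all strict and the chain has length $r$. Its top term is a closed set of rank $r$ contained in $X$ and of the same rank as $X$, so it must equal $X$, because a closed set cannot properly contain another closed set of the same rank. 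This exhibits a chain from $cl_{M}(\emptyset)$ to $X$ of length $r$, giving $h_{\mathcal{L}(M)}(X)\geq r_{M}(X)$.

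For the upper bound I would take an arbitrary chain $cl_{M}(\emptyset)=X_{0}\subsetneq X_{1}\subsetneq\cdots\subsetneq X_{n}=X$ of closed sets and argue that the ranks strictly increase: if two closed sets satisfy $Y\subsetneq Z$ but $r_{M}(Y)=r_{M}(Z)$, then a basis of $Y$ is already a basis of $Z$, forcing $Z\subseteq cl_{M}(Y)=Y$, a contradiction. Therefore $0=r_{M}(X_{0})<r_{M}(X_{1})<\cdots<r_{M}(X_{n})=r_{M}(X)$, which forces $n\leq r_{M}(X)$. As this bounds the length of every chain from the zero to $X$, we obtain $h_{\mathcal{L}(M)}(X)\leq r_{M}(X)$, and combining with the lower bound yields $h_{\mathcal{L}(M)}(X)=r_{M}(X)$.

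The step I expect to be the main obstacle is the rank-strictness argument that underpins both bounds, namely that a proper inclusion of closed sets must strictly increase the rank (equivalently, that a closed set is determined by any of its bases). This is precisely the point where the closure axioms and the rank characterization of closure must be combined carefully; once it is in hand, the explicit basis chain and the counting argument are routine.
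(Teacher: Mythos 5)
Your argument is correct: the two preliminary facts (closure preserves rank; adjoining a point outside the closure raises rank by one) do follow from (R1)--(R3) and the definition of $cl_{M}$, the rank-strictness claim for proper inclusions of closed sets is sound (a basis of the smaller flat is a basis of the larger, so the larger lies in the closure of the smaller), and the chain of closures of initial segments of a basis together with the counting bound on arbitrary chains gives both inequalities. The paper itself states this lemma only as a citation to Oxley and offers no proof, and what you have written is essentially the standard textbook argument, so there is nothing to contrast.
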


\begin{lemma}\cite{Oxley93Matroid}
A lattice $\mathcal{L}$ is geometric if and only if it is the lattice of closure sets of a matroid.
\end{lemma}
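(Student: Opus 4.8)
The plan is to prove the two implications separately, drawing on the rank and closure machinery already assembled. For the forward direction, suppose $\mathcal{L}=\mathcal{L}(M)$ for a matroid $M=(E,\mathcal{I})$. Finiteness is immediate, since $E$ is finite and every closed set is a subset of $E$. To obtain semimodularity I would invoke the height-equals-rank lemma together with the join/meet description $X\bigvee Y=cl_{M}(X\bigcup Y)$ and $X\bigwedge Y=X\bigcap Y$. Because closure preserves rank, $h_{\mathcal{L}(M)}(X\bigvee Y)=r_{M}(X\bigcup Y)$ and $h_{\mathcal{L}(M)}(X\bigwedge Y)=r_{M}(X\bigcap Y)$, so the submodular inequality (R3) yields $h_{\mathcal{L}(M)}(X\bigvee Y)+h_{\mathcal{L}(M)}(X\bigwedge Y)\leq h_{\mathcal{L}(M)}(X)+h_{\mathcal{L}(M)}(Y)$. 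The Jordan--Dedekind condition follows because a cover $X\lessdot Y$ of closed sets raises the rank by exactly one (two comparable closed sets of equal rank coincide, and adjoining one element raises rank by at most one), so every maximal chain from $X$ to $Y$ has length $r_{M}(Y)-r_{M}(X)$. Finally, choosing a basis $\{e_{1},\dots,e_{r}\}$ of a closed set $X$ gives $X=cl_{M}(\{e_{1},\dots,e_{r}\})=\bigvee_{i}cl_{M}(\{e_{i}\})$, exhibiting $X$ as a join of atoms; hence $\mathcal{L}(M)$ is geometric.

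For the converse, let $\mathcal{L}$ be geometric with set of atoms $E$. I would define a candidate closure operator $cl:2^{E}\rightarrow 2^{E}$ by $cl(S)=\{a\in E:a\leq\bigvee S\}$ and verify the four closure axioms, so that the closure-axiom proposition delivers a matroid $M$ on $E$. Axioms (1)--(3) are routine: extensivity holds because each $a\in S$ satisfies $a\leq\bigvee S$; monotonicity follows from $S\subseteq T\Rightarrow\bigvee S\leq\bigvee T$; and idempotence follows once one checks $\bigvee cl(S)=\bigvee S$.

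The crux is the exchange axiom (4), and this is where the geometric hypotheses are indispensable. Writing $s=\bigvee S$ and taking $y\in cl(S\bigcup\{x\})-cl(S)$, I have $y\leq s\bigvee x$ but $y\not\leq s$. Since $y$ is an atom not below $s$, semimodularity forces $s\bigvee y$ to cover $s$; the same reasoning (noting $x\not\leq s$, as otherwise $y\leq s$) shows $s\bigvee x$ covers $s$. Thus $s<s\bigvee y\leq s\bigvee x$ with $h_{\mathcal{L}}(s\bigvee y)=h_{\mathcal{L}}(s)+1=h_{\mathcal{L}}(s\bigvee x)$, and the Jordan--Dedekind condition forces $s\bigvee y=s\bigvee x$. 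Hence $x\leq s\bigvee y$, that is $x\in cl(S\bigcup\{y\})$, which is exactly axiom (4). The main obstacle is precisely this covering fact---that adjoining an atom lying outside an element increases height by one---which must be extracted from the semimodular inequality before (4) can be concluded.

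It remains to identify $\mathcal{L}(M)$ with $\mathcal{L}$. I would use the map $x\mapsto A_{x}=\{a\in E:a\leq x\}$. Since $\mathcal{L}$ is atomistic, $\bigvee A_{x}=x$, so $cl(A_{x})=A_{x}$ and each $A_{x}$ is a closed set of $M$; conversely every closed set $S$ satisfies $S=cl(S)=A_{\bigvee S}$, so the map is onto the closed sets. It is injective because $A_{x}=A_{y}$ gives $x=\bigvee A_{x}=\bigvee A_{y}=y$, and $x\leq y\Leftrightarrow A_{x}\subseteq A_{y}$ makes it an order isomorphism. Therefore $\mathcal{L}\cong\mathcal{L}(M)$, completing the proof.
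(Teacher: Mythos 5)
The paper does not prove this lemma at all: it is quoted from Oxley's \emph{Matroid Theory} as a known classical result (the Birkhoff--Whitney correspondence between geometric lattices and simple matroids' flat lattices), so there is no in-paper argument to compare against. Your proof is essentially the standard textbook one and I find it sound in both directions: the forward direction correctly derives the semimodular inequality from (R3) via the height-equals-rank lemma and the join/meet formulas, and the Jordan--Dedekind condition from the fact that a covering of flats raises rank by exactly one (which rests on the unit-increase property $r_{M}(X\bigcup\{a\})\leq r_{M}(X)+1$, itself immediate from (R1) and (R3) --- worth one explicit line); the converse correctly isolates the crux, namely that for an atom $a\not\leq s$ the join $s\bigvee a$ covers $s$, which you extract from semimodularity using $h(a)=1$ and $s\bigwedge a=0$, and then the exchange axiom and the atom-set isomorphism go through as you describe. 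Two small points deserve a touch more care if this were written out in full: in the forward direction the zero of $\mathcal{L}(M)$ is $cl_{M}(\emptyset)$ (possibly nonempty), so the ``every element is a join of atoms'' claim for the bottom element needs the empty-join convention, and the identity $cl_{M}(\{e_{1},\dots,e_{r}\})=\bigvee_{i}cl_{M}(\{e_{i}\})$ should be justified by the sandwich $\{e_{1},\dots,e_{r}\}\subseteq\bigcup_{i}cl_{M}(\{e_{i}\})\subseteq cl_{M}(X)$. Neither is a gap in the idea, only in the write-up.
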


Lemma 4 give another definition of a geometric lattice.
In face, it is the closed-set lattice of a matroid.

\subsection{Covering-based rough sets}

\begin{definition}(Covering and partition)
Let $E$ be a universe of discourse, $\mathcal{C}$ a family of subsets of $E$ and none of subsets in $\mathcal{C}$ be empty. If $\bigcup \mathcal{C}=E$, $\mathcal{C}$ is called a covering of $E$. The member of $\mathcal{C}$ is called a covering block. If $\mathcal{P}$ is a covering of $E$ and it is a family of pairwisely disjoint subsets of $E$, $P$ is called a partition of $E$.
\end{definition}

It is clear that a partition of $E$ is certainly a covering of $E$, so the concept of a covering is an extension of the concept of a partition.

Let $E$ be a finite set and $R$ be an equivalent relation on $E$. $R$ will generate a partition $E/R=\{Y_{1}, Y_{2},\cdots, Y_{m}\}$ on $E$, where $Y_{1},Y_{2},\cdots Y_{m}$ are the equivalence classes generated by $R$. $\forall X\subseteq U$, the lower and upper approximations of $X$, are, respectively, defined as follows:\\
$R_{\ast}(X)=\bigcup\{Y_{i}\in U/R: Y_{i}\subseteq X\}$\\
$R^{\ast}(X)=\bigcup \{Y_{i}\in U/R: Y_{i}\bigcap X\neq \emptyset\}$.

\begin{definition}(Indiscernible neighborhood and neighborhood)~\cite{ZhuWang07OnThree,Zhu07Topological}
Let $\langle E, \mathcal{C}\rangle$ be a covering approximation space, $x\in U$.\\
$\bigcup\{K:x\in K\in \mathcal{C}\}$ is called the indiscernible neighborhood of $x$ and denoted as $I(x)$.\\
$\bigcap \{K:x\in K\in \mathcal{C}\}$ is called the neighborhood of $x$ and denoted as $N(x)$.
\end{definition}

\begin{definition}~\cite{ZhuWang03Reduction}
Let $\mathcal{C}$ be a covering of a domain $E$ and $K\in \mathcal{C}$. If $K$ is a union of some sets in $\mathcal{C}-\{K\}$, we say $K$ is reducible in $\mathcal{C}$; otherwise $K$ is irreducible. If every element in $\mathcal{C}$ is irreducible, we say $\mathcal{C}$ is irreducible; otherwise $\mathcal{C}$ is reducible.
\end{definition}

\begin{definition}~\cite{ZhuWang03Reduction}
For a covering $\mathcal{C}$ of a universe $E$, the new irreducible covering through the above reduction is called the reduct of $\mathcal{C}$ and denoted by $reduct(\mathcal{C})$.
\end{definition}

\begin{definition}~\cite{ZhuWang07OnThree}
Let $\mathcal{C}$ be a covering of $E$ and $K$ an element of $\mathcal{C}$. If there exists another element $K^{'}$ of $\mathcal{C}$ such that $K\subset K^{'}$, we say that $K$ is an immured element of covering $\mathcal{C}$.
\end{definition}

\begin{definition}~\cite{ZhuWang07OnThree}
Let $\mathcal{C}$ be a covering of $E$. When we remove all immured elements from $\mathcal{C}$, the set of all remaining elements is still a covering of $E$, and this new covering has no immured element. We called this new covering an exclusion of $\mathcal{C}$, and it is denoted by $exclusion(\mathcal{C})$.
\end{definition}

Zakowski first extended Pawlak's rough set theory from partition to covering~\cite{Zakowski83Approximations}. Pomykala studied the second type of covering rough set model~\cite{Pomykala87Approximation}. While the sixth type of covering-based upper approximation was first defined in~\cite{XuWang05On}. Qin et al. first defined the seventh type of covering-based upper approximation in \cite{QinGaoPei07OnCovering}.

\begin{definition}
Let $\mathcal{C}$ be a covering of $E$.
The covering upper approximation operators $SH,XH,VH:P(E)\rightarrow P(E)$ are defined as follows: $\forall X\in P(E)$,\\
$SH(X)=\bigcup \{K\in \mathcal{C}:K\bigcap X\neq \emptyset\}=\bigcup\{I(x):x\in X\}$.\\
$XH(X)=\{x:N(x)\bigcap X \neq \emptyset\}$.\\
$VH(X)=\bigcup\{N(x):N(x)\bigcap X \neq \emptyset\}$\\
$SH_{\mathcal{C}},XH_{\mathcal{C}},VH_{\mathcal{C}}$ are called the second, the sixth, the seventh covering upper approximation operators with respect to the covering $\mathcal{C}$, respectively. When there is no confusion, we omit $\mathcal{C}$ at the lowercase.
\end{definition}


\section{A geometric lattice structure of covering-based rough sets through transversal matroid}
\label{S:Thepropertiesoftheclosedsetlatticeofmatroidinducedbycovering}

As we know, if $M$ is a matroid and $\mathcal{L}(M)$ denotes the set of all closed set of $M$ ordered by inclusion, then $\mathcal{L}(M)$ is a geometric lattice.
In this section, we study the properties such as atoms, modular elements and modular pairs of this type of geometric lattice through transversal matroid induced by a covering.
We also study the structure of matroid induced by this type of geometric lattice.
It is interesting to find that there is a one-to-one correspondence between this type of geometric lattices and transversal matroids in the context of covering-based rough sets.

Let $E$ be a nonempty finite set and $\mathcal{C}$ a covering of $E$.
As shown in Definition 3, $M(\mathcal{C})=(E, \mathcal{I}(\mathcal{C}))$ is the transversal matroid induced by covering $\mathcal{C}$. $\mathcal{L}(M(\mathcal{C}))$ is the set of all closed sets of $M(\mathcal{C})$.
Especially, $\mathcal{L}(M(\mathcal{P}))$ is the set of all closed sets of the transversal matroid induced by partition $\mathcal{P}$.
Based on Lemma 4, we know $\mathcal{L}(M(\mathcal{C}))$ and $\mathcal{L}(M(\mathcal{P}))$ are geometric lattice.

The theorem below connects a covering with the closure of $\emptyset$.
In fact, $\emptyset\in \mathcal{L}(M(\mathcal{F}))$ if and only if $\mathcal{F}$ is a covering.

\begin{theorem}
Let $M(\mathcal{F})$ be a transversal matriod induced by $\mathcal{F}=\{F_{1},F_{2},\cdots,F_{m}\}$, $\forall 1\leq i\leq m$, $F_{i}\neq \emptyset$.
$cl_{\mathcal{F}}(\emptyset)=\emptyset$ if and only if $\mathcal{F}$ is a covering.
\end{theorem}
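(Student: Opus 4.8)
The plan is to compute $cl_{\mathcal{F}}(\emptyset)$ explicitly and recognize it as the set of loops of the transversal matroid $M(\mathcal{F})$. Recall that by the rank-based definition of closure, $cl_{\mathcal{F}}(\emptyset)=\{a\in E: r_{M}(\emptyset)=r_{M}(\{a\})\}$. Since $\emptyset\in \mathcal{I}(\mathcal{F})$ and it is the only subset of itself, we have $r_{M}(\emptyset)=0$. Thus the first goal is to decide, for each $a\in E$, whether $r_{M}(\{a\})=0$, which happens exactly when $a\in cl_{\mathcal{F}}(\emptyset)$.

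The key step is to characterize $r_{M}(\{a\})$. Because the independent sets of $M(\mathcal{F})$ are the partial transversals of $\mathcal{F}$, we have $r_{M}(\{a\})=1$ if $\{a\}$ is a partial transversal and $r_{M}(\{a\})=0$ otherwise. Now $\{a\}$ is a partial transversal precisely when there exists some index $i$ (i.e. a singleton $K=\{i\}\subseteq J$) with $a\in F_{i}$; that is, $\{a\}$ is a partial transversal if and only if $a\in \bigcup \mathcal{F}$. Equivalently, $\{a\}$ is a loop of $M(\mathcal{F})$ if and only if $a\notin \bigcup \mathcal{F}$. Combining this with $r_{M}(\emptyset)=0$ yields the identity
\[
cl_{\mathcal{F}}(\emptyset)=\{a\in E: r_{M}(\{a\})=0\}=E-\bigcup \mathcal{F}.
\]

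With this identity in hand the equivalence is immediate. Since each $F_{i}\subseteq E$ we always have $\bigcup \mathcal{F}\subseteq E$, so $cl_{\mathcal{F}}(\emptyset)=E-\bigcup \mathcal{F}=\emptyset$ holds if and only if $\bigcup \mathcal{F}=E$. By Definition 4, together with the standing hypothesis that every $F_{i}$ is nonempty, the condition $\bigcup \mathcal{F}=E$ is exactly the statement that $\mathcal{F}$ is a covering of $E$. This completes both directions of the equivalence.

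The only delicate point, and the step I expect to require the most care, is justifying the singleton characterization $r_{M}(\{a\})\in\{0,1\}$ with the value $1$ attained iff $a\in\bigcup\mathcal{F}$; everything else is a direct substitution into the definitions. This hinges solely on unwinding the definition of a (partial) transversal for a single-element set, so no appeal to the exchange axiom or to the rank proposition is needed beyond $r_{M}(\emptyset)=0$. I would therefore present the argument as a short computation of $cl_{\mathcal{F}}(\emptyset)$, emphasizing that the closure of $\emptyset$ collects exactly the loops, which are the elements of $E$ missed by the union of the blocks.
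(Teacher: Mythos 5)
Your proof is correct and rests on the same key observation as the paper's: a singleton $\{a\}$ is a partial transversal (equivalently, independent) if and only if $a$ lies in some block, so $cl_{\mathcal{F}}(\emptyset)=\emptyset$ amounts to $\bigcup\mathcal{F}=E$. Packaging this as the explicit identity $cl_{\mathcal{F}}(\emptyset)=E-\bigcup\mathcal{F}$ is a tidy way to handle both directions at once, but it is essentially the paper's argument.
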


\begin{proof}
"$\Leftarrow$": According to the definition of transversal matroid, any partial transversal is an independent set of transversal matroid.
Since $\mathcal{F}$ is a covering, any single-point set is an independent set.
Based on the definition of closure operator of a matriod, we have $cl_{M(\mathcal{F})}(\emptyset)=\emptyset$.

"$\Rightarrow$": Since $cl_{M(\mathcal{F})}(\emptyset)=\emptyset$, any single-point set is an independent set, that is, for all $x\in E$, there exists $1 \leq i_{x}\leq m$ such that $x\in F_{i_{x}}\subseteq E$.
Hence, $E=\bigcup_{x\in E}\{x\}\subseteq \bigcup_{x\in E}F_{i_{x}}\subseteq \bigcup_{i=1}^{m}F_{i}\subseteq E$.
Thus $\bigcup_{i=1}^{m}F_{i}=E$.
For all$ 1\leq i \leq m$, $F_{i}\neq \emptyset$ and $\bigcup_{i=1}^{m}F_{i}=E$, hence $\mathcal{F}$ is a covering.
\end{proof}

\begin{lemma}
Let $\mathcal{C}$ be a covering of $E$. For all $x\in E$, $cl_{M(\mathcal{C})}(\{x\})$ is an atom of $\mathcal{L}(M(\mathcal{C}))$.
\end{lemma}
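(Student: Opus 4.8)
The plan is to show that $cl_{M(\mathcal{C})}(\{x\})$ is an element of the lattice $\mathcal{L}(M(\mathcal{C}))$ of height one, exploiting the fact recalled in the preliminaries that the atoms of a poset are precisely its elements of height one. First I would verify that $cl_{M(\mathcal{C})}(\{x\})$ is genuinely a closed set, hence a legitimate element of $\mathcal{L}(M(\mathcal{C}))$: by condition (3) of the closure axiom (Proposition 2), $cl_{M(\mathcal{C})}(cl_{M(\mathcal{C})}(\{x\})) = cl_{M(\mathcal{C})}(\{x\})$, so it equals its own closure.

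Next, by Lemma 3 the height of any closed set coincides with its matroid rank, so it suffices to prove $r_{M(\mathcal{C})}(cl_{M(\mathcal{C})}(\{x\})) = 1$. I would split this into two observations. On one hand, the matroid closure operator preserves rank, i.e. $r_{M(\mathcal{C})}(cl_{M(\mathcal{C})}(\{x\})) = r_{M(\mathcal{C})}(\{x\})$; this follows from the defining formula $cl_{M}(X) = \{a : r_{M}(X) = r_{M}(X \cup \{a\})\}$ together with submodularity (R3) of the rank, since adjoining to $\{x\}$ only elements that individually leave the rank unchanged cannot raise it. On the other hand, I must establish $r_{M(\mathcal{C})}(\{x\}) = 1$.

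For the rank of the singleton I would invoke Theorem 1: because $\mathcal{C}$ is a covering, $cl_{M(\mathcal{C})}(\emptyset) = \emptyset$, so $M(\mathcal{C})$ has no loops. Concretely, since $\mathcal{C}$ covers $E$, the point $x$ lies in some block $K \in \mathcal{C}$, whence $\{x\}$ is a partial transversal and therefore an independent set of the transversal matroid $M(\mathcal{C})$; by (R1) this gives $r_{M(\mathcal{C})}(\{x\}) = |\{x\}| = 1$. Combining the two observations yields $h_{\mathcal{L}(M(\mathcal{C}))}(cl_{M(\mathcal{C})}(\{x\})) = r_{M(\mathcal{C})}(\{x\}) = 1$, so $cl_{M(\mathcal{C})}(\{x\})$ covers the zero $cl_{M(\mathcal{C})}(\emptyset) = \emptyset$ of $\mathcal{L}(M(\mathcal{C}))$ and is an atom.

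The only delicate point is the rank-preservation identity $r_{M(\mathcal{C})}(cl_{M(\mathcal{C})}(\{x\})) = r_{M(\mathcal{C})}(\{x\})$, which is the single step not stated verbatim in the preliminaries; everything else is a direct application of Theorem 1, Lemma 3, and the characterization of atoms as the height-one elements. I expect this identity to be the main (though routine) obstacle, and it can be dispatched either by citing it as a standard matroid fact or by a short induction that adjoins the elements of $cl_{M(\mathcal{C})}(\{x\}) \setminus \{x\}$ one at a time, applying (R3) at each step.
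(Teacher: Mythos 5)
Your proposal is correct and follows essentially the same route as the paper's proof: both argue that since $\mathcal{C}$ is a covering every singleton is a partial transversal and hence independent, so $r_{M(\mathcal{C})}(cl_{M(\mathcal{C})}(\{x\}))=r_{M(\mathcal{C})}(\{x\})=1$, and then identify atoms with the height-one (equivalently rank-one) closed sets via Lemma 3. The only difference is that you explicitly flag and justify the rank-preservation identity $r_{M}(cl_{M}(X))=r_{M}(X)$, which the paper uses silently; your submodularity argument for it is sound.
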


\begin{proof}
Since $\mathcal{C}$ is a covering and the definition of transversal matroid, we konw any single-point set is an independent set.
Thus $\forall x\in E$, $r_{M(\mathcal{C})}(cl_{M(\mathcal{C})}(\{x\}))=r_{M(\mathcal{C})}(\{x\})=1$. Hence, $\{cl_{M(\mathcal{C})}(x):x\in E\}$ is the set of atoms of lattice $\mathcal{L}(M(\mathcal{C}))$.
\end{proof}

Lemma 5 does not establish the concrete form of $cl_{M(\mathcal{C})}(\{x\})$.
In order to solve that problem, we define two sets as follows.

\begin{definition}
Let $\mathcal{C}=\{K_{1}, K_{2},\cdots, K_{m}\}$ be a covering of a finite set $E=\{x_{1}, x_{2},\\ \cdots, x_{n}\}$.
We define\\
(i) $A=\{K_{i}-\bigcup_{j=1,j\neq i }^{m}K_{j}:K_{i}-\bigcup_{j=1,j\neq i }^{m}K_{j} \neq \emptyset, i\in \{1, 2, \cdots, m\}\}\\
~~~~~~~~~=\{A_{1},A_{2},\cdots, A_{s}\}$.\\
(ii) $B=E-\bigcup_{i=1}^{s} A_{i}$.
\end{definition}

\begin{remark}
For all $i\in \{1, 2, \cdots, s\}$, $\forall x\in A_{i}$, there exists only one block such that $x$ belongs to it, and there exist at least two blocks such that $y$ belongs to them for all $y\in B$.
\end{remark}

The following two propositions establish the characteristics of $A$ and $B$.

\begin{proposition}
Let $\mathcal{C}$ be a covering of $E$. $\{A_{1},A_{2},\cdots,A_{s}\}\bigcup \{\{x\}:{x\in B}\}$ forms a partition of $E$.
\end{proposition}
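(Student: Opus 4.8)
The plan is to verify the two defining conditions of a partition: that the family covers $E$, and that its members are pairwise disjoint and nonempty.

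First I would dispose of the covering condition. Since $B = E - \bigcup_{i=1}^s A_i$ by definition, the equality $\left(\bigcup_{i=1}^s A_i\right) \cup \left(\bigcup_{x \in B} \{x\}\right) = \left(\bigcup_{i=1}^s A_i\right) \cup B = E$ follows immediately. Nonemptiness is equally direct: each $A_i$ is nonempty because the construction of $A$ explicitly retains only the nonempty differences $K_i - \bigcup_{j \neq i} K_j$, while each singleton $\{x\}$ is trivially nonempty.

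The substance lies in pairwise disjointness, which splits into three cases. For two distinct singletons $\{x\}$ and $\{y\}$ with $x, y \in B$, disjointness is automatic. For a set $A_i$ and a singleton $\{x\}$ with $x \in B$, I would use $x \in B = E - \bigcup_i A_i$, so $x \notin A_i$ and hence $A_i \cap \{x\} = \emptyset$. The only substantive case is $A_i \cap A_j = \emptyset$ for $i \neq j$: here I would revert to the original block indices, writing $A_i = K_p - \bigcup_{l \neq p} K_l$ and $A_j = K_q - \bigcup_{l \neq q} K_l$ with $p \neq q$. Any element of $A_i$ lies outside $K_l$ for every $l \neq p$, in particular outside $K_q$; yet every element of $A_j$ lies inside $K_q$, so the intersection must be empty.

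The main difficulty, such as it is, is bookkeeping rather than mathematical depth: the family $A$ is reindexed from the block-indexed differences $K_i - \bigcup_{j \neq i} K_j$ to the list $A_1, \ldots, A_s$ after discarding empty members, so I must keep explicit the correspondence between each $A_i$ and the unique block whose private points it collects. Once that correspondence is fixed, the disjointness of distinct $A_i$ follows from the ``belongs to exactly one block'' characterization recorded in the remark, and the whole argument is short.
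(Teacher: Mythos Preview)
Your proof is correct and follows essentially the same approach as the paper: verify that the union is $E$ directly from the definition of $B$, then check pairwise disjointness via the same three-case split (two $A_i$'s, two singletons, one of each). The paper is terser in the $A_i \cap A_j$ case, merely invoking ``the definition of $A$,'' whereas you spell out the block-index argument and also record nonemptiness explicitly; these are expository improvements, not a different method.
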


\begin{proof}
Let $P=A\bigcup \{\{x\}:x\in B\}=\{A_{1},A_{2},\cdots,A_{s}\}\bigcup \{\{x\}:{x\in B}\}$.
According to Definition 12, we know $\bigcup_{i=1}^{s}A_{i}\bigcup \{\{x\}:x\in B\}=E$.
Now we need to prove $\forall P_{1}, P_{2}\in P$, $P_{1}\bigcap P_{2}=\emptyset$.
According to the definition of $A$, if $P_{1},P_{2}\in A$, then $P_{1}\bigcap P_{2}=\emptyset$.
If $P_{1},P_{2}\in \{\{x\}:x\in B\}$, then $P_{1}\bigcap P_{2}=\emptyset$ because $P_{i}$ and $P_{j}$ are single-points.
If $P_{1}\in A, P_{2}\in  \{\{x\}:x\in B\}$, then $P_{1}\bigcap P_{2}=\emptyset$ because $B\bigcap \bigcup_{k=1}^{s}A_{k}=\emptyset$ and $A_{i}\subseteq \bigcup_{k=1}^{s}A_{k}$ and $A_{j}\subseteq B$.
\end{proof}

\begin{proposition}
$\mathcal{C}$ is a partition if and only if $B=\emptyset$.
\end{proposition}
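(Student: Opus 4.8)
The plan is to prove the biconditional $\mathcal{C}$ is a partition if and only if $B = \emptyset$ by establishing each direction separately, leaning heavily on the definition of the sets $A_i$ in Definition 12 and on Remark 1, which already records the essential dichotomy: every point of some $A_i$ lies in exactly one block of $\mathcal{C}$, whereas every point of $B$ lies in at least two blocks.

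For the ``only if'' direction, I would assume $\mathcal{C}$ is a partition and show $B = \emptyset$ by contradiction, or equivalently show that $B$ can contain no point. Suppose $x \in B$. By Remark 1 there are (at least) two distinct blocks $K_i, K_j \in \mathcal{C}$ with $x \in K_i$ and $x \in K_j$. But since $\mathcal{C}$ is a partition its blocks are pairwise disjoint, so $K_i \cap K_j = \emptyset$, contradicting $x \in K_i \cap K_j$. Hence $B$ has no element, i.e. $B = \emptyset$.

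For the ``if'' direction, I would assume $B = \emptyset$ and prove that the blocks of $\mathcal{C}$ are pairwise disjoint, which together with $\bigcup \mathcal{C} = E$ and the nonemptiness of blocks makes $\mathcal{C}$ a partition. The cleanest route is to observe that $B = \emptyset$ forces $\bigcup_{i=1}^{s} A_i = E$ (since $B = E - \bigcup_{i=1}^s A_i$), so every point of $E$ lies in some $A_i$ and therefore, by the defining property of the sets $A_i$, belongs to exactly one block of $\mathcal{C}$. If two distinct blocks $K_p$ and $K_q$ shared a point $x$, then $x$ would belong to at least two blocks, contradicting that $x$ lies in only one block. Thus distinct blocks are disjoint, and $\mathcal{C}$ is a partition.

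The main obstacle, such as it is, is purely bookkeeping: making precise the claim ``$x \in A_i$ implies $x$ lies in exactly one block.'' This follows directly from the definition $A_i = K_i - \bigcup_{j \neq i} K_j$, since membership in this set means $x \in K_i$ but $x \notin K_j$ for every $j \neq i$. Once this observation is stated cleanly, both directions reduce to one-line arguments contrasting ``exactly one block'' against ``at least two blocks.'' I would therefore keep the proof short, invoking Remark 1 to supply the ``at least two blocks'' characterization of $B$ and citing Definition 12 for the ``exactly one block'' characterization of the $A_i$, rather than re-deriving these from scratch.
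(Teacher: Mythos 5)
Your proposal is correct and follows essentially the same route as the paper: the paper declares the direction ``partition $\implies B=\emptyset$'' obvious and proves the converse by noting that two intersecting blocks would yield a point lying in at least two blocks, hence a point of $B$. Your write-up merely spells out the ``obvious'' direction and phrases the converse via ``every point lies in exactly one block'' instead of by explicit contradiction, which is the same argument.
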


\begin{proof}
According to the definition of $A$ and $B$, the necessity is obvious.
Now we prove the sufficiency.
If $\mathcal{C}$ is not a partition, then there exist $K_{i},K_{j}\in \mathcal{C}$ such that $K_{i}\bigcap K_{j}\neq \emptyset$.
Thus there exists $x\in E$ such that $x\in K_{i}\bigcap K_{j}$, that is, there exist at least $K_{i},K_{j}\in\mathcal{C}$ such that $x$ belongs to them, hence $x\in B$.
That contradicts the assumption that $B=\emptyset$.
\end{proof}

The following theorem shows the concrete form of atoms of lattice $\mathcal{L}(M(\mathcal{C}))$.

\begin{theorem}
Let $\mathcal{C}$ be a covering of $E$.
$\{A_{1},A_{2},\cdots,A_{s}\}\bigcup \{\{x\}:{x\in B}\}$ is the set of atoms of lattice $\mathcal{L}(M(\mathcal{C}))$.
\end{theorem}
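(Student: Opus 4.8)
The plan is to reduce the theorem to an explicit computation of the closure $cl_{M(\mathcal{C})}(\{x\})$ of each singleton, and then to observe that these closures, collected over all $x \in E$, yield precisely the claimed family. By Lemma 5 every $cl_{M(\mathcal{C})}(\{x\})$ is an atom and the set of all atoms of $\mathcal{L}(M(\mathcal{C}))$ is exactly $\{cl_{M(\mathcal{C})}(\{x\}):x\in E\}$; hence it suffices to prove that this set equals $\{A_{1},A_{2},\cdots,A_{s}\}\bigcup \{\{x\}:x\in B\}$.

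First I would characterize membership in $cl_{M(\mathcal{C})}(\{x\})$ through the rank function. Since $\mathcal{C}$ is a covering, every singleton is independent, so $r_{M(\mathcal{C})}(\{x\})=1$. By the definition of the closure operator, $a\in cl_{M(\mathcal{C})}(\{x\})$ if and only if $r_{M(\mathcal{C})}(\{x\}\bigcup \{a\})=1$, that is, if and only if $\{x,a\}$ is a dependent set, equivalently $\{x,a\}$ is not a partial transversal of $\mathcal{C}$. For a two-element set this is elementary: for $a\neq x$, the set $\{x,a\}$ is a partial transversal precisely when there exist two distinct blocks $K_{i},K_{j}$ with $x\in K_{i}$ and $a\in K_{j}$; consequently $\{x,a\}$ is dependent exactly when every block meeting $\{x,a\}$ is one and the same single block, i.e. when $x$ and $a$ both lie in exactly one block and it is their common one.

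Next I would split into two cases guided by the Remark. If $x\in A_{i}$, then $x$ belongs to exactly one block, say $K_{p}$; the characterization above forces any $a\in cl_{M(\mathcal{C})}(\{x\})$ to lie only in $K_{p}$ as well, so $cl_{M(\mathcal{C})}(\{x\})=K_{p}-\bigcup_{j\neq p}K_{j}=A_{i}$. If instead $x\in B$, then $x$ lies in at least two blocks, so for every $a\neq x$ the pair $\{x,a\}$ already admits distinct representing blocks and is therefore independent; hence $cl_{M(\mathcal{C})}(\{x\})=\{x\}$. Collecting these values over all $x\in E$ produces exactly $\{A_{1},A_{2},\cdots,A_{s}\}\bigcup \{\{x\}:x\in B\}$, since each $A_{i}$ is realized as $cl_{M(\mathcal{C})}(\{x\})$ for any $x\in A_{i}$ and each singleton $\{x\}$ with $x\in B$ is realized by that very $x$.

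The step I expect to demand the most care is the characterization of two-element partial transversals together with the verification that, in the case $x\in A_{i}$, the closure captures all of $A_{i}$ and nothing more; in particular that no point of $B$ and no point of a different $A_{j}$ can enter $cl_{M(\mathcal{C})}(\{x\})$. This follows from the Remark, but I would state it carefully to guarantee that the two cases are exhaustive and that the resulting family coincides with the partition given in Proposition 1.
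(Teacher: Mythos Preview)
Your proposal is correct and follows essentially the same approach as the paper: both arguments hinge on Lemma~5 (the atoms are precisely the sets $cl_{M(\mathcal{C})}(\{x\})$) together with the elementary characterization of when a two-element set is a partial transversal of $\mathcal{C}$. The paper organizes the argument slightly differently---it first verifies directly that each $A_i$ and each $\{x\}$ with $x\in B$ is a rank-one closed set, and only then invokes Lemma~5 to show exhaustiveness---whereas you compute $cl_{M(\mathcal{C})}(\{x\})$ outright in both cases; but the underlying computations are identical.
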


\begin{proof}
According to the definition of $A_{i}$, we may as well suppose $A_{i}=K_{h}-\bigcup_{j=1,j\neq h}^{m}\\K_{j}$.
Based on $\mathcal{C}$ is a covering and the definition of transversal matroid, we know any single-point set is an independent set, thus $\forall x\in A_{i}$, $\{x\}$ is an independent set.
$\forall y\in A_{i}$ and $y\neq x$, we know $x,y\in K_{h}$ and $x,y\notin K_{j}$ for all $1 \leq j\leq m,~j\neq h$, thus $x$ and $y$ cannot be chosen from different blocks in the covering $\mathcal{C}$.
That shows that $\{x,y\}$ is not an independent set according to the definition of transversal matroid.
Hence, $\{x\}$ is a maximal independent set included in $A_{i}$, that is, $r_{M(\mathcal{C})}(A_{i})=1$.
Next, we need to prove $A_{i}$ is a closed set.
Since $A_{i}\subseteq cl_{M(\mathcal{C})}(A_{i})$, we need to prove $cl_{M(\mathcal{C})}(A_{i})\subseteq A_{i}$, that is, $x\notin A_{i}$ implies $x\notin cl_{M(\mathcal{C})}(A_{i})$.
If $y\notin A_{i}$, based on the fact that $\mathcal{C}$ is a covering and the definition of $A_{i}$, then there exists $j\neq h$ such that $y\in K_{j}$.
Thus $\{x,y\}$ is an independent set.
That implies $y\notin cl_{M(\mathcal{C})}(A_{i})$, thus $cl_{M(\mathcal{C})}(A_{i})=A_{i}$.
Hence, $A_{i}\in \mathcal{L}(M(\mathcal{C}))$.
Combining Lemma 3 with $r_{M(\mathcal{C})}(A_{i})=1$, we know for all $1 \leq i \leq m$, $A_{i}$ is an atom of lattice $\mathcal{L}(M(\mathcal{C}))$.

According to the definition of transversal matroid and the fact that $\mathcal{C}$ is a covering, any single-point set is an independent set.
Thus for all $x\in B$, $r_{M(\mathcal{C})}(\{x\})=1$.
$\forall y\in E$ and $y\neq x$, if $y\in B$, then there exist at least two blocks containing $y$ according to the definition of $B$.
We may as well suppose $y\in K_{k},K_{t}$ and $x\in K_{l},K_{p}$, where $\{K_{k}, K_{t}\}$ may be the same as $\{K_{l}, K_{p}\}$.
Based on this, $\{x,y\}$ is an independent set.
This implies $y\notin cl_{M(\mathcal{C})}(\{x\})$.
If $y\notin B$, then we may as well suppose $y\in A_{i}$, thus $y\in K_{h}$ for the definition of $A_{i}$, where $K_{h}$ may be the same with $K_{l}$ or $K_{p}$.
Based on this, $x$ and $y$ can be chosen from different blocks in covering $\mathcal{C}$, thus $\{x,y\}$ is an independent set. That implies  $y\notin cl_{M(\mathcal{C})}(\{x\})$.
From above discussion, we have $cl_{M(\mathcal{C})}(\{x\})=\{x\}$.
Hence, $\{x\}\in \mathcal{L}(M(\mathcal{C}))$ for all $x\in B$.
Combining Lemma 3 with $r_{M(\mathcal{C})}(\{x\})=1$, we know $\{x\}$ is an atom of lattice $\mathcal{L}(M(\mathcal{C}))$ for all $x\in B$.

Next, we will prove the set of atoms of lattice $\mathcal{L}(M(\mathcal{C}))$ can not be anything but $\{A_{1},A_{2},\cdots,A_{s}\}\bigcup \{\{x\}:{x\in B}\}$.
According to Lemma 5, we know $\{cl_{M(\mathcal{C})}(\{x\}):x\in E\}$ is the set of atoms of lattice $\mathcal{L}(M(\mathcal{C}))$.
Similar to the proof of the second part, we know that if $x\in B$ then $cl_{M(\mathcal{C})}(\{x\})=\{x\}$.
If $x\notin B$, then $x$ belongs to one of elements in $A$.
We may as well suppose $x\in A_{i}$.
Combining $A_{i}$ is an atom with $\emptyset \subseteq cl_{M(\mathcal{C})}(\{x\})\subseteq cl_{M(\mathcal{C})}(A_{i})=A_{i}$, we have $cl_{M(\mathcal{C})}(\{x\})=A_{i}$.
Hence, $\{A_{1},A_{2},\cdots,A_{s}\}\bigcup \{\{x\}:{x\in B}\}$ is the set of atoms of lattice $\mathcal{L}(M(\mathcal{C}))$.
\end{proof}

The proposition below connects simple matroid and the cardinal number of $A_{i}$.
In fact, a matroid is simple if and only if $\forall 1\leq i\leq s$, $|A_{i}|=1$.

\begin{lemma}
$\forall 1\leq i\leq s$, if $|A_{i}|\geq 2$, then $\forall x,y\in A_{i}$, $x,y$ are parallel .
\end{lemma}

\begin{proof}
According to the definition of $A_{i}$, we may as well suppose $A_{i}=K_{h}-\bigcup_{j=1,j\neq h}^{m}\\K_{j}$, where $1\leq h\leq m$.
For all $x,y\in A_{i}$, then $x,y\in K_{h}$ and $\forall 1\leq j\leq m, j\neq h$, $x\notin K_{j}$ and $y\notin K_{j}$.
Thus $\{x,y\}$ is not an independent set.
Based on the definition of transversal matroid and the fact that $\mathcal{C}$ is a covering, any single-point set is an independent set.
Thus $\{x\}$ or $\{y\}$ is an independent set.
Hence, $x,y$ are parallel.
\end{proof}

\begin{proposition}
Let $\mathcal{C}$ be a covering and $M(\mathcal{C})$ the transversal matroid induced by $\mathcal{C}$.
$M(\mathcal{C})$ is a simple matroid if and only if $|A_{i}|=1$ for all $1 \leq i\leq s$.
\end{proposition}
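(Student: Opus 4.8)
The plan is to reduce the statement to the definition of a simple matroid, namely that $M(\mathcal{C})$ has neither loops nor parallel elements, and then dispose of these two obstructions separately using results already in hand. Since $\mathcal{C}$ is a covering, Theorem~1 gives $cl_{M(\mathcal{C})}(\emptyset)=\emptyset$, so every single-point set is independent and $M(\mathcal{C})$ has no loops. This holds unconditionally, so the entire equivalence hinges on controlling parallel elements.

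For the necessity, I would argue by contraposition. Suppose $|A_{i}|\geq 2$ for some $1\leq i\leq s$. Then Lemma~6 tells us directly that any two distinct $x,y\in A_{i}$ are parallel, so $M(\mathcal{C})$ has a parallel pair and is therefore not simple. Hence simplicity forces $|A_{i}|=1$ for every $i$.

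For the sufficiency, assume $|A_{i}|=1$ for all $i$ and show $M(\mathcal{C})$ has no parallel pair. The key step is to translate ``$x,y$ parallel'' back into the combinatorics of the covering: because there are no loops, $x$ and $y$ are parallel exactly when $\{x,y\}$ is dependent, i.e. $\{x,y\}$ is not a partial transversal of $\mathcal{C}$. By the definition of the transversal matroid this means $x$ and $y$ cannot be chosen from two distinct blocks; since $\mathcal{C}$ covers $E$, each of $x,y$ does lie in some block, so the only possibility is that a single block $K_{h}$ satisfies $x,y\in K_{h}$ while $x,y\notin K_{j}$ for all $j\neq h$. Thus $x,y\in K_{h}-\bigcup_{j\neq h}K_{j}$, which is precisely one of the nonempty sets $A_{i}$, forcing $|A_{i}|\geq 2$ and contradicting the hypothesis. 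Therefore no parallel pair exists and $M(\mathcal{C})$ is simple.

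I expect the sufficiency direction to be the main obstacle, since it requires the converse of Lemma~6: not merely that points sharing an $A_{i}$ are parallel, but that every parallel pair must arise from such a block. Proving it cleanly rests on the characterization of two-element independent sets as pairs matchable to distinct covering blocks, which is where care is needed. Alternatively, one can sidestep the ad hoc argument by invoking Theorem~2: the atoms of $\mathcal{L}(M(\mathcal{C}))$ are exactly $\{A_{1},\ldots,A_{s}\}\bigcup\{\{x\}:x\in B\}$, and a loopless matroid is simple if and only if each of its rank-one flats (its atoms) is a single-point set; since the sets $\{x\}$ with $x\in B$ are already singletons, this makes simplicity equivalent to $|A_{i}|=1$ for all $i$.
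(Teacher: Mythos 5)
Your proposal is correct and follows essentially the same route as the paper: necessity by contraposition via Lemma~6, and sufficiency by showing that a parallel pair must lie in a unique common block and hence in some $A_{i}$, together with the observation that a covering forces the absence of loops. You spell out the combinatorial step (why a dependent pair $\{x,y\}$ forces both points into exactly one shared block) more explicitly than the paper does, which is a genuine improvement in rigor but not a different argument.
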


\begin{proof}
"$\Rightarrow$": Since $M(\mathcal{C})$ is a simple matroid, it dose not contain parallel elements.
If there exists $1\leq i\leq s$ such that $|A_{i}|\neq 1$, then $|A_{i}|\geq 2$ because $A_{i} \neq \emptyset$.
According to Lemma 6, $\forall x,y\in A_{i}$, $x,y$ are parallel which contradicts the assumption that $M(\mathcal{C})$ is a simple matroid.
Hence, $\forall 1 \leq i\leq s$, $|A_{i}|=1$.

"$\Leftarrow$": According to the definition of parallel element, if $|A_{i}|=1$, then $M(\mathcal{C})$ does not contain parallel elements.
If $M(\mathcal{C})$ has parallel elements, we may as well suppose $x,y$ are parallel, then there exists only one block which contains $x,y$.
Hence, there exists $1\leq i\leq s$ such that $x,y\in A_{i}$, that is, $|A_{i}|\geq 2$.
This contradicts the fact that $\forall 1 \leq i\leq s$, $|A_{i}|=1$.
Based on the definition of transversal matroid  and the fact that $\mathcal{C}$ is a covering, any single-point set is an independent set, thus $M(\mathcal{C})$ dose not contain loops.
Hence, $M(\mathcal{C})$ dose not contain parallel elements and loops which implies that $M(\mathcal{C})$ is a simple matroid.
\end{proof}

The following two corollaries show that we also have the above results when a covering degenerates into a partition.

\begin{corollary}
Let $\mathcal{P}=\{P_{1},P_{2},\cdots,P_{m}\}$ be a partition.
$\mathcal{P}$ is the set of atoms of lattice $\mathcal{L}(M_{\mathcal{P}})$.
\end{corollary}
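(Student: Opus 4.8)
The plan is to derive this corollary as a direct specialization of Theorem 3, using Proposition 4 to discard the singleton contributions coming from $B$. First I would invoke Proposition 4: since $\mathcal{P}$ is a partition, $B=\emptyset$, so the family $\{\{x\}:x\in B\}$ appearing in the atom set of Theorem 3 is empty. Because a partition is in particular a covering, Theorem 3 applies with $\mathcal{C}=\mathcal{P}$, and it then already tells us that the set of atoms of $\mathcal{L}(M(\mathcal{P}))$ is exactly $\{A_{1},A_{2},\cdots,A_{s}\}$. It only remains to identify this family with $\mathcal{P}$ itself.

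Next I would compute the sets $A_{i}$ explicitly for a partition. By the pairwise disjointness of the blocks, $P_{i}\bigcap P_{j}=\emptyset$ for all $j\neq i$, and hence $P_{i}-\bigcup_{j=1,j\neq i}^{m}P_{j}=P_{i}$. Since every block of a partition is nonempty, each such difference is nonempty and therefore contributes to $A$ by Definition 12. Thus the family $A$ reduces to $\{P_{1},P_{2},\cdots,P_{m}\}=\mathcal{P}$, and in particular $s=m$.

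Combining the two observations, the atom set $\{A_{1},A_{2},\cdots,A_{s}\}\bigcup\{\{x\}:x\in B\}$ furnished by Theorem 3 collapses to $\mathcal{P}$, which is the desired conclusion. I expect no genuine obstacle here: the whole argument is a book-keeping specialization of the covering case already handled by Theorem 3 and Proposition 4. The only point requiring a moment's care is confirming that Definition 12 recovers each block $P_{i}$ \emph{intact} rather than as a proper subset, and this holds precisely because distinct blocks of a partition do not meet.
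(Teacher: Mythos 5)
Your proof is correct and follows exactly the route the paper intends: the corollary is stated as an immediate specialization of Theorem 3, with Proposition 4 giving $B=\emptyset$ and pairwise disjointness of the blocks giving $A_{i}=P_{i}-\bigcup_{j\neq i}P_{j}=P_{i}$, so the atom set collapses to $\mathcal{P}$. Nothing further is needed.
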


\begin{corollary}
Let $\mathcal{P}=\{P_{1}, P_{2}, \cdots, P_{m}\}$ be a partition of $E$ and $M(\mathcal{P})$ the transversal matroid induced by $\mathcal{P}$. $M(\mathcal{P})$ is a simple matroid if and only if $|P_{i}|=1$.
\end{corollary}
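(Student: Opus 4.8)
The plan is to derive this corollary directly from Proposition 5, which characterizes simplicity of $M(\mathcal{C})$ by the condition $|A_{i}|=1$ for all $1\leq i\leq s$. Since a partition is a special case of a covering, Proposition 5 already applies to $\mathcal{P}$; the only remaining task is to show that for a partition the family $A$ coincides with the blocks themselves, so that the condition $|A_{i}|=1$ translates into $|P_{i}|=1$.

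First I would compute the set $A$ from Definition 12 in the partition case. Because the blocks of $\mathcal{P}$ are pairwise disjoint, for each $i$ we have $P_{i}\bigcap P_{j}=\emptyset$ whenever $j\neq i$, hence $P_{i}-\bigcup_{j=1,j\neq i}^{m}P_{j}=P_{i}$. Since each block of a covering is nonempty, $P_{i}\neq \emptyset$, so $P_{i}$ survives into $A$. Therefore $A=\{P_{1},P_{2},\cdots,P_{m}\}$ and $s=m$, that is, $A_{i}=P_{i}$ after relabeling. As a consistency check, Proposition 4 tells us that $B=\emptyset$ for a partition, which matches the fact that every point lies in exactly one block and so no point is left over into $B$.

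Finally I would apply Proposition 5 with this identification: $M(\mathcal{P})$ is a simple matroid if and only if $|A_{i}|=1$ for all $1\leq i\leq s$, which, since $A_{i}=P_{i}$ and $s=m$, is exactly $|P_{i}|=1$ for all $1\leq i\leq m$. This completes the argument. I expect no genuine obstacle here; the only point requiring care is the verification that pairwise disjointness forces $P_{i}-\bigcup_{j=1,j\neq i}^{m}P_{j}=P_{i}$, so that the abstract atoms $A_{i}$ of Definition 12 become precisely the partition blocks, after which the corollary is an immediate specialization of Proposition 5.
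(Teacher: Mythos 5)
Your proposal is correct and matches the route the paper intends: the corollary is stated immediately after Proposition 5 as its specialization to partitions, and your verification that pairwise disjointness gives $A_{i}=P_{i}$ (with $B=\emptyset$, consistent with Proposition 4) is exactly the step needed to make that specialization explicit. The paper omits the proof entirely, so your write-up simply supplies the details the authors left implicit.
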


For a geometric lattice $\mathcal{L}(M(\mathcal{C}))$, its atoms are the set of closure of single-points.
However, the closure of any two elements of $E$ may not be the set which covers atoms in this lattice.
The following proposition shows in what condition $cl_{M(\mathcal{C})}(\{x,y\})$ covers atoms of lattice $\mathcal{L}(M(\mathcal{C}))$.

\begin{proposition}
For all $x,y \in E$, $cl_{M(\mathcal{C})}(\{x,y\})$ covers $cl_{M(\mathcal{C})}(\{x\})$ if and only if there dose not exist $1 \leq i\leq s$ such that $x,y \in A_{i}$.
\end{proposition}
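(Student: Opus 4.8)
The plan is to recast the lattice-theoretic statement ``$cl_{M(\mathcal{C})}(\{x,y\})$ covers $cl_{M(\mathcal{C})}(\{x\})$'' as a rank equation in the transversal matroid, and then read that equation off from the covering. Throughout I take $x\neq y$, so that $\{x,y\}$ is a genuine two-element set. By Lemma 5 the element $cl_{M(\mathcal{C})}(\{x\})$ is an atom of $\mathcal{L}(M(\mathcal{C}))$, and monotonicity of the closure operator gives $cl_{M(\mathcal{C})}(\{x\})\subseteq cl_{M(\mathcal{C})}(\{x,y\})$. Since $\mathcal{L}(M(\mathcal{C}))$ is geometric it is graded by its height function, which equals $r_{M(\mathcal{C})}$ by Lemma 3; moreover closure preserves rank. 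Hence one element covers another exactly when it strictly contains it and their heights differ by one, and so $cl_{M(\mathcal{C})}(\{x,y\})$ covers $cl_{M(\mathcal{C})}(\{x\})$ if and only if $r_{M(\mathcal{C})}(\{x,y\})=2$.

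The second step is to identify when $r_{M(\mathcal{C})}(\{x,y\})=2$. Because $\mathcal{C}$ is a covering, every single-point set is independent, so $r_{M(\mathcal{C})}(\{x,y\})\in\{1,2\}$, and the value $2$ is attained precisely when $\{x,y\}$ is itself independent, i.e. a partial transversal of $\mathcal{C}$. By the definition of the transversal matroid this means $x$ and $y$ can be chosen from two distinct blocks; its negation is exactly the statement that $x$ and $y$ are parallel. Thus the covering relation holds if and only if $x$ and $y$ are \emph{not} parallel, and it remains only to translate ``parallel'' into the condition on the sets $A_i$.

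For that final translation, Lemma 6 already yields one implication: if $x,y\in A_i$ for some $i$, then $x,y$ are parallel, whence $r_{M(\mathcal{C})}(\{x,y\})=1$ and no covering occurs. The converse is the delicate point. Assuming $x,y$ parallel, $\{x,y\}$ is dependent, so $x$ and $y$ cannot be assigned to distinct blocks; arguing as in the proof of Proposition 6, this forces the set of blocks containing $x$ and the set containing $y$ to coincide and to consist of a single block $K_h$, so by Definition 12 both $x$ and $y$ lie in $A_i=K_h-\bigcup_{j\neq h}K_j$. The main obstacle is exactly this converse: one must rule out every possible choice of two distinct representing blocks, which reduces to showing that the index sets of blocks through $x$ and through $y$ are equal singletons; the description of $A_i$ as the points belonging to one block and no other is precisely what secures this, completing the equivalence.
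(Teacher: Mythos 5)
Your proof is correct and takes essentially the same route as the paper's: both reduce the covering relation to the rank equation $r_{M(\mathcal{C})}(\{x,y\})=2$, i.e.\ to independence of $\{x,y\}$ in the transversal matroid, and then identify dependence of a two-element set with the existence of a single $A_{i}$ containing both points. The only cosmetic differences are that you package the first step via the height function of the geometric lattice and the last step via parallel elements and Lemma~6, where the paper computes directly with ranks; your explicit restriction to $x\neq y$ is a small improvement, since for $x=y\in B$ the right-hand condition holds vacuously while no covering occurs.
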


\begin{proof}
"$\Leftarrow$": $\forall x,y\in E$, from $\{x\}\subseteq \{x,y\}$, we can obtain $cl_{M(\mathcal{C})}(\{x\})\subseteq cl_{M(\mathcal{C})}(\{x,y\})$ and $1=r_{M(\mathcal{C})}(cl_{M(\mathcal{C})}(\{x\}))\leq r_{M(\mathcal{C})}(cl_{M(\mathcal{C})}(\{x,y\}))=r_{M(\mathcal{C})}(\{x,y\})\leq |\{x,y\}|=2$. Now we need to prove $r_{M(\mathcal{C})}(cl_{M(\mathcal{C})}(\{x,y\}))$ $=2$.
If $r_{M(\mathcal{C})}(cl_{M(\mathcal{C})}(\{x,y\}))=r_{M(\mathcal{C})}(\{x,y\})=1=r_{M(\mathcal{C})}(\{x\})$, then $y\in cl_{M(\mathcal{C})}(\{x\})$, thus $x,y \notin \mathcal{I}(\mathcal{C})$, that is, there is only one block contains $x,y$.
It means that there exists $A_{i}$ such that $x,y\in A_{i}$. That contradicts the hypothesis.
Hence, $r_{M(\mathcal{C})}(cl_{M(\mathcal{C})}(\{x,y\}))=2$, that is, $cl_{M(\mathcal{C})}(\{x,y\})$ covers $cl_{M(\mathcal{C})}(\{x\})$.

"$\Rightarrow$": $\forall x,y \in E$, if there exists $A_{i}$ such that $x,y\in A_{i}$, then there is only one block contains $x,y$, thus $x,y\notin \mathcal{I}(\mathcal{C})$, hence $\{x,y\}\subseteq cl_{M(\mathcal{C})}(\{x\})$.
That implies $cl_{M(\mathcal{C})}(\{x,y\})\subseteq cl_{M(\mathcal{C})}(\{x\})$ which contradicts the assumption that $cl_{M(\mathcal{C})}(\{x,y\})$ covers $cl_{M(\mathcal{C})}(\{x\})$.
\end{proof}

The modular element and the modular pair are core concepts in lattice.
The following theorem shows the relationship among modular element, modular pair and rank function of a matriod.

\begin{theorem}
Let $M$ be a matriod and $\mathcal{L}(M)$ the set of all closed sets of $M$.\\
(1) For all $X,Y\in \mathcal{L}(M)$, $X$ and $Y$ is a modular pair if and only if $r_{M}(X\bigcup Y)+r_{M}(X\bigcap Y)=r_{M}(X)+r_{M}(Y)$.\\
(2) For all $X\in \mathcal{L}(M)$, $X$ is a modular element of $\mathcal{L}(M)$ if and only if $r_{M}(X\bigcup Y)+r_{M}(X\bigcap Y)=r_{M}(X)+r_{M}(Y)$, for all $Y\in \mathcal{L}(M)$.
\end{theorem}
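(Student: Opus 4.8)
The plan is to reduce both parts to Lemma 1, the height characterization of modular pairs in semimodular lattices, by translating lattice heights, joins, and meets into the rank function, closure, union, and intersection of the matroid. The crucial preliminary observation is that $\mathcal{L}(M)$ is geometric, hence semimodular, by Lemma 4, so Lemma 1 applies to every pair of closed sets.

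For part (1), I would first invoke Lemma 1: $(X,Y)$ is a modular pair if and only if $h_{\mathcal{L}(M)}(X\bigvee Y)+h_{\mathcal{L}(M)}(X\bigwedge Y)=h_{\mathcal{L}(M)}(X)+h_{\mathcal{L}(M)}(Y)$. Then I would rewrite each of the four heights in matroid terms. By Lemma 2, $X\bigwedge Y=X\bigcap Y$ and $X\bigvee Y=cl_{M}(X\bigcup Y)$; since $X$, $Y$, $X\bigcap Y$, and $cl_{M}(X\bigcup Y)$ all lie in $\mathcal{L}(M)$, Lemma 3 gives $h_{\mathcal{L}(M)}(Z)=r_{M}(Z)$ for each of them. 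The only identity not directly supplied by the earlier lemmas is $r_{M}(cl_{M}(X\bigcup Y))=r_{M}(X\bigcup Y)$; this follows from the definition $cl_{M}(Z)=\{a: r_{M}(Z)=r_{M}(Z\bigcup\{a\})\}$ together with monotonicity of $r_{M}$, by adjoining the closure elements one at a time and noting that the rank never increases. Substituting these equalities converts the height equation into $r_{M}(X\bigcup Y)+r_{M}(X\bigcap Y)=r_{M}(X)+r_{M}(Y)$, which is exactly the claimed condition.

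For part (2), I would exploit the fact that being a modular element is nothing but being a modular pair with every partner. Comparing definitions (ME) and (MP), fixing the free variable $x$ in (ME) to equal $b$ yields precisely condition (MP) for the pair $(a,b)$; hence $X$ is a modular element of $\mathcal{L}(M)$ if and only if $(X,Y)$ is a modular pair for every $Y\in\mathcal{L}(M)$. Applying part (1) to each such pair then immediately converts this into the assertion that $r_{M}(X\bigcup Y)+r_{M}(X\bigcap Y)=r_{M}(X)+r_{M}(Y)$ holds for all $Y\in\mathcal{L}(M)$, which is statement (2).

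I expect the only real friction to be the rank--closure identity $r_{M}(cl_{M}(X\bigcup Y))=r_{M}(X\bigcup Y)$, since it is the one link in the chain not already packaged as a cited lemma; everything else is a mechanical substitution of Lemmas 2, 3, and 4 into Lemma 1 together with an unwinding of the definitions of modular element and modular pair.
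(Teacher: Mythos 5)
Your proposal is correct and follows essentially the same route as the paper: the paper's proof of part (1) is exactly the chain $r_{M}(X)+r_{M}(Y)=r_{M}(X\bigvee Y)+r_{M}(X\bigwedge Y)=r_{M}(cl_{M}(X\bigcup Y))+r_{M}(X\bigcap Y)=r_{M}(X\bigcup Y)+r_{M}(X\bigcap Y)$ obtained from Lemmas 1, 2 and 3, and part (2) is likewise reduced to (1) via the definition of modular element. Your extra care in justifying $r_{M}(cl_{M}(X\bigcup Y))=r_{M}(X\bigcup Y)$ and in noting that Lemma 4 supplies semimodularity is a welcome filling-in of details the paper leaves implicit, but it is not a different argument.
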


\begin{proof}
(1) According to Lemma 1 and Lemma 3, we know $(X,Y)$ is a modular pair if and only if $r_{M}(X)+r_{M}(Y)=r_{M}(X\bigvee Y)+r_{M}(X\bigwedge Y)=r_{M}(cl_{M}(X\bigcup Y))+r_{M}(X\bigcap Y)=r_{M}(X\bigcup Y)+r_{M}(X\bigcap Y)$.

(2) It comes from the definition of modular element and (1).
\end{proof}

Let $\{A_{i}:i\in \Gamma\}$ be the set of atoms of lattice $\mathcal{L}(M(\mathcal{C}))$, where $\Gamma$ denotes the index set.
The following theorem shows the relationship among atoms, modular pairs and modular elements.

\begin{theorem}
Let $\mathcal{C}$ be a covering and $\mathcal{L}(M(\mathcal{C}))$ the set of all closed sets of transversal matroid $M(\mathcal{C})$ induced by $\mathcal{C}$.
For all $i,j\in \Gamma$\\
(1) $(A_{i},A_{j})$ is a modular pair of $\mathcal{L}(M(\mathcal{C}))$.\\
(2) $A_{i}$ is a modular element of $\mathcal{L}(M(\mathcal{C}))$.
\end{theorem}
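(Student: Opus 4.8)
The plan is to convert both assertions into statements about the rank function of $M(\mathcal{C})$ using Theorem 3, and then to evaluate the relevant ranks from the known structure of the atoms. Throughout I would write $r$ for $r_{M(\mathcal{C})}$. Three preliminary facts will drive the argument: by Theorem 1 the zero of $\mathcal{L}(M(\mathcal{C}))$ is $cl_{M(\mathcal{C})}(\emptyset)=\emptyset$ (since $\mathcal{C}$ is a covering); by Lemma 2 the meet in $\mathcal{L}(M(\mathcal{C}))$ is intersection; and each atom satisfies $r(A_{i})=1$, as established inside the proof of Theorem 2. By Theorem 3(1), proving (1) reduces to checking $r(A_{i}\cup A_{j})+r(A_{i}\cap A_{j})=r(A_{i})+r(A_{j})$, and by Theorem 3(2), proving (2) reduces to checking $r(A_{i}\cup Y)+r(A_{i}\cap Y)=r(A_{i})+r(Y)$ for every closed set $Y$.

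For (1) I would dispose of $i=j$ at once and assume $i\neq j$. The crucial step is to observe that the meet of two distinct atoms is the zero element: $A_{i}\cap A_{j}$ is a closed set contained in the atom $A_{i}$ and cannot equal $A_{i}$ (otherwise $A_{i}\subseteq A_{j}$ forces $A_{i}=A_{j}$), so it lies strictly below the atom and must be $\emptyset$, giving $r(A_{i}\cap A_{j})=0$. For the join I would pick $x\in A_{i}$, $y\in A_{j}$; since $cl_{M(\mathcal{C})}(\{x\})=A_{i}$ (from the proof of Theorem 2) and $y\notin A_{i}$, the pair $\{x,y\}$ is independent, so $r(A_{i}\cup A_{j})\geq 2$, while submodularity (R3) gives $r(A_{i}\cup A_{j})\leq r(A_{i})+r(A_{j})-r(A_{i}\cap A_{j})=2$. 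Hence $r(A_{i}\cup A_{j})=2$ and the identity $2+0=1+1$ holds.

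For (2) I would take an arbitrary closed set $Y$ and split into two cases. If $A_{i}\subseteq Y$, then $A_{i}\cup Y=Y$ and $A_{i}\cap Y=A_{i}$, so the identity is immediate. If $A_{i}\not\subseteq Y$, then exactly as above $A_{i}\cap Y$ is a proper closed subset of the atom $A_{i}$, hence $\emptyset$, so $r(A_{i}\cap Y)=0$; choosing $x\in A_{i}\setminus Y$ and using that $Y$ is closed (so $x\notin cl_{M(\mathcal{C})}(Y)=Y$) gives $r(Y\cup\{x\})=r(Y)+1$ and thus $r(A_{i}\cup Y)\geq r(Y)+1$, while submodularity again gives $r(A_{i}\cup Y)\leq r(A_{i})+r(Y)-0=r(Y)+1$. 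Therefore $r(A_{i}\cup Y)=r(Y)+1$ and the identity holds for every $Y$, so $A_{i}$ is a modular element; note that (1) then reappears as the special case $Y=A_{j}$.

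The hard part will be the single geometric observation underlying both cases: that the meet of the atom $A_{i}$ with any closed set not containing it collapses to the bottom element, forcing the meet-rank to vanish. This is exactly where Theorem 1 is indispensable, guaranteeing that $\emptyset$ is the zero of $\mathcal{L}(M(\mathcal{C}))$, and it is combined with the defining covering property of an atom. Once this collapse is in hand, submodularity and monotonicity of $r$ pin down the join-rank, and the remaining verifications are routine.
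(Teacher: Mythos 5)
Your proposal is correct and follows essentially the same route as the paper: both reduce modularity to the rank identity via Theorem 3, observe that the meet of $A_{i}$ with any closed set not containing it collapses to $\emptyset$ (the zero, by Theorem 1), and then pin the join-rank down with submodularity plus a strict-increase argument. Your treatment of case (2) is in fact slightly cleaner than the paper's, which needlessly splits the non-comparable case into ``$A$ is an atom'' and ``$A$ is not an atom,'' but the underlying ideas are identical.
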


\begin{proof}
(1) Since $\mathcal{C}$ is a covering, $cl_{M(\mathcal{C})}(\emptyset)=\emptyset$.
$A_{i}$ and $A_{j}$ are atoms, so $A_{i}\bigcap A_{j}=\emptyset$.
According to Theorem 3, we need to prove $r_{M(\mathcal{C})}(A_{i}\bigcup A_{j})+r_{M(\mathcal{C})}(A_{i}\bigcap A_{j})=r_{M(\mathcal{C})}(A_{i})+r_{M(\mathcal{C})}(A_{j})$, that is, $r_{M(\mathcal{C})}(A_{i}\bigcup A_{j})=2$.
According to the submodular inequality of $r_{M(\mathcal{C})}$, we have $r_{M(\mathcal{C})}(A_{i}\bigcup A_{j})+r_{M(\mathcal{C})}(A_{i}\bigcap A_{j})\leq r_{M(\mathcal{C})}(A_{i})+r_{M(\mathcal{C})}(A_{j})$, that is, $1 \leq r_{M(\mathcal{C})}(A_{i}\bigcup A_{j})\leq 2$.
If $r_{M(\mathcal{C})}(A_{i}\bigcup A_{j})=1=r_{M(\mathcal{C})}(A_{i})$, then $A_{j}\subseteq cl_{M(\mathcal{C})}(A_{i})=A_{i}$ which contradicts that $A_{i}\bigcap A_{j}=\emptyset$.\\
(2) $A_{i}$ is a modular element of $\mathcal{L}(M(\mathcal{C}))$ if and only if $r_{M(\mathcal{C})}(A_{i}\bigcup A)+r_{M(\mathcal{C})}(A_{i}\bigcap A)\\=r_{M(\mathcal{C})}(A_{i})+r_{M(\mathcal{C})}(A)$ for all $A\in \mathcal{L}(M(\mathcal{C}))$.

Case 1: If $A_{i}$ and $A$ are comparable, that is, $A_{i}\subseteq A$, then $r_{M(\mathcal{C})}(A_{i}\bigcup A)+r_{M(\mathcal{C})}(A_{i}\bigcap A)=r_{M(\mathcal{C})}(A_{i})+r_{M(\mathcal{C})}(A)$.

Case 2: If $A_{i}$ and $A$ are not comparable, there are two cases.
One is that $A$ is an atom of $\mathcal{L}(M(\mathcal{C}))$, the other is that $A$ is not an atom of $\mathcal{L}(M(\mathcal{C}))$.
If $A$ is a atom of $\mathcal{L}(M(\mathcal{C}))$, then we obtain the result from (1).
If $A$ is not an atom of $\mathcal{L}(M(\mathcal{C}))$, then $A\bigcap A_{i}=\emptyset$.
Hence, $r_{M(\mathcal{C})}(A)\leq r_{M(\mathcal{C})}(A\bigcup A_{i})\leq r_{M(\mathcal{C})}(A)+1$.
If $r_{M(\mathcal{C})}(A\bigcup A_{i})= r_{M(\mathcal{C})}(A)$, then $A_{i}\subseteq cl_{M(\mathcal{C})}(A)=A$ which contradicts that $A_{i}\bigcap A=\emptyset$.
Hence, $r_{M(\mathcal{C})}(A\bigcup A_{i})=r_{M(\mathcal{C})}(A)+1$.

In a word, for all $A\in \mathcal{L}(M(\mathcal{C}))$, $r_{M(\mathcal{C})}(A_{i}\bigcup A)+r_{M(\mathcal{C})}(A_{i}\bigcap A)=r_{M(\mathcal{C})}(A_{i})+r_{M(\mathcal{C})}(A)$, that is, $A_{i}$ is a modular element of $\mathcal{L}(M(\mathcal{C}))$.
\end{proof}

\begin{corollary}
Let $\mathcal{L}(M(\mathcal{P}))$ be the set of all closed sets of transversal matroid induced by $\mathcal{P}$.
For all $P_{i},P_{j}\in \mathcal{P}$\\
(1) $(P_{i},P_{j})$ is a modular pair of $\mathcal{L}(M(\mathcal{P}))$.\\
(2) $P_{i}$ is a modular element of $\mathcal{L}(M(\mathcal{P}))$.
\end{corollary}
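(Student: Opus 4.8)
The plan is to obtain this corollary as the partition specialization of Theorem 4, rather than reproving the modular identities from scratch. The entire argument rests on two observations that reduce the partition case to the covering case already settled.

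First I would note that, by Definition 7, a partition $\mathcal{P}$ is a particular covering of $E$; hence the transversal matroid $M(\mathcal{P})$ is well defined and Theorem 4 applies verbatim with $\mathcal{C}=\mathcal{P}$. The only point needing verification is that the atoms named in Theorem 4 specialize to exactly the blocks $P_{i}$ appearing in the corollary. To check this, I would instantiate Definition 12 for a partition: each set $K_{i}-\bigcup_{j\neq i}K_{j}$ becomes $P_{i}-\bigcup_{j\neq i}P_{j}=P_{i}$, since distinct blocks of a partition are disjoint, so $A=\{P_{1},P_{2},\cdots,P_{m}\}$. By Proposition 3, $\mathcal{P}$ being a partition forces $B=\emptyset$, whence the auxiliary set $\{\{x\}:x\in B\}$ of singleton atoms is empty. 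Consequently Theorem 2 (equivalently Corollary 1) yields that the atoms of $\mathcal{L}(M(\mathcal{P}))$ are precisely $\{P_{1},P_{2},\cdots,P_{m}\}$; in the notation of Theorem 4 the index set $\Gamma$ ranges over the blocks and $A_{i}=P_{i}$.

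With the atoms so identified, both assertions follow at once. Part (1) of Theorem 4, applied to the atoms $A_{i}=P_{i}$ and $A_{j}=P_{j}$, gives that $(P_{i},P_{j})$ is a modular pair of $\mathcal{L}(M(\mathcal{P}))$; part (2), applied to $A_{i}=P_{i}$, gives that each $P_{i}$ is a modular element. I do not anticipate any genuine obstacle: the mathematical content lives entirely in Theorem 4, and the only step requiring care is confirming, via Proposition 3, that $B=\emptyset$ so that no stray singleton atoms intrude and the atoms coincide exactly with the blocks $P_{i}$ indexed in the statement.
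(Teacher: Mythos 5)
Your proposal is correct and matches the paper's intent: the paper states this as an immediate corollary of Theorem 4, relying on Corollary 1 (itself the partition specialization of Theorem 2, where $A_{i}=P_{i}$ and $B=\emptyset$) to identify the blocks $P_{i}$ as the atoms of $\mathcal{L}(M(\mathcal{P}))$. Your careful verification via Definition 12 and Proposition 3 that no singleton atoms intrude is exactly the (implicit) reduction the paper performs.
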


The following lemma shows how to induce a matroid by a lattice.
In fact, if a function $f$ on a lattice is a non-negative, inter-valued, submodular and  $f(\emptyset)=0$, then it can determine a matroid.

\begin{lemma}\cite{Oxley93Matroid}
Let $\mathcal{L}_{E}$ be a lattice such that $\mathcal{L}_{E}$ is closed under intersection, and contains $\emptyset$ and $E$.
Suppose that $f$ is a non-negative, inter-valued, submodular function on $\mathcal{L}_{E}$ for which $f(\emptyset)=0$.
Let $\mathcal{I}(\mathcal{L}_{E},f)=\{X\subseteq E:f(X)\geq |X\bigcap T|, \forall T\in \mathcal{L}_{E}\}$.
$\mathcal{I}(\mathcal{L}_{E},f)$ is the collection of independent sets of a matroid on $E$.
\end{lemma}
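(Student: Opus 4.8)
The plan is to verify that $\mathcal{I}(\mathcal{L}_{E},f)$ satisfies the three independent-set axioms (1)--(3) of Definition 1; write $\mathcal{I}=\mathcal{I}(\mathcal{L}_{E},f)$ for short. Axiom (1) is immediate: for every $T\in\mathcal{L}_{E}$ we have $|\emptyset\bigcap T|=0\leq f(T)$ since $f$ is non-negative, so $\emptyset\in\mathcal{I}$. Axiom (2) is equally routine: if $X\in\mathcal{I}$ and $X'\subseteq X$, then $|X'\bigcap T|\leq|X\bigcap T|\leq f(T)$ for every $T\in\mathcal{L}_{E}$, hence $X'\in\mathcal{I}$. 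All the work lies in the augmentation axiom (3), and this is where submodularity and the lattice structure of $\mathcal{L}_{E}$ enter.

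For axiom (3) I would argue by contradiction. Suppose $I_{1},I_{2}\in\mathcal{I}$ with $|I_{1}|<|I_{2}|$ but no $e\in I_{2}-I_{1}$ satisfies $I_{1}\bigcup\{e\}\in\mathcal{I}$. For each such $e$, failure of membership yields a set $T_{e}\in\mathcal{L}_{E}$ with $|(I_{1}\bigcup\{e\})\bigcap T_{e}|>f(T_{e})$; since $|I_{1}\bigcap T_{e}|\leq f(T_{e})$ already and adjoining one point increases the count by at most one, this forces $e\in T_{e}$ together with $|I_{1}\bigcap T_{e}|=f(T_{e})$. Call a set $T\in\mathcal{L}_{E}$ \emph{tight} when $|I_{1}\bigcap T|=f(T)$; thus every $e\in I_{2}-I_{1}$ lies in some tight set $T_{e}$.

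The key step, and the main obstacle, is an uncrossing lemma: the family of tight sets is closed under the meet $\bigcap$ and the lattice join $\bigvee$ of $\mathcal{L}_{E}$. Given tight $T,T'$, I would combine the modular identity for cardinalities, namely $|I_{1}\bigcap T|+|I_{1}\bigcap T'|=|I_{1}\bigcap(T\bigcup T')|+|I_{1}\bigcap(T\bigcap T')|$, with the inclusion $T\bigcup T'\subseteq T\bigvee T'$, the independence bounds $|I_{1}\bigcap(T\bigvee T')|\leq f(T\bigvee T')$ and $|I_{1}\bigcap(T\bigcap T')|\leq f(T\bigcap T')$ (both valid because $T\bigvee T'$ and $T\bigcap T'$ lie in $\mathcal{L}_{E}$ and $I_{1}\in\mathcal{I}$), and submodularity $f(T\bigvee T')+f(T\bigcap T')\leq f(T)+f(T')$. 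Chaining $f(T)+f(T')\leq\cdots\leq f(T)+f(T')$ forces every inequality to be an equality, whence both $T\bigcap T'$ and $T\bigvee T'$ are tight. The delicate point to watch is precisely that $\mathcal{L}_{E}$ need not be closed under set union, so one cannot work with $T\bigcup T'$ directly and must instead pass to the lattice join $T\bigvee T'\supseteq T\bigcup T'$, using that the independence inequality only sees cardinalities.

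Finally, by finiteness of $\mathcal{L}_{E}$ and repeated application of the uncrossing lemma, the join $U=\bigvee_{e\in I_{2}-I_{1}}T_{e}$ is itself tight, and $U\supseteq I_{2}-I_{1}$ since each $e\in T_{e}\subseteq U$. A counting argument then closes the proof: because $I_{2}-I_{1}\subseteq U$, every element of $I_{2}-U$ lies in $I_{1}$, so $|I_{2}-U|\leq|I_{1}-U|=|I_{1}|-|I_{1}\bigcap U|$; combined with $|I_{2}\bigcap U|\leq f(U)=|I_{1}\bigcap U|$ this gives $|I_{2}|=|I_{2}\bigcap U|+|I_{2}-U|\leq|I_{1}|$, contradicting $|I_{1}|<|I_{2}|$. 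Hence an augmenting element must exist, axiom (3) holds, and $\mathcal{I}(\mathcal{L}_{E},f)$ is the collection of independent sets of a matroid on $E$.
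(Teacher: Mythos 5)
The paper does not prove this lemma at all --- it is quoted from Oxley's \emph{Matroid Theory} as a known result --- so there is no internal proof to compare against; I can only assess your argument on its own terms, and it is correct. It is in fact the standard textbook proof: the verification of axioms (1) and (2) is routine, and your treatment of the augmentation axiom via tight sets, the uncrossing step
$f(T)+f(T')=|I_{1}\bigcap T|+|I_{1}\bigcap T'|=|I_{1}\bigcap(T\bigcup T')|+|I_{1}\bigcap(T\bigcap T')|\leq|I_{1}\bigcap(T\bigvee T')|+|I_{1}\bigcap(T\bigcap T')|\leq f(T\bigvee T')+f(T\bigcap T')\leq f(T)+f(T')$,
and the final count $|I_{2}|=|I_{2}\bigcap U|+|I_{2}-U|\leq f(U)+|I_{1}-U|=|I_{1}\bigcap U|+|I_{1}-U|=|I_{1}|$ are exactly right. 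You correctly isolate the two delicate points: that one must pass from $T\bigcup T'$ to the lattice join $T\bigvee T'$ because $\mathcal{L}_{E}$ need not be closed under union (closure under intersection is what makes the meet term work), and --- implicitly, via ``adjoining one point increases the count by at most one'' --- that integrality of $f$ is what forces $|I_{1}\bigcap T_{e}|=f(T_{e})$ from the strict inequality $|(I_{1}\bigcup\{e\})\bigcap T_{e}|>f(T_{e})$. Note also that the paper's displayed condition $f(X)\geq|X\bigcap T|$ is a typo for $f(T)\geq|X\bigcap T|$; you read it the right way.
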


According to the definition of $\mathcal{L}(M(\mathcal{C}))$, we find that $\mathcal{L}(M(\mathcal{C}))$ is closed under intersection, and contains $\emptyset$ and $E$.
Moreover, the rank function is a non-negative, inter-valued, submodular function on $\mathcal{L}(M(\mathcal{C})$ for which $r_{M(\mathcal{C})}(\emptyset)=0$.
Similar to Lemma 7, we can obtain the following theorem.

\begin{theorem}
Let $\mathcal{L}(M(\mathcal{C}))$ be the set of all closed sets of transversal matroid induced by $\mathcal{C}$.
We define $\mathcal{I}(\mathcal{L}(M(\mathcal{C})), r_{M(\mathcal{C})})=\{X\subseteq E:r_{M(\mathcal{C})}(Y)\geq |X\bigcap Y|,\forall Y\in \mathcal{L}(M(\mathcal{C}))\}$, then $M(E,\mathcal{I}(\mathcal{L}(M(\mathcal{C}),r_{M(\mathcal{C})}))$ is a matriod.
\end{theorem}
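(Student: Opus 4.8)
The plan is to recognize this theorem as a direct specialization of Lemma 7, so that the whole argument reduces to checking that the pair $(\mathcal{L}(M(\mathcal{C})), r_{M(\mathcal{C})})$ satisfies every hypothesis required there, after which the conclusion follows by invoking Lemma 7 with $\mathcal{L}_{E}=\mathcal{L}(M(\mathcal{C}))$ and $f=r_{M(\mathcal{C})}$. Matching the defining inequalities, the membership condition $r_{M(\mathcal{C})}(Y)\geq|X\bigcap Y|$ for all $Y\in\mathcal{L}(M(\mathcal{C}))$ is precisely the condition of Lemma 7 read with $Y$ as the quantified lattice element, so once the hypotheses are in place the two constructions coincide and no independent verification of the matroid axioms is needed.

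First I would verify the structural conditions on $\mathcal{L}(M(\mathcal{C}))$: that it is a lattice closed under intersection and that it contains both $\emptyset$ and $E$. Closure under intersection follows from Lemma 2, which identifies the meet of two closed sets with their set-theoretic intersection; since that meet is again a closed set, $\mathcal{L}(M(\mathcal{C}))$ is closed under $\bigcap$. Membership of $E$ is immediate, as $E$ is the one of $\mathcal{L}(M(\mathcal{C}))$, i.e. $cl_{M(\mathcal{C})}(E)=E$.

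The one point that is not purely formal is the membership of $\emptyset$ in $\mathcal{L}(M(\mathcal{C}))$: for an arbitrary matroid $cl_{M}(\emptyset)$ is the set of loops, so $\emptyset$ is closed exactly when $cl_{M}(\emptyset)=\emptyset$. Here I would invoke Theorem 1: because $\mathcal{C}$ is a covering, $cl_{M(\mathcal{C})}(\emptyset)=\emptyset$, hence $\emptyset\in\mathcal{L}(M(\mathcal{C}))$. This is the step where the covering hypothesis does real work, and I expect it to be the only genuine subtlety in the proof.

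Finally I would check the conditions on $r_{M(\mathcal{C})}$ viewed as a function on $\mathcal{L}(M(\mathcal{C}))$: non-negativity and integer-valuedness come from (R1) together with the codomain $N$ of the rank function, submodularity is precisely (R3), and $r_{M(\mathcal{C})}(\emptyset)=0$ follows from the definition of the rank function since the only independent subset of $\emptyset$ is $\emptyset$ itself. With all hypotheses verified, Lemma 7 yields that $\mathcal{I}(\mathcal{L}(M(\mathcal{C})), r_{M(\mathcal{C})})$ is the collection of independent sets of a matroid on $E$, which is exactly the assertion that $M(E, \mathcal{I}(\mathcal{L}(M(\mathcal{C})), r_{M(\mathcal{C})}))$ is a matroid.
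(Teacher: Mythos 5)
Your proposal is correct and follows essentially the same route as the paper, which likewise obtains the theorem by verifying the hypotheses of Lemma 7 for $\mathcal{L}(M(\mathcal{C}))$ and $r_{M(\mathcal{C})}$ and then applying that lemma. In fact you are slightly more careful than the paper: you explicitly justify $\emptyset\in\mathcal{L}(M(\mathcal{C}))$ via Theorem 1 (the covering hypothesis), a point the paper passes over without comment.
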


From above theorem, we find that the matroid induced by a geometric lattice is the same as the transversal matroid which generates the geometric lattice. While the concert form of rank function of transversal matroid is difficult to be expressed.
The following theorem solves this problem.

\begin{theorem}
Let $\mathcal{L}(M(\mathcal{C}))$ be the set of all closed sets of transversal matriod induced by covering $\mathcal{C}$ and $r_{\mathcal{L}(M(\mathcal{C}))}$ the rank function of $M(E,\mathcal{I}(\mathcal{L}(M(\mathcal{C}),r_{M(\mathcal{C})}))$. $r_{\mathcal{L}(M(\mathcal{C}))}\\(X)=min_{Y\in \mathcal{L}(M_{\mathcal{C}})}(r_{M(\mathcal{C})}(X)+|X-Y|)$.
\end{theorem}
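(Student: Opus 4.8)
The plan is to prove the equality as a min--max identity for the rank function $r'=r_{\mathcal{L}(M(\mathcal{C}))}$ of the induced matroid $M'=M(E,\mathcal{I})$ of Theorem~5, where $\mathcal{I}=\{Z\subseteq E:|Z\cap Y|\le r_{M(\mathcal{C})}(Y)\ \text{for all}\ Y\in\mathcal{L}(M(\mathcal{C}))\}$. Throughout write $f:=r_{M(\mathcal{C})}$, $\mathcal{L}:=\mathcal{L}(M(\mathcal{C}))$, and $m:=\min_{Y\in\mathcal{L}}\bigl(f(Y)+|X-Y|\bigr)$ (I read the first summand as $f(Y)$ rather than the printed $f(X)$, since otherwise the minimisation would be vacuous). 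The goal is $r'(X)=m$, which I would split into the routine bound $r'(X)\le m$ and the substantive bound $r'(X)\ge m$; the latter amounts to exhibiting a single closed set $Y_{0}$ with $f(Y_{0})+|X-Y_{0}|\le r'(X)$. A shortcut using the identification $M'=M(\mathcal{C})$ from the discussion after Theorem~5 is recorded at the end.

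For $r'(X)\le m$, let $I\subseteq X$ be any independent set of $M'$ and $Y\in\mathcal{L}$ arbitrary. Since $I\in\mathcal{I}$ we have $|I\cap Y|\le f(Y)$, and since $I\subseteq X$ we have $|I-Y|\le|X-Y|$; adding these to the identity $|I|=|I\cap Y|+|I-Y|$ gives $|I|\le f(Y)+|X-Y|$. Maximising over independent $I\subseteq X$ gives $r'(X)\le f(Y)+|X-Y|$, and minimising over $Y\in\mathcal{L}$ gives $r'(X)\le m$.

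For $r'(X)\ge m$, I would fix a maximum independent set $I^{*}\subseteq X$, so $|I^{*}|=r'(X)$; the degenerate case $X=I^{*}$ is settled by $Y_{0}=\emptyset$, which lies in $\mathcal{L}$ by Theorem~1 and gives $f(\emptyset)+|X|=|X|$. Call $Y\in\mathcal{L}$ \emph{tight} if $|I^{*}\cap Y|=f(Y)$. For each $e\in X-I^{*}$ the set $I^{*}\cup\{e\}$ is dependent, so by definition of $\mathcal{I}$ some $Y_{e}\in\mathcal{L}$ satisfies $|(I^{*}\cup\{e\})\cap Y_{e}|>f(Y_{e})$; together with $|I^{*}\cap Y_{e}|\le f(Y_{e})$ this forces $Y_{e}$ to be tight and $e\in Y_{e}$. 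The engine of the argument is an uncrossing lemma: tight sets are closed under the lattice join. I would prove it by combining the submodularity of $f$ on $\mathcal{L}$, namely $f(Y_{1}\vee Y_{2})+f(Y_{1}\wedge Y_{2})=r_{M(\mathcal{C})}(Y_{1}\cup Y_{2})+r_{M(\mathcal{C})}(Y_{1}\cap Y_{2})\le f(Y_{1})+f(Y_{2})$ (by the submodular inequality (R3) together with the fact that closure preserves rank), with the modularity of cardinality $|I^{*}\cap(Y_{1}\cup Y_{2})|+|I^{*}\cap(Y_{1}\cap Y_{2})|=|I^{*}\cap Y_{1}|+|I^{*}\cap Y_{2}|$ and the inclusion $Y_{1}\cup Y_{2}\subseteq Y_{1}\vee Y_{2}$; forcing the resulting chain to equalities shows both that $Y_{1}\vee Y_{2}$ is tight and that $I^{*}\cap(Y_{1}\vee Y_{2})=I^{*}\cap(Y_{1}\cup Y_{2})$. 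Setting $Y_{0}:=\bigvee_{e\in X-I^{*}}Y_{e}$, iteration gives that $Y_{0}$ is tight with $I^{*}\cap Y_{0}=I^{*}\cap\bigcup_{e}Y_{e}$; since every $e\in X-I^{*}$ lies in $Y_{0}$ we obtain $X-Y_{0}=I^{*}-Y_{0}$, whence $f(Y_{0})+|X-Y_{0}|=|I^{*}\cap Y_{0}|+\bigl(|I^{*}|-|I^{*}\cap Y_{0}|\bigr)=|I^{*}|=r'(X)$, giving $m\le r'(X)$.

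The main obstacle is the uncrossing lemma, specifically the discrepancy between the set union $Y_{1}\cup Y_{2}$ and the lattice join $Y_{1}\vee Y_{2}=cl_{M(\mathcal{C})}(Y_{1}\cup Y_{2})$: the minimisation ranges over closed sets, yet the quantities $|X-Y|$ use ordinary set difference, so I must check that replacing the union by its closure neither breaks tightness nor absorbs new elements of $I^{*}$ (this is exactly what the equality $I^{*}\cap(Y_{1}\vee Y_{2})=I^{*}\cap(Y_{1}\cup Y_{2})$ guarantees). If instead one invokes $M'=M(\mathcal{C})$ from the discussion after Theorem~5, so that $r'=f$, the whole argument collapses: $r'(X)\le m$ becomes the subadditivity chain $f(X)\le r_{M(\mathcal{C})}(X\cup Y)\le r_{M(\mathcal{C})}(Y)+r_{M(\mathcal{C})}(X-Y)\le f(Y)+|X-Y|$ holding for every $Y\in\mathcal{L}$, while $r'(X)\ge m$ is witnessed by the single set $Y=cl_{M(\mathcal{C})}(X)$, for which $f(Y)+|X-Y|=f(X)=r'(X)$, bypassing the uncrossing argument entirely.
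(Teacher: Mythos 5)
Your reading of the statement is the right one: the printed formula, with $r_{M(\mathcal{C})}(X)$ inside the minimum, is vacuous (take $Y=E$), and the paper's own intermediate steps confirm that the intended summand is $r_{M(\mathcal{C})}(Y)$. With that correction your argument is correct, but it takes a genuinely different and substantially more complete route than the paper's. The paper proves the identity by a chain of equivalences built on the characterization ``$X\in\mathcal{I}(\mathcal{L}(M(\mathcal{C})),r_{M(\mathcal{C})})$ iff $|X|\leq r_{M(\mathcal{C})}(Y)+|X-Y|$ for all closed $Y$''; this pins down the formula only when $X$ is independent (where $r_{\mathcal{L}(M(\mathcal{C}))}(X)=|X|$ and the minimum is attained at $Y=\emptyset$, which is closed by Theorem~1), and it is silent about dependent $X$. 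You instead prove the full min--max identity: the upper bound $r'(X)\leq\min_{Y}(r_{M(\mathcal{C})}(Y)+|X-Y|)$ by the counting inequality $|I|=|I\cap Y|+|I-Y|$, and the lower bound by the uncrossing argument on tight closed sets, using submodularity of the rank together with the facts that $Y_{1}\wedge Y_{2}=Y_{1}\cap Y_{2}$, $Y_{1}\vee Y_{2}=cl_{M(\mathcal{C})}(Y_{1}\cup Y_{2})$ and that closure preserves rank; your care in verifying $I^{*}\cap(Y_{1}\vee Y_{2})=I^{*}\cap(Y_{1}\cup Y_{2})$ is exactly the point where a sloppier argument would break. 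What the paper's approach buys is brevity and a direct tie to the independence axiomatization; what yours buys is an actual proof of the rank formula for all $X$, which is the standard argument for matroids induced by submodular functions on a lattice. Your closing shortcut via $r'=r_{M(\mathcal{C})}$ and $Y=cl_{M(\mathcal{C})}(X)$ is also valid and not circular (the identification $\mathcal{I}(\mathcal{L}(M(\mathcal{C})),r_{M(\mathcal{C})})=\mathcal{I}(M(\mathcal{C}))$ is proved in the paper from Lemma~8, independently of this theorem), though it inverts the paper's order of presentation.
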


\begin{proof}
$X\in \mathcal{I}(\mathcal{L}(M(\mathcal{C}), r_{M(\mathcal{C})}) \Leftrightarrow \forall Y\in \mathcal{L}(M(\mathcal{C}))$, $r_{M(\mathcal{C})}(Y)\geq |X\bigcap Y|=|X|-|X-Y| \Leftrightarrow \forall Y\in \mathcal{L}(M(\mathcal{C}))$, $|X|\leq r_{M(\mathcal{C})}(Y)+|X-Y| \Leftrightarrow \forall Y\in \mathcal{L}(M(\mathcal{C})$, $r_{\mathcal{L}(M(\mathcal{C}))}(X)=|X|\leq r_{M(\mathcal{C})}(Y)+|X-Y| \Leftrightarrow r_{\mathcal{L}(M(\mathcal{C}))}(X)=min_{Y\in \mathcal{L}(M(\mathcal{C}))}(r_{M(\mathcal{C})}\\(X)+|X-Y|)$.
\end{proof}

For any given matroid $M$, we know that for all $X\subseteq E$, $X$ is an independent set if and only if $r_{M}(X)=|X|$.
Based on the properties of rank function, we have $r_{M}(X)\leq |X|$.
Hence, $X$ is an independent set if and only if $r_{M}(X)\geq |X|$.

\begin{lemma}
Let $M$ be a matroid.
$X$ is an independent set of $M$ if and only if for all closed set of $Y$, $r_{M}(Y)\geq |X\bigcap Y|$.
\end{lemma}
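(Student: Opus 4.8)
The plan is to reduce everything to the rank characterization of independence recalled just before the statement: a set $X$ is independent if and only if $r_{M}(X)\geq |X|$ (and hence, together with (R1), if and only if $r_{M}(X)=|X|$). With this in hand, both implications become short rank computations.

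For the necessity (``only if''), I would suppose $X$ is independent and fix an arbitrary closed set $Y$. Since $X\bigcap Y\subseteq X$ and $X$ is independent, condition (2) in the definition of a matroid gives that $X\bigcap Y$ is itself independent, so $r_{M}(X\bigcap Y)=|X\bigcap Y|$. Because $X\bigcap Y\subseteq Y$, monotonicity (R2) yields $r_{M}(Y)\geq r_{M}(X\bigcap Y)=|X\bigcap Y|$. Notice that this half does not even use that $Y$ is closed; the closedness of $Y$ is only needed for the converse.

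For the sufficiency (``if''), the key move is to instantiate the hypothesis at the particular closed set $Y=cl_{M}(X)$. Since $X\subseteq cl_{M}(X)$, we have $X\bigcap cl_{M}(X)=X$, so the assumption gives $r_{M}(cl_{M}(X))\geq |X|$. It then remains to observe that closure preserves rank, i.e. $r_{M}(cl_{M}(X))=r_{M}(X)$; this follows directly from the definition $cl_{M}(X)=\{a\in E:r_{M}(X)=r_{M}(X\bigcup \{a\})\}$, since adjoining elements of the closure one at a time never raises the rank. Combining these, $r_{M}(X)=r_{M}(cl_{M}(X))\geq |X|$, whence $X$ is independent by the characterization above.

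The only step requiring a little care is the identity $r_{M}(cl_{M}(X))=r_{M}(X)$, which I expect to be the main (though minor) obstacle; everything else is a one-line application of heredity and (R2). If one prefers to avoid quoting this identity, one can instead argue that any maximal independent subset $I$ of $X$ remains a maximal independent subset of $cl_{M}(X)$, so that $r_{M}(cl_{M}(X))=|I|=r_{M}(X)$ follows immediately from the definition of the rank function.
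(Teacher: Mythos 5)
Your proof is correct and follows essentially the same route as the paper's: heredity of independence plus monotonicity (R2) for the forward direction, and instantiating the hypothesis at the closed set $Y=cl_{M}(X)$ together with the identity $r_{M}(cl_{M}(X))=r_{M}(X)$ for the converse. Your remarks that closedness of $Y$ is not needed for necessity, and your sketch of why closure preserves rank, are accurate refinements of the same argument.
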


\begin{proof}
"$\Rightarrow$": Since $X\bigcap Y\subseteq X$ for all closed set $Y$ and $X$ is an independent set, $X\bigcap Y$ is an independent set of $M$.
Hence, $r_{M}(Y)\geq r_{M}(X\bigcap Y)\geq |X\bigcap Y|$.

"$\Leftarrow$": For all closed set $Y$, $r_{M}(Y)\geq |X\bigcap Y|$. Especially, for $Y=cl_{M}(X)$, we have $r_{M}(X)=r_{M}(cl_{M}(X))=r_{M}(Y)\geq |X\bigcap Y|=|X|$.
Hence, $X$ is an independent set of matroid $M$.
\end{proof}

The following theorem shows that there is a one-to-one correspondence between geometric lattices and transversal matroids in the context of covering-based rough sets.

\begin{theorem}
Let $r_{\mathcal{L}(M(\mathcal{C}))}$ be the rank function of $M(E,\mathcal{I}(\mathcal{L}(M(\mathcal{C}),r_{M(\mathcal{C})}))$. $\mathcal{I}(\mathcal{L}(M(\mathcal{C}),\\r_{M(\mathcal{C})})=\mathcal{I}(M(\mathcal{C}))$ and $r_{\mathcal{L}(M(\mathcal{C}))}(X)=r_{M(\mathcal{C})}(X)$ for all $X\subseteq E$.
\end{theorem}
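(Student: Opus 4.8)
The plan is to derive both assertions almost directly from Lemma 8, which has been set up precisely for this purpose. First I would unwind the two definitions in play. By the construction preceding the theorem, $X\in\mathcal{I}(\mathcal{L}(M(\mathcal{C})),r_{M(\mathcal{C})})$ means exactly that $r_{M(\mathcal{C})}(Y)\geq|X\bigcap Y|$ holds for every closed set $Y\in\mathcal{L}(M(\mathcal{C}))$. Lemma 8, applied to the matroid $M=M(\mathcal{C})$, states that this very condition characterises membership in $\mathcal{I}(M(\mathcal{C}))$: namely $X$ is independent in $M(\mathcal{C})$ if and only if $r_{M(\mathcal{C})}(Y)\geq|X\bigcap Y|$ for all closed sets $Y$. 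Matching the defining condition of $\mathcal{I}(\mathcal{L}(M(\mathcal{C})),r_{M(\mathcal{C})})$ verbatim with the hypothesis of Lemma 8 then forces the two collections of independent sets to coincide, giving $\mathcal{I}(\mathcal{L}(M(\mathcal{C})),r_{M(\mathcal{C})})=\mathcal{I}(M(\mathcal{C}))$. This settles the first equality essentially by quoting Lemma 8.

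For the second equality I would invoke the elementary fact that a matroid on a fixed ground set is completely determined by its family of independent sets, so two matroids with identical independent sets share the same rank function. Concretely, by Theorem 4 the pair $M(E,\mathcal{I}(\mathcal{L}(M(\mathcal{C})),r_{M(\mathcal{C})}))$ is a matroid, and $r_{\mathcal{L}(M(\mathcal{C}))}$ is by definition its rank function, that is, $r_{\mathcal{L}(M(\mathcal{C}))}(X)=\max\{|I|:I\subseteq X,\ I\in\mathcal{I}(\mathcal{L}(M(\mathcal{C})),r_{M(\mathcal{C})})\}$. Substituting the first equality into this maximum replaces the index family by $\mathcal{I}(M(\mathcal{C}))$, which yields $\max\{|I|:I\subseteq X,\ I\in\mathcal{I}(M(\mathcal{C}))\}=r_{M(\mathcal{C})}(X)$. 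Hence $r_{\mathcal{L}(M(\mathcal{C}))}(X)=r_{M(\mathcal{C})}(X)$ for all $X\subseteq E$.

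I expect no serious obstacle, since the genuine content has been front-loaded into Lemma 8; the theorem is really a packaging step that identifies the matroid reconstructed from the geometric lattice $\mathcal{L}(M(\mathcal{C}))$ with the original transversal matroid $M(\mathcal{C})$. The only points to state carefully are the direction in which Lemma 8 is applied and the remark that equality of independent-set families forces equality of rank functions; together these close the one-to-one correspondence between geometric lattices and transversal matroids announced just before the statement.
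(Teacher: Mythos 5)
Your proposal is correct and follows essentially the same route as the paper: both arguments reduce the first equality to Lemma~8 (independence in $M(\mathcal{C})$ is equivalent to $r_{M(\mathcal{C})}(Y)\geq |X\bigcap Y|$ for all closed sets $Y$), and both then obtain $r_{\mathcal{L}(M(\mathcal{C}))}=r_{M(\mathcal{C})}$ from the fact that matroids with the same independent sets have the same rank function. Your write-up is somewhat more explicit about the latter step than the paper's terse ``So does $r_{\mathcal{L}(M(\mathcal{C}))}(X)=r_{M(\mathcal{C})}(X)$,'' but the content is identical.
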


\begin{proof}
According to Lemma 8, we know that $\mathcal{I}(\mathcal{L}(M(\mathcal{C}), r_{M(\mathcal{C})})=\{X\subseteq E:r_{M(\mathcal{C})}(Y)\\\geq |X\bigcap Y|,\forall Y\in \mathcal{L}(M(\mathcal{C}))\}$ and $\mathcal{I}(\mathcal{C})=\{X\subseteq E:r_{M(\mathcal{C})}(X)= |X|\}$ are equivalent, that is, $M(E,\mathcal{I}(\mathcal{L}(M(\mathcal{C}),r_{M(\mathcal{C})}))$ and $M(E,\mathcal{I}(\mathcal{C}))$ are equivalent. So does $r_{\mathcal{L}(M(\mathcal{C}))}(X)=r_{M(\mathcal{C})}(X)$ for all $X\subseteq E$.
\end{proof}

We know that $\mathcal{I}(\mathcal{P})=\{X\subseteq E:|X\bigcap P_{i}|\leq 1\}$ is the family of all partial transversal of partition $\mathcal{P}$. For all $X\subseteq E$, $r_{M(\mathcal{P})}(X)=max \{|I|: I\subseteq X, I\in \mathcal{I}(\mathcal{P})\}=|\{P_{i}: P_{i}\bigcap X\neq \emptyset\}|$.   According to Lemma 2, we know that for all closed set $X,Y$ of matroid $M$, $X\bigvee Y=cl_{M}(X\bigcup Y)$.
If the matroid is $M(\mathcal{P})$, then $X\bigwedge Y=X\bigcup Y$.

\begin{lemma}\cite{ZhuWang11Matroidal}
If $R$ is an equivalence relation on $E$ and $M$ is the matroid, then $cl_{M}=R^{\ast}$ for all $X\subseteq E$.
\end{lemma}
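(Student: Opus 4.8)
The plan is to recognize that the matroid $M$ in question is the transversal matroid $M(\mathcal{P})$ induced by the partition $\mathcal{P}=E/R=\{P_{1},P_{2},\cdots,P_{m}\}$ that the equivalence relation $R$ generates on $E$, and then to reduce the identity $cl_{M}=R^{\ast}$ to a direct computation with the rank function. The crucial input, established just before this lemma, is the explicit formula $r_{M(\mathcal{P})}(X)=|\{P_{i}:P_{i}\bigcap X\neq \emptyset\}|$, which says that the rank of $X$ counts the number of blocks of $\mathcal{P}$ that $X$ meets. Together with the definition $cl_{M}(X)=\{a\in E:r_{M}(X)=r_{M}(X\bigcup\{a\})\}$, everything becomes a short case analysis.

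Fixing $X\subseteq E$ and an element $a\in E$, I would first note that since $\mathcal{P}$ is a partition, $a$ lies in exactly one block, say $P_{i}$. The argument then splits into two cases according to whether $X$ already meets $P_{i}$. If $P_{i}\bigcap X\neq \emptyset$, then $X$ and $X\bigcup\{a\}$ meet precisely the same collection of blocks, so the rank formula gives $r_{M(\mathcal{P})}(X\bigcup\{a\})=r_{M(\mathcal{P})}(X)$ and hence $a\in cl_{M}(X)$. If instead $P_{i}\bigcap X=\emptyset$, then $X\bigcup\{a\}$ meets one additional block, so $r_{M(\mathcal{P})}(X\bigcup\{a\})=r_{M(\mathcal{P})}(X)+1$ and hence $a\notin cl_{M}(X)$. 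Combining the two cases shows that $cl_{M}(X)$ consists exactly of those elements lying in a block that $X$ intersects, i.e. $cl_{M}(X)=\bigcup\{P_{i}:P_{i}\bigcap X\neq \emptyset\}$, which is precisely $R^{\ast}(X)$ by the definition of the upper approximation.

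Since $X$ was arbitrary, this yields $cl_{M}=R^{\ast}$ as operators on $2^{E}$. There is no genuine obstacle here: the entire content is the rank computation, and the only point requiring a little care is making sure the matroid is correctly identified as the transversal matroid of the partition $E/R$, so that the rank formula applies, and that the two block-counting cases are handled cleanly. Everything else is a direct unwinding of the definitions of closure and of $R^{\ast}$.
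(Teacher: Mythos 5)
Your argument is correct. The paper itself offers no proof of this lemma---it is quoted from the reference [ZhuWang11Matroidal] and used as a black box (e.g., in the proof of Lemma 10, where $cl_{M(\mathcal{P})}$ is replaced by $R^{\ast}$). Your reconstruction is the natural one and is consistent with how the paper applies the result: the statement's unspecified ``the matroid'' must be read as the transversal matroid $M(\mathcal{P})$ of the partition $\mathcal{P}=E/R$, exactly as you identify it, and the paper supplies the needed rank formula $r_{M(\mathcal{P})}(X)=|\{P_{i}:P_{i}\bigcap X\neq\emptyset\}|$ in the sentence immediately preceding the lemma. Your two-case analysis (whether or not the unique block containing $a$ already meets $X$) correctly yields $cl_{M(\mathcal{P})}(X)=\bigcup\{P_{i}:P_{i}\bigcap X\neq\emptyset\}=R^{\ast}(X)$, and it handles the boundary case $X=\emptyset$ automatically. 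No gaps.
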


\begin{lemma}
Let $M(\mathcal{P})$ be a transversal matroid induced by $\mathcal{P}$ and $\mathcal{L}(M(\mathcal{P}))$ the set of all closed sets of transversal matriod induced by $\mathcal{P}$, $X,Y\in \mathcal{L}(M(\mathcal{P}))$. $X\bigvee Y=X\bigcup Y$.
\end{lemma}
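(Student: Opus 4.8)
The plan is to reduce the claim to showing that the set-theoretic union of two closed sets of $M(\mathcal{P})$ is again a closed set, and then to exploit the fact that, for a partition matroid, the closed sets are exactly the unions of blocks of $\mathcal{P}$. First I would invoke Lemma 2, which gives $X\bigvee Y=cl_{M(\mathcal{P})}(X\bigcup Y)$ for any closed sets $X,Y$. Hence it suffices to prove that $X\bigcup Y$ is itself closed, for then $cl_{M(\mathcal{P})}(X\bigcup Y)=X\bigcup Y$ and the desired equality $X\bigvee Y=X\bigcup Y$ follows at once.

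To identify the closed sets concretely, I would use Lemma 9 together with the correspondence between the partition $\mathcal{P}$ and the equivalence relation $R$ whose classes are precisely the blocks of $\mathcal{P}$. Lemma 9 yields $cl_{M(\mathcal{P})}=R^{\ast}$, so a set $Z$ is closed exactly when $R^{\ast}(Z)=Z$, i.e. when $Z=\bigcup\{P_{i}:P_{i}\bigcap Z\neq \emptyset\}$; equivalently, a closed set of $M(\mathcal{P})$ is nothing but a union of blocks of $\mathcal{P}$. The same characterization can be obtained directly from the rank function, since $r_{M(\mathcal{P})}(Z)=|\{P_{i}:P_{i}\bigcap Z\neq \emptyset\}|$ shows that adjoining any element lying in a block already met by $Z$ leaves the rank unchanged, so $cl_{M(\mathcal{P})}(Z)$ is exactly the union of the blocks that $Z$ meets.

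Given this, the conclusion is immediate. Since $X$ and $Y$ are closed, each is a union of blocks of $\mathcal{P}$, so $X\bigcup Y$ is again a union of blocks and therefore satisfies $R^{\ast}(X\bigcup Y)=X\bigcup Y$; that is, $X\bigcup Y$ is closed. Combined with the first step, this gives $X\bigvee Y=cl_{M(\mathcal{P})}(X\bigcup Y)=X\bigcup Y$.

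The one conceptual point worth flagging is that this argument genuinely relies on $\mathcal{P}$ being a partition rather than an arbitrary covering: in a general matroid the union of two closed sets need not be closed, so the step asserting that $X\bigcup Y$ is closed would fail. What makes the partition case work is precisely that its closed sets form the family of unions of blocks, and this family is manifestly closed under union. I do not expect any serious obstacle beyond recording the closed-set characterization carefully; the result is essentially a corollary of Lemmas 2 and 9.
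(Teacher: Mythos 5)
Your proof is correct and follows essentially the same route as the paper: both start from Lemma 2 to write $X\bigvee Y=cl_{M(\mathcal{P})}(X\bigcup Y)$ and then use Lemma 9 to evaluate the closure. The paper compresses your second step into the one-line chain $cl_{M(\mathcal{P})}(X\bigcup Y)=R^{\ast}(X\bigcup Y)=R^{\ast}(X)\bigcup R^{\ast}(Y)=X\bigcup Y$, using additivity of $R^{\ast}$ over unions, which is just a more condensed form of your observation that closed sets are unions of blocks and hence closed under union.
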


\begin{proof}
$X\bigvee Y=cl_{M(\mathcal{P})}(X\bigcup Y)=R^{\ast}(X\bigcup Y)=R^{\ast}(X)\bigcup R^{\ast}(Y)=cl_{M(\mathcal{P})}(X)\bigcup\\ cl_{M(\mathcal{P})}(Y)=X\bigcup Y$.
\end{proof}

\begin{proposition}
Let $r_{\mathcal{L}(M(\mathcal{P}))}$ be the rank function of $M(E,\mathcal{I}(\mathcal{L}(M(\mathcal{P}),r_{M(\mathcal{P})}))$. $\mathcal{I}(\mathcal{L}\\(M(\mathcal{P}),r_{M(\mathcal{P})})=\mathcal{I}(M(\mathcal{P}))$ and $r_{\mathcal{L}(M(\mathcal{P}))}(X)=r_{M(\mathcal{P})}(X)$ for all $X\subseteq E$.
\end{proposition}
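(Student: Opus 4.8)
The plan is to recognize that this Proposition is nothing but the partition special case of Theorem 6: since every partition is a covering (and in fact the degenerate covering with $B=\emptyset$, by Proposition 2), one may simply set $\mathcal{C}=\mathcal{P}$ in Theorem 6. So the cleanest route is to observe that all ingredients used to prove Theorem 6 apply verbatim to the transversal matroid $M(\mathcal{P})$, and then re-run that argument. I will present it as a direct application of Lemma 8 to $M(\mathcal{P})$, which keeps the proof self-contained.

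First I would record the two relevant descriptions of the families involved. By the defining form (the partition instance of the set in Theorem 5), $\mathcal{I}(\mathcal{L}(M(\mathcal{P})),r_{M(\mathcal{P})})=\{X\subseteq E:r_{M(\mathcal{P})}(Y)\geq |X\bigcap Y|,\ \forall Y\in \mathcal{L}(M(\mathcal{P}))\}$, while $\mathcal{I}(M(\mathcal{P}))=\{X\subseteq E:r_{M(\mathcal{P})}(X)=|X|\}$ is the family of partial transversals of $\mathcal{P}$. The core step is then to apply Lemma 8 with $M=M(\mathcal{P})$: Lemma 8 asserts precisely that $X$ is an independent set of $M(\mathcal{P})$ if and only if $r_{M(\mathcal{P})}(Y)\geq |X\bigcap Y|$ for every closed set $Y$ of $M(\mathcal{P})$. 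This latter condition is exactly membership in $\mathcal{I}(\mathcal{L}(M(\mathcal{P})),r_{M(\mathcal{P})})$, so the two independence families coincide, establishing the first assertion $\mathcal{I}(\mathcal{L}(M(\mathcal{P})),r_{M(\mathcal{P})})=\mathcal{I}(M(\mathcal{P}))$.

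For the rank equality I would argue that two matroids on the common ground set $E$ with identical collections of independent sets are the same matroid, hence have the same rank function; since $M(E,\mathcal{I}(\mathcal{L}(M(\mathcal{P})),r_{M(\mathcal{P})}))$ and $M(\mathcal{P})$ have equal independence families by the previous step, we conclude $r_{\mathcal{L}(M(\mathcal{P}))}(X)=r_{M(\mathcal{P})}(X)$ for all $X\subseteq E$. Before invoking Theorem 5 (and the matroid constructed there) in the partition setting, I would note in passing that its hypotheses are satisfied: $\mathcal{L}(M(\mathcal{P}))$ is closed under intersection and contains $\emptyset$ and $E$, and $r_{M(\mathcal{P})}$ is non-negative, integer-valued, submodular with $r_{M(\mathcal{P})}(\emptyset)=0$, all of which hold because $M(\mathcal{P})$ is a matroid.

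Honestly, there is no substantial obstacle here: the statement is a corollary, and the only care required is bookkeeping, namely confirming that Lemma 8 and the construction of Theorem 5 were stated for an arbitrary matroid $M$ and therefore specialize to $M(\mathcal{P})$ without any extra hypothesis. The single conceptual point worth flagging is that a partition is a genuine covering, so nothing in the covering-based derivation breaks in the degenerate case; everything else is the same equivalence chain already used in Theorem 6.
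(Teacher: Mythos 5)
Your proof is correct, but it takes a different route from the paper's own proof of this proposition. You treat the statement as the partition instance of Theorem 6 and dispatch it by a single application of Lemma 8 to $M(\mathcal{P})$: independence of $X$ is equivalent to $r_{M(\mathcal{P})}(Y)\geq |X\bigcap Y|$ for every closed set $Y$, which is verbatim the membership condition for $\mathcal{I}(\mathcal{L}(M(\mathcal{P})),r_{M(\mathcal{P})})$, and the rank identity follows because the two matroids have the same independent sets. The paper instead gives a self-contained two-inclusion argument that exploits the partition structure: for one inclusion it uses only that each block $P_{i}$ is a closed set of rank $1$, so $|X\bigcap P_{i}|\leq 1$; for the reverse inclusion it invokes Lemma 4 and Lemma 10 to write an arbitrary closed set $Y$ as a disjoint union $\bigcup_{i\in K}P_{i}$ with $r_{M(\mathcal{P})}(Y)=|K|$ and then counts $|X\bigcap Y|=\sum_{i\in K}|X\bigcap P_{i}|\leq |K|$. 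Your argument is shorter and makes clear that nothing about partitions is actually needed beyond what Theorem 6 already established for coverings; the paper's version buys an explicit picture of how the closed-set lattice of $M(\mathcal{P})$ decomposes into blocks, at the cost of depending on the partition-specific Lemma 10. Your side remark that the hypotheses of Lemma 7/Theorem 5 (closure under intersection, containing $\emptyset$ and $E$, submodularity of the rank) specialize without issue is the right bookkeeping to flag.
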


\begin{proof}
We need to prove only $\mathcal{I}(\mathcal{L}(M(\mathcal{P}),r_{M(\mathcal{P})})=\mathcal{I}(M(\mathcal{P}))$.
$\forall X\in \mathcal{I}(\mathcal{L}(M(\mathcal{P}),\\r_{M(\mathcal{P})})$, $r_{M(\mathcal{P})}(Y)\geq |X\bigcap Y|$ for all $Y\in \mathcal{L}(M(\mathcal{P}))$.
Since $P_{i}\in \mathcal{L}(M(\mathcal{P}))$, $|X\bigcap P_{i}|\\\leq r_{M(\mathcal{P})}(P_{i})=1$. Thus $X\in \mathcal{I}(M(\mathcal{P}))$.
Hence, $\mathcal{I}(\mathcal{L}(M(\mathcal{P}),r_{M(\mathcal{P})})\subseteq \mathcal{I}(M(\mathcal{P}))$.
According to Lemma 4 and Lemma 10, for all $Y\in \mathcal{L}(M(\mathcal{P}))$, there exists $K\subseteq \{1,2,\cdots m\}$ such that $Y=\bigvee_{i\in K}P_{i}=\bigcup_{i\in K}P_{i}$, $r_{M(\mathcal{P})}(Y)=|K|$ and $|X\bigcap Y|=|X\bigcap (\bigcup_{i\in K}P_{i})|=|\bigcup_{i\in K}(X\bigcap P_{i})|=\sum_{i\in K}|X\bigcap P_{i}|$.
For all $X\in \mathcal{I}(M(\mathcal{P}))$, $|X\bigcap P_{i}|\leq 1$, $|X\bigcap Y|=\sum_{i\in K}|X\bigcap P_{i}|\leq |K|=r_{M(\mathcal{P})}(Y)$, that is, $\mathcal{I}(M(\mathcal{P}))\subseteq \mathcal{I}(\mathcal{L}(M(\mathcal{P}),r_{M(\mathcal{P})})$.
\end{proof}

\section{Three geometric lattice structures of covering-based rough sets through approximation operators}
\label{S:Matroidalstructuresbasedonthreetypesofupperapproximations}

A geometric lattice structure of covering-based rough sets is established through the transversal matroid induced by the covering, and its
characteristics including atoms, modular elements and modular pairs are studied in Section~\ref{S:Thepropertiesoftheclosedsetlatticeofmatroidinducedbycovering}.
In this section, we study the matroid and the structure of geometric lattice of a matroid from the viewpoint of upper approximations.
The condition of three types of upper approximations to be a matroidal closure operator is obtained and the properties of matroids and their geometric lattice structures induced by them are also established.

Pomykala first studied the second type of covering rough set model~\cite{Pomykala87Approximation}. Zhu and Wang studied the axiomatization of this type of approximation and the relationship between it and the $Kuratowski$ closure operator in \cite{ZhuWang07OnThree}.
Proposition 9 below gives some properties of this operator.
\begin{proposition}
Let $\mathcal{C}$ be a covering on $E$. $SH$ has the following properties:\\
(1) $SH(\emptyset)=\emptyset$\\
(2) $X\subseteq SH(X)$ for all $X\subseteq E$\\
(3) $SH(X\bigcup Y)=SH(X)\bigcup SH(Y)$\\
(4) $x\in SH(\{y\})\Leftrightarrow y\in SH(\{x\})$\\
(5) $X\subseteq Y\subseteq E\Rightarrow SH(X)\subseteq SH(Y)$\\
(6) $\forall x,y \in E, y \in SH(X\bigcup \{x\})-SH(X)$, then $x\in SH(X\bigcup \{y\})$.
\end{proposition}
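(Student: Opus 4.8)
The plan is to establish (1)--(5) directly from the two equivalent descriptions of $SH$ given in its definition, namely $SH(X)=\bigcup\{K\in\mathcal{C}:K\cap X\neq\emptyset\}=\bigcup\{I(x):x\in X\}$, and then to deduce the exchange property (6) by combining additivity, symmetry and monotonicity. The point is that once additivity (3) is in hand, properties (5) and (6) become short formal consequences, so essentially all of the work reduces to (3) and the symmetry (4).

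First I would dispose of the easy items. Property (1) is immediate, since no block meets $\emptyset$ and hence the defining union is empty. For (2) I would use that $\mathcal{C}$ is a covering: every $x\in X$ lies in some block $K\in\mathcal{C}$, so $K\cap X\neq\emptyset$ and $x\in K\subseteq SH(X)$; equivalently $x\in I(x)\subseteq SH(X)$. Property (3) is the crucial additivity and follows at once from the point-set form $SH(X)=\bigcup\{I(x):x\in X\}$, since unions distribute over unions: $SH(X\cup Y)=\bigcup_{z\in X\cup Y}I(z)=\bigcup_{x\in X}I(x)\cup\bigcup_{y\in Y}I(y)=SH(X)\cup SH(Y)$. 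Monotonicity (5) is then a one-line consequence of (3): if $X\subseteq Y$ then $Y=X\cup Y$, so $SH(Y)=SH(X)\cup SH(Y)\supseteq SH(X)$.

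For the symmetry (4) I would observe that $SH(\{y\})=I(y)=\bigcup\{K:y\in K\in\mathcal{C}\}$, so $x\in SH(\{y\})$ holds precisely when there is a block $K\in\mathcal{C}$ with $x,y\in K$, that is, when $x$ and $y$ share a common block. This condition is manifestly symmetric in $x$ and $y$, which gives $x\in SH(\{y\})\Leftrightarrow y\in SH(\{x\})$ directly.

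The one step that genuinely combines these pieces is the exchange property (6), which I expect to be the main obstacle and which is exactly the analogue of the matroid-closure axiom (4) of Proposition 2. Assuming $y\in SH(X\cup\{x\})-SH(X)$, I would use (3) to write $SH(X\cup\{x\})=SH(X)\cup SH(\{x\})$; since $y\notin SH(X)$, this forces $y\in SH(\{x\})$. Applying the symmetry (4) yields $x\in SH(\{y\})$, and monotonicity (5) together with $\{y\}\subseteq X\cup\{y\}$ gives $x\in SH(\{y\})\subseteq SH(X\cup\{y\})$, as required. Thus the only real content is the passage from the difference condition to membership in $SH(\{x\})$ via additivity, after which symmetry and monotonicity close the argument.
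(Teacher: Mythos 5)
Your proof is correct and, for part (6) — the only part the paper actually proves rather than cites — it follows exactly the paper's argument: use additivity (3) to reduce $y\in SH(X\cup\{x\})-SH(X)$ to $y\in SH(\{x\})$, then apply symmetry (4) and monotonicity (5). The paper simply cites (1)--(5) from the literature, whereas you supply short direct verifications, all of which are valid.
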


\begin{proof}
$(1)-(5)$ were shown in~\cite{Pomykala87Approximation,Zakowski83Approximations,ZhangLiWu10OnAxiomatic}.
Here we prove only (6).
According to (3), we know $SH(X\bigcup \{x\})-SH(X)=SH(X)$ $\bigcup SH(\{x\})-SH(X)=SH(\{x\})-SH(X)$.
If $y\in SH(X\bigcup \{x\})-SH(X)$, then $y\in SH(\{x\})$.
According to (4) and (5), we have $x\in SH(\{y\})\subseteq SH(X\bigcup \{y\})$.
\end{proof}

We find that the idempotent of $SH$ is not valid, so what is the conditions that guarantee it holds for $SH$?
We have the following conclusion.

\begin{proposition}
Let $\mathcal{C}$ be a covering. $SH(SH(X))=SH(X)$ if and only if $\{I(x):x\in E\}$ induced by $\mathcal{C}$ forms a partition.
\end{proposition}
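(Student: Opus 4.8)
The plan is to reduce the idempotency of $SH$ to a statement about the single-point images $I(x)$, and then recognize that the required condition is exactly that the block-sharing relation is transitive. First I would record the identity $SH(\{x\})=I(x)$, which is immediate from the definition of $SH$. Since $SH$ is additive by Proposition 9(3), both $SH$ and $SH\circ SH$ are determined by their values on singletons; consequently $SH(SH(X))=SH(X)$ holds for every $X\subseteq E$ if and only if it holds for every singleton, i.e. if and only if $SH(I(x))=I(x)$ for all $x\in E$. Because $SH$ is extensive (Proposition 9(2)) we always have $I(x)\subseteq SH(I(x))$, so the whole statement comes down to the containment $SH(I(x))\subseteq I(x)$ for every $x$, equivalently $I(z)\subseteq I(x)$ whenever $z\in I(x)$.

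For the sufficiency ($\Leftarrow$), suppose $\{I(x):x\in E\}$ is a partition. Fix $x$ and take any $z\in I(x)$. Since $z\in I(z)$ as well, the point $z$ lies in both blocks $I(z)$ and $I(x)$; as the family is a partition, a point lies in a unique block, so $I(z)=I(x)$. Hence $SH(I(x))=\bigcup_{z\in I(x)}I(z)=I(x)$, giving idempotency.

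For the necessity ($\Rightarrow$), assume $SH(SH(X))=SH(X)$ for all $X$; specializing to singletons yields $SH(I(x))=I(x)$ for every $x$. The family $\{I(x)\}$ already covers $E$ (since $x\in I(x)$), so it suffices to show that any two of its members that meet must coincide. Suppose $w\in I(x)\bigcap I(y)$. From $w\in I(x)=SH(\{x\})$ and the symmetry of Proposition 9(4) we get $x\in SH(\{w\})=I(w)$. Monotonicity (Proposition 9(5)) together with $w\in I(y)$ gives $I(w)=SH(\{w\})\subseteq SH(I(y))=I(y)$, so $x\in I(y)$; applying the same two tools once more, $I(x)=SH(\{x\})\subseteq SH(I(y))=I(y)$. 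The symmetric argument starting from $w\in I(y)$ yields $I(y)\subseteq I(x)$, whence $I(x)=I(y)$. Thus distinct members of $\{I(x)\}$ are disjoint, and the family is a partition.

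I expect the necessity direction to be the main obstacle: extensivity and monotonicity alone only give inclusions between the intersecting blocks, and the chaining through the symmetry property of Proposition 9(4) is what upgrades ``the blocks meet'' to ``the blocks are equal.'' Conceptually this is the statement that $\{I(x)\}$ is a partition precisely when the relation ``$x$ and $y$ share a covering block'' is transitive, the reflexivity and symmetry of that relation being automatic.
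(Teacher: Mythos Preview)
Your proof is correct and follows essentially the same route as the paper's. The only organizational difference is that you first reduce the question to singletons via additivity (Proposition~9(3)) and then argue on the $I(x)$'s, whereas the paper handles the ``$\Leftarrow$'' direction for general $X$ directly; in the ``$\Rightarrow$'' direction both arguments specialize to singletons and chain through a common point of two overlapping blocks using the symmetry $z\in I(x)\Leftrightarrow x\in I(z)$ (you cite Proposition~9(4), the paper reads it off the definition of $I$).
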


\begin{proof}
"$\Leftarrow$": According to (2), (5) of Proposition 9, we have $SH(X)\subseteq SH(SH(X))$.
Now we prove $SH(SH(X))\subseteq SH(X)$.
For all $x\in SH(SH(X))$, there exists $y\in SH(X)$ such that $x\in I(y)$.
Since $y\in SH(X)$, there exists $z\in X$ such that $y\in I(z)$.
According to the definition of $I(y)$, we know $y\in I(y)$, thus $I(z)\bigcap I(y)\neq \emptyset$.
For $\{I(x):x\in E\}$ forms a partition, $I(z)=I(y)$.
Since $x\in I(y)$, $x\in I(z)$, that is, $x\in SH(X)$, thus $SH(SH(X))\subseteq SH(X)$.

"$\Rightarrow"$: In order to prove $\{I(x):x\in E\}$ forms a partition, we need to prove that for all $x, y \in E$, if $I(x)\bigcap I(y)\neq \emptyset$, then $I(x)=I(y)$.
If $I(x)\bigcap I(y)\neq \emptyset$, then there exists $z\in I(x)\bigcap I(y)$.
For $SH(SH(\{x\}))=\bigcup\{I(u):u\in I(x)\}$ and $z\in I(x)$, then $I(z)\subseteq SH(SH(\{x\}))=SH(\{x\})=I(x)$.
Based on the definition of $I(z)$ and $z\in I(x)$, we have $x\in I(z)$, thus $I(x)\subseteq SH(SH(\{z\}))=SH(\{z\})=I(z)$.
Hence, $I(x)=I(z)$.
Similarly, we can obtain $I(y)=I(z)$, thus $I(x)=I(z)=I(y)$.
\end{proof}

From the above proposition, it is easy to obtain the following theorem.

\begin{theorem}
Let $\mathcal{C}$ be a covering. $SH$ is a closure operator of matriod if and only if $\{I(x):x\in E\}$ induced by $\mathcal{C}$ forms a partition.
\end{theorem}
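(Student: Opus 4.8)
The plan is to prove Theorem~6 by invoking the closure axiom characterization of matroid closure operators (the proposition stated in the Preliminaries that lists conditions (1)--(4)) together with Proposition~10, which already settles the idempotency question. The key observation is that Proposition~9 has already verified almost all of the matroid closure axioms for $SH$: axiom (1) (extensivity, $X\subseteq SH(X)$) is Proposition~9(2), axiom (2) (monotonicity) is Proposition~9(5), and the exchange axiom (4) is exactly Proposition~9(6). The only remaining closure axiom is idempotency, $SH(SH(X))=SH(X)$, which Proposition~10 tells us holds \emph{if and only if} $\{I(x):x\in E\}$ forms a partition. So the entire theorem reduces to stitching these pieces together.

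\begin{proof}
According to the proposition on closure axioms, $SH$ is the closure operator of a matroid if and only if it satisfies the four conditions (1)--(4) of that proposition.

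``$\Leftarrow$'': Suppose $\{I(x):x\in E\}$ induced by $\mathcal{C}$ forms a partition. By Proposition~9(2), for all $X\subseteq E$ we have $X\subseteq SH(X)$, so condition (1) holds. By Proposition~9(5), $X\subseteq Y\subseteq E$ implies $SH(X)\subseteq SH(Y)$, so condition (2) holds. Since $\{I(x):x\in E\}$ forms a partition, Proposition~10 gives $SH(SH(X))=SH(X)$ for all $X\subseteq E$, so condition (3) holds. Finally, by Proposition~9(6), for all $x,y\in E$, $y\in SH(X\bigcup\{x\})-SH(X)$ implies $x\in SH(X\bigcup\{y\})$, so condition (4) holds. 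Hence $SH$ satisfies all four closure axioms, that is, $SH$ is a closure operator of a matroid.

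``$\Rightarrow$'': Suppose $SH$ is a closure operator of a matroid. Then $SH$ satisfies condition (3) of the closure axioms, namely $SH(SH(X))=SH(X)$ for all $X\subseteq E$. By Proposition~10, this implies that $\{I(x):x\in E\}$ induced by $\mathcal{C}$ forms a partition.
\end{proof}

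\textbf{The main obstacle} is essentially nonexistent at the level of this proof, since all the hard analytic work has been offloaded to Proposition~9 and Proposition~10; the present theorem is a bookkeeping assembly of those facts against the four-part closure axiom. The only point requiring genuine care is making sure that each of axioms (1), (2), and (4) is identified with the correct clause of Proposition~9 and that these three hold \emph{unconditionally} (they do not require the partition hypothesis), so that the partition condition is forced to be equivalent to idempotency alone. If any of the extensivity, monotonicity, or exchange properties had secretly depended on the partition assumption, the biconditional would not reduce so cleanly; verifying in Proposition~9 that they hold for an arbitrary covering is what guarantees the theorem's ``if and only if'' pivots solely on Proposition~10.
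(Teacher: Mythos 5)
Your proof is correct and is essentially the same as the paper's, which states only that the theorem ``comes from (2), (5) and (6) of Proposition 2, 9 and 10''; you have simply written out in full the assembly of the closure axioms from Proposition 9 (extensivity, monotonicity, exchange) with the idempotency equivalence of Proposition 10. No gaps.
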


\begin{proof}
It comes from (2),(5) and (6) of Proposition 2, 9 and 10.
\end{proof}

\begin{definition}
Let $\mathcal{C}$ be a covering. If $\{I(x):x\in E\}$ induced by $\mathcal{C}$ forms a partition, then we define $\mathcal{I}^{'}=\{I\subseteq E: |I\bigcap I(x_{i})|\leq 1,\forall i\in \{1,2,\cdots,s\}\}$.
\end{definition}

As we know, if $\{I(x):x\in E\}$ forms a partition, then $M(E, \mathcal{I}^{'})$ is a matroid.
Under this condition, we know that $SH$ is a closure operator of a matroid, thus it can determine a matroid, and its independent set is established as follows.
\begin{center}
 $\mathcal{I}_{SH}(\mathcal{C})=\{I\subseteq E: \forall x\in I, x\notin SH(I-\{x\})\}$.
\end{center}
The following proposition shows $M(E, \mathcal{I}_{SH})=M(E, \mathcal{I}^{'})$ under the condition that $\{I(x):x\in E\}$ forms a partition.

\begin{proposition}
Let $\mathcal{C}$ be a covering. If $\{I(x):x\in U\}$ induced by $\mathcal{C}$ forms a partition, then $M(E, \mathcal{I}_{SH}(\mathcal{C}))$ is a matroid and $\mathcal{I}_{SH}(\mathcal{C})=\mathcal{I}^{'}$.
\end{proposition}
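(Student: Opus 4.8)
The plan is to prove the set equality $\mathcal{I}_{SH}(\mathcal{C})=\mathcal{I}^{'}$ first; once this is in hand, the matroid claim follows immediately, since the paragraph preceding the proposition already records that $M(E,\mathcal{I}^{'})$ is a matroid whenever $\{I(x):x\in E\}$ forms a partition. So the real work is the identification of the two families of independent sets.

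The key preliminary observation I would establish is a clean description of neighborhood membership under the partition hypothesis: for all $x,y\in E$, one has $x\in I(y)$ if and only if $I(x)=I(y)$. Indeed, $x\in I(x)$ always holds, because $\mathcal{C}$ is a covering so $x$ lies in some block $K\in\mathcal{C}$ and then $K\subseteq I(x)$. Hence if $x\in I(y)$, the element $x$ lies in both blocks $I(x)$ and $I(y)$ of the partition $\{I(z):z\in E\}$; since the members of a partition are pairwise disjoint and $x$ belongs to exactly one of them, this forces $I(x)=I(y)$. The converse is immediate from $x\in I(x)$.

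With this equivalence I would unfold both sides. Since $SH(I-\{x\})=\bigcup\{I(y):y\in I-\{x\}\}$, the defining condition $x\notin SH(I-\{x\})$ says $x\notin I(y)$ for every $y\in I-\{x\}$, which by the observation means $I(x)\neq I(y)$ for all such $y$. Thus $I\in\mathcal{I}_{SH}(\mathcal{C})$ exactly when no two distinct elements of $I$ share a neighborhood, i.e. lie in the same block. On the other side, writing the distinct blocks as $I(x_{1}),\ldots,I(x_{s})$, the condition $|I\bigcap I(x_{i})|\leq 1$ for every $i$ states precisely that $I$ meets each block in at most one element, which is again the statement that no two distinct elements of $I$ lie in the same block. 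Hence $\mathcal{I}_{SH}(\mathcal{C})=\mathcal{I}^{'}$, and the matroid assertion follows.

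I expect the main (though modest) obstacle to be the neighborhood-membership lemma: one must use the partition hypothesis carefully to convert the asymmetric-looking relation $x\in I(y)$ into the symmetric equivalence $I(x)=I(y)$, taking care that the blocks of the partition are exactly the distinct sets $I(z)$ and that each element lies in exactly one of them. After that step both descriptions collapse to the same combinatorial condition and the equality is routine.
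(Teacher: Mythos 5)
Your proof is correct and follows essentially the same route as the paper: both unfold $SH(I-\{x\})=\bigcup_{y\in I-\{x\}}I(y)$ and use $x\in I(x)$ together with the pairwise disjointness of the blocks to show that the condition ``$x\notin SH(I-\{x\})$ for all $x\in I$'' coincides with ``$|I\bigcap I(x_{i})|\leq 1$ for all $i$''. The only (harmless) difference is in the matroid claim, which you deduce from the established equality with $\mathcal{I}^{'}$ and the fact recorded just before the proposition, whereas the paper obtains it directly from the matroid closure axioms applied to $SH$; both are valid one-line justifications.
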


\begin{proof}
Let $\mathcal{I}_{cl}=\{I\subseteq E: \forall x\in I, x\notin cl(I-\{x\})\}$.
we know that if an operator $cl$ satisfies $(1)-(4)$ of Proposition 2, $M(E, \mathcal{I}_{cl})$ is a matroid.
$\{I(x):x\in U\}$ induced by $\mathcal{C}$ forms a partition, hence, $M(E, \mathcal{I}_{SH}(\mathcal{C}))$ is a matroid.
Since $SH(I)=\bigcup_{y\in I}I(y)$, $SH(I-\{x\})=\bigcup_{y\in I-\{x\}}I(y)$.
According to the definition of $I(x)$, we know $x\in I(x)$.
On one hand, for all $I\in \mathcal{I}_{SH}(\mathcal{C})$, we know that $\forall x\in I$, $x\notin SH(I-\{x\})$, that is, $\forall y\in I$ and $y\neq x$, $x\notin I(y)$.
If $I\notin \mathcal{I}^{'}$, that is, there exists $1 \leq i \leq s$ such that $|I\bigcap I(x_{i})|\geq 2$, then we may as well suppose there exist $u, v$ such that $u,v\in I(x_{i})$ and $u,v\in I$.
Since $u\in I(u), v\in I(v)$ and $\{I(x):x\in E\}$ forms a partition, $I(u)=I(v)=I(x_{i})$.
Based on that, we know there exists $u\in I$ and $u\neq v$ such that $u\in I(v)$, that implies contradiction. Hence, $I\in \mathcal{I}^{'}$, that is, $\mathcal{I}_{SH}(\mathcal{C})\subseteq\mathcal{I}^{'}$.
On the other hand, if $I\notin \mathcal{I}_{SH}(\mathcal{C})$, then there exists $x\in I$ such that $x\in SH(I-\{x\})=\bigcup_{y\in I-\{x\}}I(y)$. That implies that there exists $y\in I$ and $y\neq x$ such that $x\in I(y)$.
Since $x\in I(x)$ and $\{I(x):x\in U\}$ forms a partition,  $I(x)=I(y)$.
Thus $x,y\in I\bigcap I(x)$, that implies $|I\bigcap I(x)|\geq 2$, i.e., $I\notin \mathcal{I}^{'}$.
Hence, $\mathcal{I}^{'}\subseteq \mathcal{I}_{SH}(\mathcal{C})$.
\end{proof}

In the following proposition, we study some properties of $M(E, \mathcal{I}_{SH})$.

\begin{proposition}
Let $\mathcal{C}$ be a covering. If $\{I(x_{1}),I(x_{2}),\cdots,I(x_{s})\}$ induced by $\mathcal{C}$ forms a partition and $M(E,\mathcal{I}_{SH}(\mathcal{C}))$ is the matriod induced by $SH$, then\\
(1) $X$ is a base of $M(E,\mathcal{I}_{SH}(\mathcal{C}))$ if and only if $|X\bigcap I(x_{i})|=1$ for all $i\in \{1,2,\cdots,s\}$. Moreover, $M(E,\mathcal{I}_{SH}(\mathcal{C}))$ has $|I(x_{1})||I(x_{2})|\cdots |I(x_{s})|$ bases.\\
(2) For all $X\subseteq E$, $r_{SH}(X)=|\{I(x_{i}):I(x_{i})\bigcap X\neq \emptyset,i=1,2,\cdots,s\}|$.\\
(3) $X$ is a dependent set of $M(E,\mathcal{I}_{SH}(\mathcal{C}))$ if and only if there exists $I(x_{i})$ such that $|I(x_{i})\bigcap X|>1$.\\
(4) $X$ is a circuit of $M(E,\mathcal{I}_{SH}(\mathcal{C}))$ if and only if there exists $I(x_{i})$ such that $X\subseteq I(x_{i})$ and $|X|=2$.
\end{proposition}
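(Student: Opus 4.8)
The plan is to base everything on the preceding proposition, which, under the standing hypothesis that $\{I(x):x\in E\}$ forms a partition, identifies $\mathcal{I}_{SH}(\mathcal{C})$ with $\mathcal{I}'=\{I\subseteq E:|I\bigcap I(x_{i})|\leq 1,\ \forall i\in\{1,2,\cdots,s\}\}$. In other words, $M(E,\mathcal{I}_{SH}(\mathcal{C}))$ is precisely the partition matroid of $\{I(x_{1}),I(x_{2}),\cdots,I(x_{s})\}$, so each of the four claims is a standard structural fact about such a matroid. I would prove $(3)$ first, since the others flow from it. By the identification $\mathcal{I}_{SH}(\mathcal{C})=\mathcal{I}'$, the set $X$ is independent exactly when $|X\bigcap I(x_{i})|\leq 1$ for every $i$; negating this statement gives immediately that $X$ is dependent if and only if there is some $i$ with $|I(x_{i})\bigcap X|>1$, which is $(3)$.

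Next I would derive $(4)$ from $(3)$. If $X$ is a circuit it is dependent, so by $(3)$ some block $I(x_{i})$ satisfies $|X\bigcap I(x_{i})|\geq 2$; picking two elements $u,v$ of $X\bigcap I(x_{i})$, the set $\{u,v\}$ is already dependent by $(3)$ and is contained in $X$, so minimality forces $X=\{u,v\}\subseteq I(x_{i})$ with $|X|=2$. Conversely, any two-element subset of a single block is dependent by $(3)$, while its only proper nonempty subsets are singletons, which are independent; hence it is a minimal dependent set, i.e. a circuit. For $(1)$, a base is a maximal independent set: if $|X\bigcap I(x_{i})|\leq 1$ for all $i$ but some block $I(x_{j})$ misses $X$, then, since $I(x_{j})\neq\emptyset$, any element of $I(x_{j})$ may be adjoined to $X$ while preserving independence, so $X$ is not maximal; thus a base must meet every block in exactly one point, and conversely such an $X$ admits no enlargement. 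The count $|I(x_{1})||I(x_{2})|\cdots|I(x_{s})|$ then follows because a base is obtained by choosing one representative independently from each of the $s$ disjoint blocks.

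Finally, for $(2)$ I would compute $r_{SH}(X)=max\{|I|:I\subseteq X,\ I\in\mathcal{I}'\}$ directly: any independent subset of $X$ contains at most one point of each block, so $r_{SH}(X)\leq|\{I(x_{i}):I(x_{i})\bigcap X\neq\emptyset,\ i=1,2,\cdots,s\}|$, and selecting one point of $X$ from each block that $X$ meets yields an independent subset of $X$ attaining this bound, giving equality. I do not expect a genuine obstacle here; the only points requiring care are the reliance on each block $I(x_{i})$ being nonempty, which is used both in the maximality argument of $(1)$ and in the attainability argument of $(2)$, and the bookkeeping that $I(x_{1}),\cdots,I(x_{s})$ are exactly the distinct parts of the partition, so that the product and the cardinality count range over genuinely distinct, pairwise disjoint sets.
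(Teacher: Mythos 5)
Your proposal is correct and follows essentially the same route as the paper: both reduce everything to the identification $\mathcal{I}_{SH}(\mathcal{C})=\mathcal{I}'$ from the preceding proposition, so that $M(E,\mathcal{I}_{SH}(\mathcal{C}))$ is the partition matroid of the blocks $I(x_{i})$, and then verify each of the four claims directly from that description. The only differences are cosmetic: you prove the parts in a different order, and in (2) you argue attainability of the bound directly where the paper argues by contradiction from a maximal independent subset.
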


\begin{proof}
(1) According to the definition of base of a matroid, we know that $X$ is a base of $M(E,\mathcal{I}_{SH}(\mathcal{C}))$ $\Leftrightarrow$ $X\in max(\mathcal{I}_{SH}(\mathcal{C}))$ $\Leftrightarrow$ $|X\bigcap I(x_{i})|=1$ for all $i\in \{1,2,\cdots,s\}$ according to the definition of $\mathcal{I}_{SH}(\mathcal{C})$.
Since $X$ is a base of $M(E,\mathcal{I}_{SH}(\mathcal{C}))$ and $\{I(x_{1}),\\I(x_{2}), \cdots, I(x_{s})\}$ are different, $M(E,\mathcal{I}_{SH}(\mathcal{C}))$ has $|I(x_{1})||I(x_{2})|\cdots |I(x_{s})|$ bases.

(2) According to the definition of rank function, we know $r_{SH}(X)=|B_{X}|=|\{I(x_{i}):|B_{X}\bigcap I(x_{i})|=1\}|\leq |\{I(x_{i}): X\bigcap I(x_{i})\neq \emptyset\}|$, where $B_{X}$ is a maximal independent set included in $X$.
Now we just need to prove the inequality $|\{I(x_{i}):|B_{X}\bigcap I(x_{i})|=1\}|< |\{I(x_{i}): X\bigcap I(x_{i})\neq \emptyset\}|$ dose not hold;
otherwise, there exists $1\leq i\leq s$ such that $I(x_{i})\bigcap X\neq\emptyset$ and $I(x_{i})\bigcap B_{X}=\emptyset$.
Thus there exists $e_{i}\in I(x_{i})\bigcap X$ such that $B_{X}\bigcup \{e_{i}\}\subseteq X$ and $B_{X}\bigcup \{e_{i}\}\in \mathcal{I}_{SH}(\mathcal{C})$.
That contradicts the assumption that $B_{X}$ is a maximal independent set included in $X$.
Hence, $r_{SH}(X)=|\{I(x_{i}):I(x_{i})\bigcap X\neq \emptyset,i=1,2,\cdots,s\}|$.

(3) According to the definition of dependent set, we know that $X$ is a dependent set $\Leftrightarrow$ $X\notin \mathcal{I}_{SH}(\mathcal{C})$ $\Leftrightarrow$ there exists $1 \leq i\leq s$ such that $|X\bigcap I(x_{i})|> 1$.

(4) "$\Rightarrow$": As we know, a circuit is a minimal dependent set.
$X$ is a circuit of $M(E,\mathcal{I}_{SH}(\mathcal{C}))$, there exists $1\leq i\leq s$ such that $|I(x_{i})\bigcap X|=2$.
Now we just need to prove $|X|=2$; otherwise, we may as well suppose $X=\{x,y,z\}$ where $x,y\in I(x_{i})\bigcap X$.
Thus we can obtain $|(X-\{z\})\bigcap I(x_{i})|=2$, that is, $X-\{z\}\notin \mathcal{I}_{SH}(\mathcal{C})$.
That contradicts the minimality of circuit.
Combining $|X|=2$ with $|I(x_{i})\bigcap X|=2$, we have $X\subseteq I(x_{i})$.

"$\Leftarrow$": Since $|X|=2$, we may as well suppose $X=\{x,y\}$.
$|X\bigcap I(x_{i})|=2$ because there exists $I(x_{i})$ such that $X\subseteq I(x_{i})$, thus $X$ is a dependent set.
For all $1\leq j\leq s$, $|\{x\}\bigcap I(x_{j})|\leq 1$ and $|\{y\}\bigcap I(x_{j})|\leq 1$.
That implies $\{x\}$ and $\{y\}$ are independent sets, hence $X$ is a circuit of $M(E,\mathcal{I}_{SH}(\mathcal{C}))$.
\end{proof}

We denote $\mathcal{L}_{SH}(M(\mathcal{C}))$ as the set of all closed sets of $M(U,\mathcal{I}_{SH}(\mathcal{C}))$.
When $\{I(x_{1}),\\
I(x_{2}), \cdots,I(x_{s}\}$ forms a partition, for all $X,Y\in \mathcal{L}_{SH}(M(\mathcal{C}))$, $X\bigwedge Y=X\bigcap Y$, $X\bigvee Y=SH(X\bigcup Y)=SH(X)\bigcup SH(Y)=X\bigcup Y$ and $SH(\emptyset)=\emptyset$.

\begin{proposition}
If $\{I(x_{1}),I(x_{2}),\cdots,I(x_{s}\}$ forms a partition, then\\
(1) $\{I(x_{1}),I(x_{2}),\cdots,I(x_{s})\}$ are all atoms of $\mathcal{L}({\mathcal{M}_{SH}})$.\\
(2) There dose not exist $z\in U$ such that $x,y\in I(z)$ if and only if $SH(\{x,y\})$ covers $SH(\{x\})$ or $SH(\{y\})$.\\
(3) $\forall 1\leq i,j \leq s,(I(x_{i}),I(x_{j}))$ is a modular pair of $\mathcal{L}_{SH}(M(\mathcal{C}))$.\\
(4) $\forall 1\leq i\leq s,I(x_{i})$ is a modular element of $\mathcal{L}_{SH}(M(\mathcal{C}))$.
\end{proposition}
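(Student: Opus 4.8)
The plan is to recognize that, under the standing hypothesis, the matroid $M(E,\mathcal{I}_{SH}(\mathcal{C}))$ is nothing other than the transversal matroid induced by the partition $\mathcal{P}'=\{I(x_{1}),I(x_{2}),\cdots,I(x_{s})\}$. Indeed, Proposition 11 gives $\mathcal{I}_{SH}(\mathcal{C})=\mathcal{I}'=\{I\subseteq E:|I\bigcap I(x_{i})|\leq 1,\forall i\}$, and this is exactly the family of partial transversals of $\mathcal{P}'$. Hence $\mathcal{L}_{SH}(M(\mathcal{C}))$ is the closed-set lattice of a partition matroid, and the entire apparatus of Section~\ref{S:Thepropertiesoftheclosedsetlatticeofmatroidinducedbycovering} transfers. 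In particular, part (1) is immediate from Corollary 1 applied to $\mathcal{P}'$: the atoms of the closed-set lattice of the transversal matroid of $\mathcal{P}'$ are precisely the blocks $I(x_{1}),\cdots,I(x_{s})$. If one prefers a direct argument, I would verify that each $I(x_{i})$ is closed, since $cl_{SH}(I(x_{i}))=SH(I(x_{i}))=\bigcup_{y\in I(x_{i})}I(y)=I(x_{i})$, and that $r_{SH}(I(x_{i}))=1$ by the rank formula of Proposition 12(2); by Lemma 3 the height equals the rank, so each $I(x_{i})$ is a height-one, i.e.\ atomic, element, and conversely every atom equals $cl_{SH}(\{x\})=SH(\{x\})=I(x)$ for some $x$.

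For part (2) I would argue directly from the three explicit closures $SH(\{x\})=I(x)$, $SH(\{y\})=I(y)$ and $SH(\{x,y\})=I(x)\bigcup I(y)$. Since $\mathcal{P}'$ is a partition, the hypothesis that no $z\in U$ satisfies $x,y\in I(z)$ is equivalent to $I(x)\neq I(y)$. In that case the rank formula of Proposition 12(2) gives $r_{SH}(I(x)\bigcup I(y))=2$, whereas $r_{SH}(I(x))=r_{SH}(I(y))=1$; because $\mathcal{L}_{SH}(M(\mathcal{C}))$ is geometric and hence satisfies the Jordan-Dedekind chain condition, a strict inclusion of closed sets whose ranks (equivalently heights, by Lemma 3) differ by one is precisely a covering. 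Thus $SH(\{x,y\})$ covers both $SH(\{x\})$ and $SH(\{y\})$, which establishes the implication from the non-existence of a common block to the covering statement, together with the disjunction. For the reverse implication I would use the contrapositive: if some $z$ has $x,y\in I(z)$ then $I(x)=I(y)$, so $SH(\{x,y\})=SH(\{x\})=SH(\{y\})$, and since a set cannot cover itself, $SH(\{x,y\})$ covers neither.

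Finally, parts (3) and (4) follow at once from Corollary 3 applied to the partition $\mathcal{P}'$, whose atoms are exactly the $I(x_{i})$: every pair $(I(x_{i}),I(x_{j}))$ is a modular pair and every $I(x_{i})$ is a modular element of $\mathcal{L}_{SH}(M(\mathcal{C}))$. Should a self-contained derivation be wanted, I would instead invoke Theorem 3 and check $r_{SH}(I(x_{i})\bigcup A)+r_{SH}(I(x_{i})\bigcap A)=r_{SH}(I(x_{i}))+r_{SH}(A)$ for every closed set $A$, using the rank formula of Proposition 12(2) and splitting into the comparable and incomparable cases exactly as in the proof of Theorem 4. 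The only genuine step is the opening identification of $M(E,\mathcal{I}_{SH}(\mathcal{C}))$ with the transversal matroid of $\mathcal{P}'$; once that is secured, (1), (3) and (4) are direct transfers from Section~\ref{S:Thepropertiesoftheclosedsetlatticeofmatroidinducedbycovering}, and (2) reduces to the short rank-and-covering computation above, which I expect to be the most delicate piece because of the need to handle the disjunction and to justify the covering relation through semimodularity.
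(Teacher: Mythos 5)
Your proposal is correct and follows essentially the same route as the paper: the paper's (very terse) proof likewise reduces everything to the identification of $M(E,\mathcal{I}_{SH}(\mathcal{C}))$ with the transversal matroid of the partition $\{I(x_{1}),\cdots,I(x_{s})\}$ via Proposition 11, and then cites Corollary 1 for (1), Proposition 4 (the covering criterion for $cl(\{x,y\})$) together with Theorem 8 for (2), and Corollary 3 for (3) and (4). Your write-up simply supplies the rank-and-height details that the paper leaves implicit.
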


\begin{proof}
(1) comes from Corollary 1, Theorem 8 and Proposition 11.
Based on Proposition 4 and Theorem 8, we can obtain (2).
According to Corollary 3, Theorem 8 and proposition 11, it is easy to obtain (3) and (4).
\end{proof}

Based on Theorem 8, we know that the condition on which $SH$ becomes a closure operator of a matroid is that $\{I(x):x\in E\}$ forms a partition.
Proposition 14 and 15 below show what kinds of coverings can satisfy this condition.

\begin{lemma}
Let $\mathcal{C}$ be a covering, $K\in \mathcal{C}$.
If $K$ is an immured element, then $I(x)$ is the same in $\mathcal{C}$ as in $\mathcal{C}-\{K\}$.
\end{lemma}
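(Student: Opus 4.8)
The plan is to unfold the definition of the indiscernible neighborhood on each of the two coverings and show that the two unions coincide for every point. Write $I_{\mathcal{C}}(x)=\bigcup\{L\in\mathcal{C}:x\in L\}$ and $I_{\mathcal{C}-\{K\}}(x)=\bigcup\{L\in\mathcal{C}-\{K\}:x\in L\}$. Before comparing them I would record that $\mathcal{C}-\{K\}$ is genuinely a covering of $E$: since $K$ is immured there is a $K'\in\mathcal{C}$ with $K\subset K'$, so every point of $K$ still lies in $K'$ and hence $\bigcup(\mathcal{C}-\{K\})=E$. This makes $I_{\mathcal{C}-\{K\}}(x)$ well defined and, crucially, fixes $K'$ as the witnessing superset to be used below.

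The core is a two-case argument on whether $x$ belongs to $K$. If $x\notin K$, then $K$ is not among the blocks containing $x$, so the families $\{L\in\mathcal{C}:x\in L\}$ and $\{L\in\mathcal{C}-\{K\}:x\in L\}$ are literally equal, and therefore so are their unions. If $x\in K$, then passing from $\mathcal{C}$ to $\mathcal{C}-\{K\}$ only deletes the single block $K$ from the family over which the union is taken, so $I_{\mathcal{C}}(x)=K\cup I_{\mathcal{C}-\{K\}}(x)$. Here I would invoke the immured superset: because $x\in K\subset K'$ and $K'\in\mathcal{C}-\{K\}$ (note $K'\neq K$, as the inclusion is strict), the block $K'$ already contributes to $I_{\mathcal{C}-\{K\}}(x)$, whence $K\subseteq K'\subseteq I_{\mathcal{C}-\{K\}}(x)$. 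Thus the contribution of $K$ is absorbed and $I_{\mathcal{C}}(x)=I_{\mathcal{C}-\{K\}}(x)$.

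The argument is elementary set theory, so there is no deep obstacle; the only point requiring care is the case $x\in K$, where one must verify both that $K'$ survives in $\mathcal{C}-\{K\}$ and that it still contains $x$, so that the deleted block's contribution to the union is recovered entirely by $K'$. Once that is in place, combining the two cases over all $x\in E$ shows that $I(x)$ is unchanged between $\mathcal{C}$ and $\mathcal{C}-\{K\}$, which is exactly the claim.
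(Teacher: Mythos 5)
Your proof is correct and follows essentially the same route as the paper's: a case split on whether $x\in K$, with the immured superset $K'\supset K$ absorbing the deleted block's contribution to the union in the case $x\in K$. Your additional observation that $\mathcal{C}-\{K\}$ remains a covering is a harmless (and slightly more careful) supplement to what the paper leaves implicit.
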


\begin{proof}
If $x\notin K$, then $I_{\mathcal{C}}(x)=I_{\mathcal{C}-\{K\}}(x)$.
If $x\in K$, then $I_{\mathcal{C}}(x)=\bigcup_{x\in K^{'}}K^{'}\bigcup K$.
Since $K$ is an immured element, there exists $x\in K^{'}$ such that $K\subseteq K^{'}$.
Thus $I_{\mathcal{C}}(x)=\bigcup_{x\in K^{'}}K^{'}\bigcup K=\bigcup_{x\in K^{'}}K^{'}=I_{\mathcal{C}-\{K\}}(x)$.
Hence, $I(x)$ is the same in $\mathcal{C}$ as in $\mathcal{C}-\{K\}$.
\end{proof}

\begin{proposition}
Let $\mathcal{C}$ be a covering.
If $exclusion(\mathcal{C})$ is a partition, then $\{I(x):x\in E\}$ induced by $\mathcal{C}$ forms a partition.
\end{proposition}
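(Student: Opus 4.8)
The plan is to show that the indiscernible neighborhoods are left unchanged by the exclusion operation, i.e. that $I_{\mathcal{C}}(x)=I_{exclusion(\mathcal{C})}(x)$ for every $x\in E$, and then to read off the conclusion from the hypothesis that $exclusion(\mathcal{C})$ is a partition. Granting this identity, the argument finishes quickly: if $exclusion(\mathcal{C})=\{P_{1},\ldots,P_{k}\}$ is a partition, then each $x$ lies in exactly one block $P_{i}$, so the union $\bigcup\{K:x\in K\in exclusion(\mathcal{C})\}$ defining $I_{exclusion(\mathcal{C})}(x)$ collapses to the single block $P_{i}$. Hence $\{I_{\mathcal{C}}(x):x\in E\}=\{P_{1},\ldots,P_{k}\}=exclusion(\mathcal{C})$, which is a partition of $E$, exactly as required.

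The core is therefore the identity $I_{\mathcal{C}}(x)=I_{exclusion(\mathcal{C})}(x)$. Since $exclusion(\mathcal{C})\subseteq\mathcal{C}$, the inclusion $I_{exclusion(\mathcal{C})}(x)\subseteq I_{\mathcal{C}}(x)$ is immediate from the definition of $I(x)$ as a union over the blocks containing $x$. For the reverse inclusion I would argue directly: fix $x$ and take any block $K\in\mathcal{C}$ with $x\in K$. If $K$ is not immured it already belongs to $exclusion(\mathcal{C})$, so $K\subseteq I_{exclusion(\mathcal{C})}(x)$. If $K$ is immured, then by finiteness of $\mathcal{C}$ one climbs a strictly increasing chain of blocks starting at $K$ until reaching a maximal block $K^{*}$ with $K\subseteq K^{*}$; a maximal block has no strict superset and so is not immured, whence $K^{*}\in exclusion(\mathcal{C})$, and $x\in K\subseteq K^{*}$ gives $K\subseteq I_{exclusion(\mathcal{C})}(x)$. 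Taking the union over all blocks $K$ containing $x$ yields $I_{\mathcal{C}}(x)\subseteq I_{exclusion(\mathcal{C})}(x)$, hence equality. Equivalently, one can derive the identity by iterating Lemma 11, deleting the immured blocks one at a time.

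I expect the main obstacle to be precisely the passage from the single-deletion statement of Lemma 11 to the simultaneous removal of all immured blocks effected by $exclusion$: when immured blocks are deleted one after another, a block that witnessed the immuredness of another may itself be removed, so in the iterative approach one must verify that a suitable superset still survives in the final covering. The maximal-block argument above sidesteps this bookkeeping entirely, since for each immured block it exhibits a non-immured (maximal) superset that is guaranteed to lie in $exclusion(\mathcal{C})$; the finiteness of the covering is exactly what makes such a maximal superset exist.
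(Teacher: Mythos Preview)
Your proof is correct and shares the paper's overall strategy: establish the identity $I_{\mathcal{C}}(x)=I_{exclusion(\mathcal{C})}(x)$ for all $x$, then read off the partition property. The paper carries out the identity by iterating Lemma~11, adding the immured blocks $K_{1},\ldots,K_{s}$ back to $exclusion(\mathcal{C})$ one at a time and observing that $I(x)$ is unchanged at each step; your primary argument instead goes directly, pushing each block containing $x$ up to a maximal block, which necessarily lies in $exclusion(\mathcal{C})$.

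The two routes are close, but your direct argument is the cleaner one, and you are right to flag the bookkeeping issue in the iterative approach: to apply Lemma~11 at each stage one must know that the block $K_{i}$ being reinserted is immured \emph{in the intermediate covering} $exclusion(\mathcal{C})\cup\{K_{1},\ldots,K_{i}\}$, not merely in $\mathcal{C}$. The paper glosses over this point. It is, of course, true---a maximal superset of $K_{i}$ in $\mathcal{C}$ survives into $exclusion(\mathcal{C})$---but that is exactly your maximal-block observation, so your direct argument is what ultimately justifies the paper's iteration as well.
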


\begin{proof}
Since $exclusion(\mathcal{C})$ is a partition of $E$, $\{I(x):x\in E\}$ induced by $exclusion(\mathcal{C})$ forms a partition.
Suppose $\{K_{1},K_{2},\cdots, K_{s}\}$ is the set all immured elements of $\mathcal{C}$.
According to Lemma 11, we have $\forall x\in E$, $I(x)$ is the same in $exclusion(\mathcal{C})$ as in $exclusion(\mathcal{C})\bigcup \{K_{1}\}$.
Thus $\{I(x):x\in E\}$ induced by $exclusion(\mathcal{C})\bigcup\{K_{1}\}$ forms a partition.
And the rest may be deduced by analogy, we know that $\forall x\in E$, $I(x)$ is the same in $exclusion(\mathcal{C})$ as in $\mathcal{C}$, thus $\{I(x):x\in E\}$ induced by $\mathcal{C}$ forms a partition.
\end{proof}

The proposition below establishes the necessary and sufficient condition of $\{I(x):x\in E\}$ forms a partition.

\begin{proposition}
$\{I(x):x\in E\}$ induced by $\mathcal {C}$ forms a partition if and only if $\mathcal{C}$ satisfies
$(TRA)$ condition: $\forall x,y,z \in E$, $x,z\in K_{1}\in \mathcal{C}$, $y,z\in K_{2}\in \mathcal{C}$, there exists $K_{3}\in \mathcal{C}$ such that $x,y\in K_{3}$.
\end{proposition}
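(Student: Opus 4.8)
The plan is to recognize that the partition condition for $\{I(x):x\in E\}$ is nothing but the transitivity of a symmetric, reflexive relation, and that $(TRA)$ is exactly that transitivity expressed in terms of blocks. For $x,y\in E$, write $x\sim y$ to mean that some $K\in \mathcal{C}$ contains both $x$ and $y$; this is equivalent to $y\in I(x)$ and to $x\in I(y)$. Since $\mathcal{C}$ is a covering, every point lies in some block, so $\sim$ is reflexive, $x\in I(x)$ for all $x$, and $\bigcup_{x\in E}I(x)=E$; the relation is symmetric by construction. Consequently $\{I(x):x\in E\}$ forms a partition if and only if $I(x)\bigcap I(y)\neq \emptyset$ always implies $I(x)=I(y)$, and I will show this is precisely transitivity of $\sim$, i.e. precisely $(TRA)$.

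For the forward direction I would assume $\{I(x):x\in E\}$ is a partition and verify $(TRA)$. Given $x,y,z$ with $x,z\in K_{1}$ and $y,z\in K_{2}$, I note that $z\in K_{1}\subseteq I(x)$ and $z\in K_{2}\subseteq I(y)$, so $z\in I(x)\bigcap I(y)$ and hence $I(x)=I(y)$ by the partition property. Then $x\in I(x)=I(y)=\bigcup\{K:y\in K\in \mathcal{C}\}$ produces a block $K_{3}$ with $y\in K_{3}$ and $x\in K_{3}$, that is $x,y\in K_{3}$, which is exactly the conclusion required by $(TRA)$.

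For the converse, assuming $(TRA)$, it suffices to show that $I(x)\bigcap I(y)\neq \emptyset$ forces $I(x)=I(y)$. I would pick $z\in I(x)\bigcap I(y)$, obtaining blocks $K_{1}$ with $x,z\in K_{1}$ and $K_{2}$ with $y,z\in K_{2}$; applying $(TRA)$ yields a block $K_{3}$ with $x,y\in K_{3}$. To prove $I(x)\subseteq I(y)$, take any $w\in I(x)$, so some block $K_{4}$ contains $x,w$. Applying $(TRA)$ to the pair $w,x\in K_{4}$ and $y,x\in K_{3}$, with $x$ playing the role of the shared point, produces a block containing $w$ and $y$, whence $w\in I(y)$. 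By symmetry $I(y)\subseteq I(x)$, so $I(x)=I(y)$ and the partition property follows.

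The main obstacle will be the converse, and in particular the second application of $(TRA)$: one must use the block $K_{3}$ produced in the first step as a pivot to transport an arbitrary element $w\in I(x)$ into $I(y)$, being careful to match the variables of $(TRA)$ correctly, with $x$ taking the role of the common point $z$ in its hypothesis. Once this element-chasing is arranged, the set equality $I(x)=I(y)$ is immediate, and together with the reflexivity and symmetry observations noted above this establishes the equivalence.
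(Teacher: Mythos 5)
Your proof is correct and follows essentially the same route as the paper's: in the forward direction you use $z\in I(x)\bigcap I(y)$ together with the partition property to get $I(x)=I(y)$ and hence a common block $K_{3}$, and in the converse you apply $(TRA)$ a second time with $x$ as the pivot to transport an arbitrary $w\in I(x)$ into $I(y)$, which is exactly the paper's element-chasing argument. The framing via the reflexive symmetric relation $\sim$ and its transitivity is a nice conceptual gloss but does not change the substance.
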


\begin{proof}
"$\Leftarrow$": $\forall x,y\in E$, $I(x)\bigcap I(y)=\emptyset$ or $I(x)\bigcap I(y)\neq \emptyset$.
If $I(x)\bigcap I(y)\neq \emptyset$, then there exists $z\in I(x)$ and $z\in I(y)$.
According to the definition of $I(x)$ and $I(y)$, there exist $K_{1}, K_{2}$ such that $x,z\in K_{1}$ and $y,z\in K_{2}$.
According to hypothesis, we know $\exists K_{3}\in \mathcal{C}$ such that $x,y\in K_{3}$.
Now we need to prove only $I(x)=I(y)$. $\forall u\in I(x)$, there exists $K$ such that $u, x\in K$.
Since $x,y\in K_{3}$, there exists $K^{'}$ such that $u,y\in K^{'}$, that is, $u\in I(y)$, thus $I(x)\subseteq I(y)$.
Similarly, we can prove $I(y)\subseteq I(x)$.
Hence, $I(x)=I(y)$, that is, $\{I(x):x\in E\}$ forms a partition.

"$\Rightarrow$": $\forall x,y,z \in E$, $x,z\in K_{1}\in \mathcal{C}$ and $y,z\in K_{2}\in \mathcal{C}$, we can obtain $z\in I(x)$ and $z\in I(y)$. That implies $I(x)\bigcap I(y)\neq \emptyset$.
Since $\{I(x):x\in E\}$ forms a partition, $I(x)=I(y)$.
Thus there exists $K_{3}\in \mathcal{C}$ such that $x,y\in K_{3}$.
\end{proof}

\begin{theorem}
$\mathcal{C}$ satisfies $(TRA)$ condition if and only if $SH$ induced by $\mathcal{C}$ is a closure operator of a matroid.
\end{theorem}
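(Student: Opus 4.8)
The plan is to establish this theorem as a direct composition of two biconditionals that have already been proven in the excerpt, using the partition property of $\{I(x):x\in E\}$ as the common intermediate condition. First I would invoke Proposition 16, which gives the equivalence between the $(TRA)$ condition on $\mathcal{C}$ and the statement that $\{I(x):x\in E\}$ induced by $\mathcal{C}$ forms a partition. Second, I would invoke Theorem 8, which gives the equivalence between $\{I(x):x\in E\}$ forming a partition and $SH$ being a closure operator of a matroid. Since both equivalences pivot on exactly the same middle condition, namely that $\{I(x):x\in E\}$ is a partition of $E$, they can simply be concatenated.

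Concretely, the argument runs as the chain of logical equivalences: $\mathcal{C}$ satisfies $(TRA)$ if and only if $\{I(x):x\in E\}$ forms a partition (Proposition 16), and $\{I(x):x\in E\}$ forms a partition if and only if $SH$ induced by $\mathcal{C}$ is a closure operator of a matroid (Theorem 8). Transitivity of the biconditional then yields $\mathcal{C}$ satisfies $(TRA)$ if and only if $SH$ induced by $\mathcal{C}$ is a closure operator of a matroid, which is precisely the claim.

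The main point to verify, rather than a genuine obstacle, is that the intermediate condition is stated identically in both prior results so that the two equivalences glue together without any gap; inspecting the statements of Proposition 16 and Theorem 8 confirms that both refer to $\{I(x):x\in E\}$ \emph{induced by} $\mathcal{C}$ forming a partition, so no additional hypothesis or reconciliation is needed. Consequently there is no hard computational step here: the entire content has already been discharged in the proofs of Proposition 16 (via the neighborhood transitivity property) and Theorem 8 (via the closure-axiom verification in Propositions 9 and 10), and the present theorem is an immediate corollary obtained by transitivity.
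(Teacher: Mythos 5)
Your proposal is correct and matches the paper's own proof exactly: the paper derives this theorem by chaining Theorem 8 ($SH$ is a matroidal closure operator iff $\{I(x):x\in E\}$ is a partition) with the proposition establishing that $\{I(x):x\in E\}$ is a partition iff $\mathcal{C}$ satisfies $(TRA)$. The only slip is a label: in the paper that second equivalence is Proposition 15, not Proposition 16 (which concerns $XH$), but the content you describe is the right one.
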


\begin{proof}
It comes from Theorem 8 and Proposition 15.
\end{proof}
The sixth type of covering-based upper approximation was first defined in~\cite{XuWang05On}.
Xu and Wang introduced this type of covering-based rough set model and studied the relationship between it and binary relation based rough set model. Proposition 16 below gives some properties of this covering upper approximation operator.

\begin{proposition}\cite{Zhu09RelationshipBetween}
Let $\mathcal{C}$ be a covering of $U$. $XH$ has the following properties:\\
(1) $XH(U)=U$\\
(2) $XH(\emptyset)=\emptyset$\\
(3) $X\subseteq XH(X)$\\
(4) $XH(X\bigcup Y)=XH(X)\bigcup XH(Y)$\\
(5) $XH(XH(X))=XH(X)$\\
(6) $X\subseteq Y \Rightarrow XH(X)\subseteq XH(Y)$
\end{proposition}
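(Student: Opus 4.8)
The plan is to prove the six properties directly from the definition $XH(X)=\{x: N(x)\bigcap X\neq\emptyset\}$, exploiting the single structural fact that $x\in N(x)$ for every $x\in U$, which holds because $N(x)$ is an intersection of blocks each of which contains $x$. Properties (1), (2), (3), (4) and (6) will be dispatched by routine set-theoretic manipulation; the only genuinely substantial part is the idempotence (5), for which I would first isolate a monotonicity lemma on neighborhoods.

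First I would handle the easy cases. For (2) note that $N(x)\bigcap\emptyset=\emptyset$ for every $x$, so no point lies in $XH(\emptyset)$. For (3), since $x\in N(x)$, any $x\in X$ satisfies $x\in N(x)\bigcap X$, whence $x\in XH(X)$; taking $X=U$ then gives (1). For (4) I would unfold membership: $x\in XH(X\bigcup Y)$ iff $N(x)\bigcap(X\bigcup Y)\neq\emptyset$ iff $(N(x)\bigcap X)\bigcup(N(x)\bigcap Y)\neq\emptyset$ iff $x\in XH(X)$ or $x\in XH(Y)$. Property (6) then follows immediately from (4), or directly from $N(x)\bigcap X\subseteq N(x)\bigcap Y$ when $X\subseteq Y$.

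The hard part is (5). The inclusion $XH(X)\subseteq XH(XH(X))$ is just (3) applied to the set $XH(X)$, so the work lies in the reverse inclusion, and the key step is a \emph{neighborhood monotonicity lemma}: if $y\in N(x)$ then $N(y)\subseteq N(x)$. To see this, take any block $K\in\mathcal{C}$ with $x\in K$; since $y\in N(x)\subseteq K$ we have $y\in K$, so $K$ appears among the blocks in the intersection defining $N(y)$, giving $N(y)\subseteq K$; intersecting over all such $K$ yields $N(y)\subseteq N(x)$. Granting this, suppose $x\in XH(XH(X))$, so there is some $y\in N(x)$ with $y\in XH(X)$, that is, $N(y)\bigcap X\neq\emptyset$; by the lemma $N(y)\subseteq N(x)$, hence $N(x)\bigcap X\neq\emptyset$ and $x\in XH(X)$. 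This establishes $XH(XH(X))\subseteq XH(X)$ and completes (5). I expect the neighborhood monotonicity lemma to be the crux, since idempotence fails without it and it is the one place where the definition $N(x)=\bigcap\{K:x\in K\in\mathcal{C}\}$, rather than the union $I(x)$ used for $SH$, is essential.
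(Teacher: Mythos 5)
Your proof is correct in every part. The paper itself gives no proof of this proposition --- it is imported verbatim from the cited reference \cite{Zhu09RelationshipBetween} --- so there is no in-paper argument to compare against; your derivation is a valid, self-contained replacement. The one substantive step, the neighborhood monotonicity lemma ($y\in N(x)$ implies $N(y)\subseteq N(x)$), is exactly the right crux: since $N(x)=\bigcap\{K: x\in K\in\mathcal{C}\}\subseteq K$ for every block $K$ containing $x$, any $y\in N(x)$ lies in every such $K$, so the family of blocks containing $y$ includes the family containing $x$ and the intersection shrinks accordingly. From there the reverse inclusion $XH(XH(X))\subseteq XH(X)$ follows as you state, and the remaining five properties are indeed routine consequences of $x\in N(x)$ and distributivity of intersection over union. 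It is also worth noting, as you implicitly do, that this idempotence holds for $XH$ unconditionally, in contrast to $SH$ and $VH$, for which the paper must impose partition conditions (Propositions 10 and 23); your lemma makes transparent exactly why the $\bigcap$ in the definition of $N(x)$ is what buys this.
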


\begin{proposition}
$XH$ satisfies: $\forall x,y\in E$, $X\subseteq E$,
\begin{center}
$y \in XH(X \bigcup \{x\})-XH(X)\Rightarrow x\in XH (X\bigcup \{y\})$
\end{center}
if and only if $\{N(x):x\in E\}$ forms a partition.
\end{proposition}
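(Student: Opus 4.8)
The plan is to reduce the displayed exchange condition to a statement about single points using additivity, and then to recognize that the reduced condition is exactly the symmetry of the neighborhood relation. I will use three elementary facts. From the definition $XH(X)=\{z:N(z)\cap X\neq\emptyset\}$ one has the reformulation $z\in XH(\{w\})\Leftrightarrow w\in N(z)$. Also $x\in N(x)$ for every $x$, since $x$ lies in every block containing $x$. Finally, $y\in N(x)$ implies $N(y)\subseteq N(x)$: every block $K$ with $x\in K$ then also contains $y$, so $K$ is one of the blocks defining $N(y)$, whence $N(y)\subseteq K$, and intersecting over all such $K$ gives $N(y)\subseteq\bigcap\{K:x\in K\}=N(x)$. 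By Proposition 16(4), $XH(X\cup\{x\})-XH(X)=XH(\{x\})-XH(X)$, so the hypothesis $y\in XH(X\cup\{x\})-XH(X)$ holds precisely when $y\in XH(\{x\})$ and $y\notin XH(X)$, i.e.\ $x\in N(y)$; likewise the conclusion $x\in XH(X\cup\{y\})$ holds precisely when $x\in XH(X)$ or $x\in XH(\{y\})$, i.e.\ $y\in N(x)$ (using monotonicity, Proposition 16(6)).

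For the direction assuming $\{N(x):x\in E\}$ is a partition, I would first note the symmetry $x\in N(y)\Leftrightarrow y\in N(x)$: if $x\in N(y)$, then $x\in N(x)\cap N(y)$, so these two partition blocks meet and hence coincide, giving $y\in N(y)=N(x)$; the reverse implication is identical. Now given $y\in XH(X\cup\{x\})-XH(X)$, the reduction yields $x\in N(y)$, symmetry yields $y\in N(x)$, i.e.\ $x\in XH(\{y\})$, and monotonicity gives $x\in XH(\{y\})\subseteq XH(X\cup\{y\})$, which is the desired conclusion.

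For the converse, assuming the exchange condition, I would specialize to $X=\emptyset$. Since $XH(\emptyset)=\emptyset$ (Proposition 16(2)), the condition becomes $x\in N(y)\Rightarrow y\in N(x)$ for all $x,y$; interchanging $x$ and $y$ gives the full symmetry $x\in N(y)\Leftrightarrow y\in N(x)$. Combining symmetry with the monotonicity fact, $y\in N(x)$ forces both $N(y)\subseteq N(x)$ and, via $x\in N(y)$, $N(x)\subseteq N(y)$, hence $N(x)=N(y)$. Finally, if $N(x)\cap N(y)\neq\emptyset$, choosing $z$ in the intersection gives $N(x)=N(z)=N(y)$, so distinct neighborhoods are disjoint; since $x\in N(x)$ they cover $E$, and therefore $\{N(x):x\in E\}$ is a partition. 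The main obstacle is not any single hard step but the bookkeeping that isolates the single-point core of the exchange axiom via Proposition 16(4) and the correct use of $y\in N(x)\Rightarrow N(y)\subseteq N(x)$; once the symmetry of the neighborhood relation is extracted from the case $X=\emptyset$, both implications close quickly.
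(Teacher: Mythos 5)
Your proof is correct and follows essentially the same route as the paper's: reduce the exchange condition to singletons via the additivity $XH(X\cup\{x\})=XH(X)\cup XH(\{x\})$, extract the symmetry $x\in N(y)\Leftrightarrow y\in N(x)$ from the case $X=\emptyset$, and combine it with the fact that $z\in N(y)$ implies $N(z)\subseteq N(y)$ to force intersecting neighborhoods to coincide. If anything, your write-up is tidier than the paper's (whose forward direction contains a couple of typographical slips), and you make explicit the covering step $x\in N(x)$ that the paper leaves tacit.
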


\begin{proof}
$\Rightarrow$: $\forall x,y\in E$, if $N(x)\bigcap N(y)\neq \emptyset$, then there exists $z\in N(x)\bigcap N(y)$.
Let $X=\emptyset$.
According to $(2)$ of Proposition 16, we know that if $y\in XH(\{x\})$ then $x\in XH(y)$, that is, if $x\in N(y)$ then $y\in N(x)$.
Since $z\in N(y)$, $N(z)\subseteq N(y)$.
According to the assumption, we also have $y\in N(z)$, that is, $N(y)\subseteq N(x)$.
Thus $N(x)=N(z)$.
Similarly, $z\in N(y)$, we have $N(z)=N(y)$.
Thus $N(x)=N(z)=N(y)$.
Hence, $\{N(x):x\in E\}$ forms a partition.

$\Leftarrow$: Since $XH(X \bigcup Y)=XH(X)\bigcup XH(Y)$ for all $X,Y\subseteq U$, $y\in XH(X\bigcup \{x\})\\-XH(X)=XH(X)\bigcup XH\{x\})-XH(X)=XH\{x\})-XH(X)$.
Now we prove $x\in XH(\{y\})$.
Since $y\in XH(\{x\})$, $x\in N(y)$.
Because the fact that $\{N(x):x\in E\}$ forms a partition, $x\in N(x)$ and $x\in N(y)$, we have $N(x)=N(y)$, thus $y\in N(x)$, that is, $x\in XH(\{y\})$.
Hence $x\in XH(\{y\})\subseteq XH(X\bigcup \{y\})$.
\end{proof}

\begin{theorem}
Let $\mathcal{C}$ be a covering. $\{N(x):x\in E\}$ induced by $\mathcal{C}$ forms a partition if and only if $XH$ is a closure operator of matroid.
\end{theorem}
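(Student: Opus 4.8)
The plan is to read off both directions simultaneously from the closure axioms of a matroid, that is, from conditions (1)--(4) of Proposition 2, by matching each axiom against the already-established properties of $XH$. The first observation is that three of the four axioms hold for $XH$ \emph{unconditionally}: extensivity (axiom (1)) is exactly part (3) of Proposition 16, monotonicity (axiom (2)) is part (6), and idempotence (axiom (3)) is part (5). None of these three uses any hypothesis on $\mathcal{C}$, so whether or not $XH$ is a matroid closure operator is governed entirely by the remaining exchange axiom (4).

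Next I would invoke Proposition 17, which states precisely that $XH$ satisfies the exchange property ``$y\in XH(X\bigcup\{x\})-XH(X)\Rightarrow x\in XH(X\bigcup\{y\})$'' if and only if $\{N(x):x\in E\}$ forms a partition. This exchange property is verbatim axiom (4) of Proposition 2 with $cl_{M}$ replaced by $XH$. Combining the two observations gives the chain $XH$ is a closure operator of a matroid $\iff$ it satisfies (1)--(4) of Proposition 2 $\iff$ (since (1)--(3) always hold) it satisfies (4) $\iff$ $\{N(x):x\in E\}$ forms a partition. This is the desired biconditional, and because the middle equivalence is itself an ``if and only if'', both implications fall out at once.

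Since every ingredient is supplied by earlier results, there is essentially no computational obstacle; the only point demanding care is the bookkeeping of matching each labelled closure axiom of Proposition 2 to the correct part of Proposition 16 and to Proposition 17, and of confirming that the three ``free'' axioms genuinely require no assumption on the covering, so that the equivalence is driven solely by the exchange axiom. In the write-up I would therefore simply cite parts (3), (5) and (6) of Proposition 16 together with Proposition 2 and Proposition 17, mirroring the one-line proof of Theorem 8 for the operator $SH$.
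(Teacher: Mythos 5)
Your proposal is correct and coincides with the paper's own argument, which likewise derives the theorem from parts (3), (5) and (6) of Proposition 16 (supplying closure axioms (1)--(3) of Proposition 2 unconditionally) together with Proposition 17 (supplying the exchange axiom if and only if $\{N(x):x\in E\}$ is a partition). Your added observation that the three ``free'' axioms need no hypothesis on $\mathcal{C}$, so the biconditional is carried entirely by the exchange axiom, is exactly the logic implicit in the paper's one-line proof.
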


\begin{proof}
It comes from (3), (5) and (6) of Proposition 16 and Proposition 17.
\end{proof}

\begin{definition}
Let $\mathcal{C}$ be a covering. If $\{N(x):x\in E\}$ induced by $\mathcal{C}$ forms a partition, then we define $\mathcal{I}^{''}=\{I\subseteq E: |I\bigcap N(x_{i})|\leq 1,\forall i\in \{1,2,\cdots,t\}\}$.
\end{definition}

Suppose $N(x_{1}),N(x_{2}),\cdots,N(x_{t})$ are all different neighborhoods on $E$.
As we know, if $N(x_{1}),N(x_{2}),\cdots,N(x_{t})$ forms a partition, then $XH$ is a closure operator of a matroid.
Moreover, $XH$ can determine a matroid, and its independent set is shown as follows:
\begin{center}
$\mathcal{I}_{XH}(\mathcal{C})$ $=\{I\subseteq E: \forall x\in I,x\notin XH(I-\{x\})\}$
\end{center}
Similar to the case of $SH$, we can obtain the following results.

\begin{proposition}
Let $\mathcal{C}$ be a covering. If $N(x_{1}),N(x_{2}),\cdots,N(x_{t})$ induced by $\mathcal{C}$ forms a partition, then $M(E,\mathcal{I}_{XH}(\mathcal{C})$ is a matroid and $\mathcal{I}_{XH}(\mathcal{C})=\mathcal{I}^{''}$.
\end{proposition}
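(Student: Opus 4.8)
The plan is to follow the proof of Proposition 11 almost verbatim, with $N(\cdot)$ replacing $I(\cdot)$, $XH$ replacing $SH$, and $\mathcal{I}^{''}$ replacing $\mathcal{I}^{'}$. Two claims must be settled: that $M(E,\mathcal{I}_{XH}(\mathcal{C}))$ is a matroid, and that $\mathcal{I}_{XH}(\mathcal{C})=\mathcal{I}^{''}$. For the first, I would invoke Theorem 10: since $\{N(x):x\in E\}$ forms a partition, $XH$ is the closure operator of a matroid, so it satisfies conditions (1)-(4) of Proposition 2. I then use the fact already employed in the proof of Proposition 11, namely that whenever a map $cl$ obeys these closure axioms the family $\{I\subseteq E:\forall x\in I,\ x\notin cl(I-\{x\})\}$ is exactly the collection of independent sets of the matroid whose closure is $cl$; applying it to $cl=XH$ shows $M(E,\mathcal{I}_{XH}(\mathcal{C}))$ is a matroid.

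The observation that drives the set equality is that, under the partition hypothesis, neighborhood membership is symmetric and detects equality of neighborhoods. Since $x\in N(x)$ always holds, $y\in N(x)$ gives $y\in N(x)\cap N(y)$, and disjointness of the blocks forces $N(x)=N(y)$; the converse is trivial, so $y\in N(x)\Leftrightarrow N(x)=N(y)\Leftrightarrow x\in N(y)$. Unwinding $XH(I-\{x\})=\{z:N(z)\cap(I-\{x\})\neq\emptyset\}$, this yields that $x\in XH(I-\{x\})$ iff some $y\in I$ with $y\neq x$ satisfies $N(y)=N(x)$. Both inclusions then follow by contraposition. If $I\in\mathcal{I}_{XH}(\mathcal{C})$ yet $|I\cap N(x_i)|\geq 2$ for some $i$, choosing distinct $u,v\in I\cap N(x_i)$ gives $N(u)=N(x_i)=N(v)$, contradicting $u\notin XH(I-\{u\})$; hence $\mathcal{I}_{XH}(\mathcal{C})\subseteq\mathcal{I}^{''}$. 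Conversely, if $I\notin\mathcal{I}_{XH}(\mathcal{C})$ then distinct $x,y\in I$ have $N(x)=N(y)$, so $x,y\in I\cap N(x)$ forces $|I\cap N(x)|\geq 2$ and $I\notin\mathcal{I}^{''}$; hence $\mathcal{I}^{''}\subseteq\mathcal{I}_{XH}(\mathcal{C})$.

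I expect the only real obstacle to be the lack of a pointwise formula for $XH$ analogous to $SH(I)=\bigcup_{y\in I}I(y)$. For $SH$ the condition $x\in SH(I-\{x\})$ reads off immediately as $x\in I(y)$ for some $y\in I$ with $y\neq x$, whereas the defining condition for $XH$ concerns $N(x)$ meeting $I-\{x\}$, so symmetry of neighborhood membership is not free. The partition hypothesis is precisely what restores it, through the equivalence $y\in N(x)\Leftrightarrow x\in N(y)$ established above; once that equivalence is in hand, the rest of the argument is routine bookkeeping identical to Proposition 11.
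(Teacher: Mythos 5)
Your proposal is correct and takes essentially the approach the paper intends: the paper omits this proof, stating it is similar to that of Proposition 11, and you carry out exactly that adaptation, correctly identifying and resolving the one genuine difference (that $XH$ lacks the pointwise form $\bigcup_{y\in I}I(y)$, which is compensated by the equivalence $y\in N(x)\Leftrightarrow N(x)=N(y)\Leftrightarrow x\in N(y)$ under the partition hypothesis).
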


\begin{proposition}
Let $\mathcal{C}$ be a covering. If $\{N(x_{1}),N(x_{2}),\cdots,N(x_{t})\}$ induced by $\mathcal{C}$ forms a partition and $M(E,\mathcal{I}_{XH}(\mathcal{C}))$ is the matriod induced by $XH$, then\\
(1) $X$ is a base of $M(E,\mathcal{I}_{XH}(\mathcal{C}))$ if and only if $|X\bigcap N(x_{i})|=1$ for all $1 \leq i\leq t$, and $M(E,\mathcal{I}_{XH}(\mathcal{C}))$ has $|N(x_{1})||N(x_{2}|\cdots |N(x_{s})|$ bases.\\
(2) For all $X\subseteq E$, $r_{XH}(X)=|\{N(x_{i}):N(x_{i})\bigcap X\neq \emptyset,1 \leq i\leq t\}|$.\\
(3) $X$ is a circuit of $M(E,\mathcal{I}_{XH}(\mathcal{C}))$ if and only if there exists $N(x_{i})$ such that $X\subseteq N(x_{i})$ and $|X|=2$.\\
(4) $X$ is a dependent set of $M(E,\mathcal{I}_{XH}(\mathcal{C}))$ if and only if there exists $N(x_{i})$ such that $|N(x_{i})\bigcap X|>1$.\\
(5) $\{N(x_{1}),N(x_{2}),\cdots,N(x_{t})\}$ is the set of all atoms of lattice $\mathcal{L}_{XH}(M(\mathcal{C}))$.\\
(6) There dose not exist $z\in E$ such that $x,y\in N(z)$ if and only if $XH(\{x,y\})$ covers $XH(\{x\})$ or $XH(\{y\})$.\\
(7) $\forall 1\leq i,j\leq t$, $(N(x_{i}),N(x_{j}))$ is a modular pair of lattice $\mathcal{L}_{XH}(M(\mathcal{C}))$.\\
(8) $\forall 1\leq i\leq t$, $N(x_{i})$ is a modular element of lattice $\mathcal{L}_{XH}(M(\mathcal{C}))$.
\end{proposition}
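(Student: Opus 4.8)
The plan is to transport the entire $SH$-development to $XH$, since under the standing hypothesis the two operators have the same combinatorial shape. The first thing I would record is the identification supplied by the preceding proposition: when $\{N(x_1),\dots,N(x_t)\}$ is a partition, $\mathcal{I}_{XH}(\mathcal{C})=\mathcal{I}''=\{I\subseteq E:|I\bigcap N(x_i)|\leq 1,\ \forall i\}$. This is precisely the independent-set family of the transversal matroid induced by the partition $\{N(x_1),\dots,N(x_t)\}$, so $M(E,\mathcal{I}_{XH}(\mathcal{C}))$ is that partition transversal matroid, with the blocks $N(x_i)$ playing the role of the blocks $I(x_i)$ of the $SH$ case. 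I would also note that, because $N(y)$ equals the unique block containing $y$, the operator collapses to $XH(X)=\bigcup\{N(x_i):N(x_i)\bigcap X\neq\emptyset\}$, whence $XH(\emptyset)=\emptyset$ and, on closed sets, $X\bigwedge Y=X\bigcap Y$ and $X\bigvee Y=XH(X\bigcup Y)=X\bigcup Y$.

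Parts (1), (3) and (4) then follow by a direct reading of $\mathcal{I}''$, exactly as in the $SH$ case. A base is a maximal member of $\mathcal{I}''$, i.e. a set meeting each $N(x_i)$ in exactly one point; since the blocks are pairwise disjoint the choices are independent across blocks, giving the product count $|N(x_1)|\cdots|N(x_t)|$. Negating membership of $\mathcal{I}''$ yields (4), and (3) follows because the minimal dependent sets are the two-element subsets lying inside a single block. For the rank formula (2) I would take a maximal independent subset $B_X\subseteq X$ and run the usual exchange argument: if some block met $X$ but missed $B_X$, a point of that intersection could be adjoined to $B_X$ inside $X$ without leaving $\mathcal{I}''$, contradicting maximality; hence $r_{XH}(X)=|\{N(x_i):N(x_i)\bigcap X\neq\emptyset\}|$.

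The remaining items are lattice-theoretic and I would obtain them from the machinery already set up. Each $N(x_i)$ is a closed set of rank one, so by Lemma 3 it is an atom, which gives (5); assertion (6) is the two-point specialization of the covering relation, read off from the rank formula of (2) just as the corresponding covering statement was obtained for $SH$. For (7) and (8) I would invoke Theorem 3: since $N(x_i)\bigcap N(x_j)=\emptyset$ and each block has rank one, the modularity identity $r_{XH}(N(x_i)\bigcup N(x_j))+r_{XH}(N(x_i)\bigcap N(x_j))=r_{XH}(N(x_i))+r_{XH}(N(x_j))$ reduces to checking $r_{XH}(N(x_i)\bigcup N(x_j))=2$, which the submodular inequality forces; the modular-element claim (8) then follows by the same comparable/atom/non-atom case split used for the transversal matroid earlier.

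I expect the rank identity (2) to be the only step requiring genuine argument, because it is the single place where an exchange property is used rather than a direct reading of $\mathcal{I}''$; once (2) is in hand, (5)--(8) are essentially bookkeeping through Lemma 3 and Theorem 3. The principal hazard is notational rather than mathematical: one must keep the indexing by the $t$ distinct neighborhoods consistent and not conflate a generic $N(y)$ with its block $N(x_i)$. Because the partition hypothesis makes $M(E,\mathcal{I}_{XH}(\mathcal{C}))$ literally a partition transversal matroid, every assertion is the verbatim analogue of a statement already proved for $SH$, so no new difficulty arises.
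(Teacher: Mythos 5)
Your proposal is correct and matches the paper's intent exactly: the paper omits the proof of this proposition, stating only that it is ``similar to that of Proposition 11, 12 and 13,'' and your argument is precisely that analogy carried out in detail --- identifying $M(E,\mathcal{I}_{XH}(\mathcal{C}))$ with the partition transversal matroid on the blocks $N(x_i)$ and then repeating the base/rank/circuit computations and the Lemma~3/Theorem~3 lattice arguments verbatim. No further comment is needed.
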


The proof of Proposition 18 and 19 is similar to that of Proposition 11 and Proposition 12 and 13, respectively.
So we omit the proof of them.

Similar to the case of $SH$, we also study what kind of covering can make $\{N(x):x\in E\}$ form a partition.
This paper establishes only two kinds of coverings.
As for others we can refer to \cite{YunGeBai11Axiomatization,HuXiaoZhang12Study}.

\begin{lemma}
Let $\mathcal{C}$ be a covering on $E$, $K$ reducible in $\mathcal{C}$.
$\forall x\in U$, $N(x)$ is the same in $\mathcal{C}$ as in $\mathcal{C}-\{K\}$.
\end{lemma}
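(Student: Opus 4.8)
The plan is to mirror the proof of Lemma 11, replacing the union defining $I(x)$ with the intersection defining $N(x)$ and the notion of immured element with that of reducible element. First I would record that $\mathcal{C}-\{K\}$ is indeed a covering: since $K$ is reducible, $K=\bigcup_{j}K_{j}$ for some $K_{j}\in \mathcal{C}-\{K\}$, so every point of $K$ already lies in a remaining block and the deletion of $K$ does not destroy the covering property. This is what justifies speaking of $N_{\mathcal{C}-\{K\}}(x)$ in the first place.

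Then I would split on whether $x\in K$. If $x\notin K$, the family of blocks containing $x$ is identical in $\mathcal{C}$ and in $\mathcal{C}-\{K\}$, so $N_{\mathcal{C}}(x)=N_{\mathcal{C}-\{K\}}(x)$ immediately. The substantive case is $x\in K$. Here the only extra block entering the intersection is $K$ itself, so $N_{\mathcal{C}}(x)=K\bigcap N_{\mathcal{C}-\{K\}}(x)$, which already gives $N_{\mathcal{C}}(x)\subseteq N_{\mathcal{C}-\{K\}}(x)$; the task reduces to proving the reverse inclusion, or equivalently $N_{\mathcal{C}-\{K\}}(x)\subseteq K$.

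To secure this I would exploit reducibility directly: from $K=\bigcup_{j}K_{j}$ and $x\in K$ there is some index $j$ with $x\in K_{j}$ and $K_{j}\subseteq K$, where $K_{j}\in \mathcal{C}-\{K\}$. Since $N_{\mathcal{C}-\{K\}}(x)$ is the intersection of all blocks of $\mathcal{C}-\{K\}$ that contain $x$, it is contained in this particular block, whence $N_{\mathcal{C}-\{K\}}(x)\subseteq K_{j}\subseteq K$. Combining this with $N_{\mathcal{C}}(x)=K\bigcap N_{\mathcal{C}-\{K\}}(x)$ yields $N_{\mathcal{C}}(x)=N_{\mathcal{C}-\{K\}}(x)$, which closes both cases.

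The main obstacle is precisely the reverse inclusion in the case $x\in K$: the equality $N_{\mathcal{C}}(x)=N_{\mathcal{C}-\{K\}}(x)$ would fail for an arbitrarily deleted block, and reducibility is exactly the hypothesis that supplies a sub-block $K_{j}\subseteq K$ through $x$, forcing the smaller intersection $N_{\mathcal{C}-\{K\}}(x)$ to remain inside $K$. Everything else is a routine comparison of intersections, so once that key step is in place I would keep the remaining write-up short.
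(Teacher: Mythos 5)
Your proof is correct, but it takes a genuinely different and more self-contained route than the paper's. The paper's proof is a one-line appeal to the minimal description $Md(x)$: it asserts that $Md(x)$ is unchanged when a reducible block is removed (a fact imported from the reduction theory of Zhu and Wang) and that $N(x)=\bigcap Md(x)$, so $N(x)$ is unchanged. Note that $Md(x)$ is never defined in this paper, so the paper's argument leans entirely on an external result. You instead argue directly from the definition $N(x)=\bigcap\{K': x\in K'\in\mathcal{C}\}$: you first observe that $\mathcal{C}-\{K\}$ remains a covering (a point the paper silently assumes), dispose of the case $x\notin K$ trivially, and in the case $x\in K$ reduce everything to the inclusion $N_{\mathcal{C}-\{K\}}(x)\subseteq K$, which you obtain by extracting from the reducibility decomposition $K=\bigcup_{j}K_{j}$ a block $K_{j}\in\mathcal{C}-\{K\}$ with $x\in K_{j}\subseteq K$. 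This is exactly where the hypothesis of reducibility is consumed, and your proof makes that visible, whereas the paper hides it inside the invariance of $Md(x)$. What the paper's approach buys is brevity and a parallel with its treatment of $Md$-based operators; what yours buys is an elementary, fully verifiable argument that uses only the definitions actually stated in the paper.
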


\begin{proof}
$\forall x\in E$, $Md(x)$ is the same for covering $\mathcal{C}$ and covering $\mathcal{C}-\{K\}$, so $N(x)=\bigcap Md(x)$ is the same for the covering $\mathcal{C}$ and covering $\mathcal{C}-\{K\}$.
\end{proof}

\begin{proposition}
Let $\mathcal{C}$ be a covering.
If $reduct(\mathcal{C})$ is a partition, then $\{N(x):x\in E\}$ induced by $\mathcal{C}$ is also a partition.
\end{proposition}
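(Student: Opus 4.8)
The plan is to mirror the proof of Proposition 14, replacing the role of immured elements and $exclusion(\mathcal{C})$ with reducible elements and $reduct(\mathcal{C})$, and using Lemma 12 in place of Lemma 11. The central fact I would exploit is that $reduct(\mathcal{C})$ is produced from $\mathcal{C}$ by successively deleting reducible elements, and that each such deletion leaves every neighborhood unchanged.

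First I would write the reduction as a chain $\mathcal{C}=\mathcal{C}_{0}\supseteq \mathcal{C}_{1}\supseteq\cdots\supseteq\mathcal{C}_{k}=reduct(\mathcal{C})$, where $\mathcal{C}_{i+1}=\mathcal{C}_{i}-\{K_{i}\}$ and $K_{i}$ is reducible in $\mathcal{C}_{i}$. Applying Lemma 12 at each step gives that $N(x)$ computed in $\mathcal{C}_{i}$ coincides with $N(x)$ computed in $\mathcal{C}_{i+1}$ for every $x\in E$; chaining these equalities yields that $N(x)$ induced by $\mathcal{C}$ equals $N(x)$ induced by $reduct(\mathcal{C})$ for all $x\in E$. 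This is the analog of the ``deduced by analogy'' step used for the exclusion in Proposition 14.

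Next I would observe that, when $reduct(\mathcal{C})$ is a partition $\mathcal{P}=\{P_{1},\cdots,P_{m}\}$, each point $x$ lies in exactly one block $P_{j}$, so the only covering block of $reduct(\mathcal{C})$ containing $x$ is $P_{j}$ and hence $N(x)=\bigcap \{K:x\in K\in reduct(\mathcal{C})\}=P_{j}$. Therefore $\{N(x):x\in E\}$ induced by $reduct(\mathcal{C})$ is exactly $\mathcal{P}$, which is a partition. Combining this with the previous paragraph, $\{N(x):x\in E\}$ induced by $\mathcal{C}$ is the same family $\mathcal{P}$, and is therefore a partition.

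The only real obstacle is the bookkeeping in the iterated deletion: Lemma 12 asserts that deleting an element reducible in the \emph{current} covering preserves neighborhoods, so I must be careful that $K_{i}$ is reducible in $\mathcal{C}_{i}$ rather than merely in the original $\mathcal{C}$. Since the reduct is defined precisely through such stagewise deletions, this requirement is met automatically, and no appeal to the order-independence (uniqueness) of the reduct is needed for the argument.
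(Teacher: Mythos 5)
Your proposal is correct and follows essentially the same route as the paper: both arguments iterate Lemma 12 to show that every neighborhood $N(x)$ is unchanged in passing between $\mathcal{C}$ and $reduct(\mathcal{C})$ (the paper adds the reducible elements back to the reduct one at a time, you delete them one at a time), and then use that the neighborhoods induced by the partition $reduct(\mathcal{C})$ are just its blocks. Your explicit check that $K_{i}$ is reducible in the current covering $\mathcal{C}_{i}$ is a welcome clarification of the paper's ``deduced by analogy'' step, but it is not a different method.
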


\begin{proof}
Since $reduct(\mathcal{C})$ is a partition of $E$, $\{N(x):x\in E\}$ induced by $reduct(\mathcal{C})$ forms a partition.
Suppose $\{K_{1},K_{2},\cdots, K_{s}\}$ is the set all reducible elements of $\mathcal{C}$.
According to Lemma 12, we know that $\forall x\in E$, $N(x)$ is the same in $reduct(\mathcal{C})$ as in $reduct(\mathcal{C})\bigcup \{K_{1}\}$, thus $\{N(x):x\in E\}$ induced by $reduct(\mathcal{C})\bigcup\{K_{1}\}$ forms a partition.
And the rest may be deduced by analogy, then we can obtain $\forall x\in E$, $N(x)$ is the same in $reduct(\mathcal{C})$ as in $\mathcal{C}$, thus $\{N(x):x\in E\}$ induced by $\mathcal{C}$ forms a partition.
\end{proof}

\begin{proposition}
Let $\mathcal{C}$ be a covering on $E$. If $\mathcal{C}$ satisfies the $(EQU)$ condition:
$\forall K\in \mathcal{C}$, $\forall x,y \in K$, the number of blocks which contain $x$ is equal to that of blocks which contain $y$, then $\{N(x):x\in E\}$ induced by $\mathcal{C}$ forms a partition.
\end{proposition}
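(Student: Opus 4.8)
The plan is to show that the mere relation $y \in N(x)$ already forces $N(x) = N(y)$, and then to derive the partition property by a short symmetry-and-transitivity argument. Here $N(x) = \bigcap\{K : x \in K \in \mathcal{C}\}$, and I write $\mathcal{C}_{x} = \{K \in \mathcal{C} : x \in K\}$ for the family of blocks containing $x$, so that $N(x) = \bigcap_{K \in \mathcal{C}_{x}} K$ and the number of blocks containing $x$ is exactly $|\mathcal{C}_{x}|$.

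First I would record the elementary facts needed for the ``covers'' and ``nonempty'' halves of being a partition. Since $x$ lies in every block of $\mathcal{C}_{x}$, it lies in their intersection, so $x \in N(x)$; and because $\mathcal{C}$ is a covering, $\mathcal{C}_{x} \neq \emptyset$, so each $N(x)$ is well defined and nonempty and $\bigcup_{x \in E} N(x) = E$.

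The key lemma I would prove is: if $y \in N(x)$, then $N(x) = N(y)$. Suppose $y \in N(x)$. Then $y$ belongs to every block of $\mathcal{C}_{x}$, so $\mathcal{C}_{x} \subseteq \mathcal{C}_{y}$ and hence $|\mathcal{C}_{x}| \leq |\mathcal{C}_{y}|$. Now pick any $K_{0} \in \mathcal{C}_{x}$ (which exists since $\mathcal{C}$ is a covering); then $x, y \in K_{0}$, so the $(EQU)$ condition applied to $K_{0}$ gives $|\mathcal{C}_{x}| = |\mathcal{C}_{y}|$. Combining the inclusion $\mathcal{C}_{x} \subseteq \mathcal{C}_{y}$ with equality of these finite cardinalities forces $\mathcal{C}_{x} = \mathcal{C}_{y}$, whence $N(x) = \bigcap_{K \in \mathcal{C}_{x}} K = \bigcap_{K \in \mathcal{C}_{y}} K = N(y)$.

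Finally, to conclude that $\{N(x) : x \in E\}$ is a partition I would verify the remaining disjointness property. Suppose $N(x) \cap N(y) \neq \emptyset$ and choose $z$ in the intersection; then $z \in N(x)$ gives $N(z) = N(x)$ and $z \in N(y)$ gives $N(z) = N(y)$ by the key lemma, so $N(x) = N(y)$. Thus any two members of the family are either equal or disjoint, which together with the first step is exactly the partition property. The only real content is the key lemma, and its main obstacle is the passage from $\mathcal{C}_{x} \subseteq \mathcal{C}_{y}$ to $\mathcal{C}_{x} = \mathcal{C}_{y}$; this is precisely where $(EQU)$ does its work, upgrading the one-sided inclusion of block-families into an equality of their cardinalities, which in a finite covering forces the families themselves to coincide.
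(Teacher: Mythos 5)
Your proof is correct and follows essentially the same route as the paper's: both arguments use $(EQU)$ to upgrade the inclusion of block-families (every block containing $x$ contains any $z\in N(x)$) to an equality via the cardinality count, and then conclude that intersecting neighborhoods coincide. Your packaging of this as the explicit lemma ``$y\in N(x)\Rightarrow N(x)=N(y)$'' followed by a transitivity step is just a slightly cleaner organization of the same idea.
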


\begin{proof}
$\forall x, y\in E$, if $N(x)\bigcap N(y)\neq \emptyset$, then there exists $z\in E$ such that $z\in N(x)$ and $z\in N(y)$, that is, the blocks which contain $x$ also contain $z$ and the blocks which contains $y$ also contain $z$.
Hence, there exist $K_{i}, K_{i}^{'}$ such that $x,z\in K_{i}$ and $y,z\in K_{i}^{'}$.
Without loss of generality, we suppose $\{K_{1}, K_{2}, \cdots, K_{s}\}$ is the set of all blocks which contain $x$ and $\{K_{1}^{'},K_{2}^{'},\cdots, K_{t}^{'}\}$ is the set of all blocks which contain $y$.
Since the number of blocks which contain $z$ is equal to that of blocks which contain $x$, $\{K_{1}, K_{2}, \cdots, K_{s}\}$ is the set of all blocks which contain $z$, thus $\{K_{1}^{'},K_{2}^{'},\cdots, K_{t}^{'}\}\subseteq \{K_{1}, K_{2}, \cdots, K_{s}\}$.
Hence, $N(x)\subseteq N(y)$.
Similarly, we can prove $N(y)\subseteq N(x)$.
Hence, $N(x)=N(y)$, that is, $\{N(x):x\in E\}$ forms a partition.
\end{proof}

Qin et al. first defined the seventh type of covering-based upper approximation in \cite{QinGaoPei07OnCovering}.
They also discussed the relationship between it and other types of approximations.
Proposition 22 below gives some properties of the covering upper approximation operator $VH$.

\begin{proposition}
Let $\mathcal{C}$ be a covering of $E$. $VH$ has the following properties:\\
(1) $VH(\emptyset)=\emptyset$\\
(2) $X\subseteq VH(X)$ for all $X\subseteq E$\\
(3) $VH(X\bigcup Y)=VH(X)\bigcup VH(Y)$\\
(4) $x\in VH(\{y\})\Leftrightarrow y\in VH(\{x\})$\\
(5) $X\subseteq Y\subseteq E\Rightarrow VH(X)\subseteq VH(Y)$\\
(6) $\forall x,y \in E, y \in VH(X\bigcup \{x\})-VH(X)$, then $x\in VH(X\bigcup \{y\})$
\end{proposition}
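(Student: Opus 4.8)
The plan is to prove the six properties by direct manipulation of the defining formula $VH(X)=\bigcup\{N(z):N(z)\bigcap X\neq\emptyset\}$, following the same pattern already used for $SH$ in Proposition 9. The single fact that drives almost everything is that $x\in N(x)$ for every $x\in E$: since $\mathcal{C}$ covers $E$, at least one block contains $x$, and $N(x)=\bigcap\{K:x\in K\in\mathcal{C}\}$ is the intersection of all such blocks, so $x$ lies in each of them and hence in their intersection.

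First I would dispatch (1)--(5), which are essentially bookkeeping. For (1), no neighborhood meets $\emptyset$, so the union is empty. For (2), given $x\in X$ we have $x\in N(x)$ and $N(x)\bigcap X\ni x$, so $N(x)\subseteq VH(X)$ and in particular $x\in VH(X)$. For (3) and (5), I would observe that the contributing index set $\{z:N(z)\bigcap X\neq\emptyset\}$ grows with $X$ and distributes over unions, because $N(z)\bigcap(X\bigcup Y)=(N(z)\bigcap X)\bigcup(N(z)\bigcap Y)$; monotonicity (5) and additivity (3) then follow immediately. Property (4) is the one place where some care is needed: unwinding the definition, $x\in VH(\{y\})$ says precisely that some $z$ satisfies $y\in N(z)$ and $x\in N(z)$, i.e.\ there is a neighborhood containing both $x$ and $y$; the condition $y\in VH(\{x\})$ unwinds to exactly the same statement, so the two are equivalent.

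Finally, (6) is proved exactly as for $SH$. Using additivity (3) I would rewrite $VH(X\bigcup\{x\})-VH(X)=(VH(X)\bigcup VH(\{x\}))-VH(X)=VH(\{x\})-VH(X)$, so the hypothesis $y\in VH(X\bigcup\{x\})-VH(X)$ forces $y\in VH(\{x\})$. Applying the symmetry (4) gives $x\in VH(\{y\})$, and monotonicity (5) applied to $\{y\}\subseteq X\bigcup\{y\}$ yields $x\in VH(\{y\})\subseteq VH(X\bigcup\{y\})$, as required.

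I do not expect a genuine obstacle here: the whole proposition is formally parallel to Proposition 9, and indeed parts (1)--(5) may already be available in the literature on the seventh covering approximation, so they could simply be cited. The only mildly delicate point is the symmetry in (4), whose proof hinges on the reflexivity $x\in N(x)$ together with the observation that membership of a singleton in $N(z)$ is exactly what triggers inclusion of $N(z)$ in the union; once this is in hand, (6) is a purely formal consequence of (3), (4) and (5).
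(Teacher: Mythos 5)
Your proposal is correct and follows essentially the same route as the paper: the paper cites (1)--(3) from the literature and proves (4) by the same unwinding (there exists $z$ with $x,y\in N(z)$, which is symmetric), (5) by the same monotonicity of the contributing index set, and (6) by exactly your chain of (3), (4) and (5). The only difference is that you spell out (1)--(3) rather than citing them, which is harmless.
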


\begin{proof}
(1)-(3) were shown in \cite{QinGaoPei07OnCovering}, so we prove only (4), (5) and (6).\\
(4) $y\in VH(\{x\})\Leftrightarrow$ there exists $z\in U $ such that $x,y \in N(z)\Leftrightarrow x\in VH(\{y\})$.\\
(5) $\forall x\in VH(X)$, there exists $y \in E$ such that $x\in N(y)$ and $N(y)\bigcap X\neq \emptyset$.
Since $X\subseteq Y$, $N(y)\bigcap Y\neq \emptyset$. Hence, $x\in VH(Y)$, that is, $VH(X)\subseteq VH(Y)$.\\
(6) According to (3), we have $VH(X\bigcup \{x\})-VH(X)=VH(X)\bigcup VH(\{x\})-VH(X)=VH(\{x\}-VH(X)$.
If $y\in VH(\{x\}-VH(X)$, then $y\in VH(\{x\})$.
Base on (4), we know $x\in VH(\{y\})\subseteq VH(X)\bigcup VH(\{y\})=VH(X\bigcup \{y\})$.
Hence, (5) holds.
\end{proof}

\begin{proposition}
$VH(VH(X))=VH(X)$ if and only if $\{VH(\{x\}):x\in E\}$ forms a partition of $E$.
\end{proposition}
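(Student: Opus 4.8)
The plan is to prove both directions of the equivalence, mirroring the structure of the proof of Proposition~10 for the operator $SH$. The key observation, following directly from the definition of $VH$ in Definition~11, is that $VH(X)=\bigcup\{N(x'):N(x')\bigcap X\neq\emptyset\}$, and in particular $VH(\{x\})=\bigcup\{N(z):x\in N(z)\}$. I would first record properties (2) and (5) of Proposition~22, which give the easy inclusion $VH(X)\subseteq VH(VH(X))$ automatically, so only the reverse inclusion carries content in the sufficiency direction.

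For the sufficiency ($\Leftarrow$), assume $\{VH(\{x\}):x\in E\}$ forms a partition of $E$. I would show $VH(VH(X))\subseteq VH(X)$. Take any $w\in VH(VH(X))$; then there is some $u\in VH(X)$ with $w\in VH(\{u\})$, and $u$ itself arises from some $v\in X$ with $u\in VH(\{v\})$. By property (4), the relation ``$x\in VH(\{y\})$'' is symmetric, so $u\in VH(\{u\})$ forces $VH(\{u\})$ and $VH(\{v\})$ to share the element $u$; since the $VH(\{x\})$ partition $E$, this gives $VH(\{u\})=VH(\{v\})$. Then $w\in VH(\{u\})=VH(\{v\})\subseteq VH(X)$, as $v\in X$. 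This yields the desired inclusion and hence idempotence.

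For the necessity ($\Rightarrow$), assume $VH(VH(X))=VH(X)$ for all $X$. I would establish that the sets $VH(\{x\})$ are pairwise disjoint or equal, which together with $x\in VH(\{x\})$ (from property (2)) covers $E$ and gives a partition. Suppose $VH(\{x\})\bigcap VH(\{y\})\neq\emptyset$, and pick $z$ in the intersection. Applying idempotence to the singleton $\{x\}$, I expect $VH(\{z\})\subseteq VH(VH(\{x\}))=VH(\{x\})$ since $z\in VH(\{x\})$; symmetrically, using property (4) to get $x\in VH(\{z\})$, the same argument yields $VH(\{x\})\subseteq VH(\{z\})$, so $VH(\{x\})=VH(\{z\})$. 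Running the identical reasoning with $y$ in place of $x$ gives $VH(\{y\})=VH(\{z\})$, whence $VH(\{x\})=VH(\{y\})$. Therefore the family consists of pairwise-disjoint (or identical) blocks whose union is $E$, i.e.\ a partition.

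The main obstacle is the careful bookkeeping in the necessity direction: one must correctly use the symmetry property (4) to pass between ``$z\in VH(\{x\})$'' and ``$x\in VH(\{z\})$'' so that idempotence can be applied in both directions to force equality rather than mere containment. The structural template is exactly that of Proposition~10, with $N(\cdot)$ and $I(\cdot)$ replaced by the sets $VH(\{\cdot\})$, so I would lean on that analogy throughout and verify that every invocation of $VH(VH(\{x\}))=VH(\{x\})$ is legitimate for the relevant singleton.
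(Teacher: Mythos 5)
Your proposal is correct and follows essentially the same route as the paper's proof: the sufficiency direction uses reflexivity plus the partition hypothesis to force $VH(\{u\})=VH(\{v\})$ and pull the point back into $VH(X)$, and the necessity direction uses idempotence on singletons together with the symmetry property (4) to show intersecting blocks coincide. The only cosmetic slip is attributing $u\in VH(\{u\})$ to property (4) rather than to the extensivity property (2), which does not affect the argument.
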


\begin{proof}
"$\Leftarrow$": According to (2) and (5) of Proposition 22, we know $VH(X)\subseteq VH(VH\\(X))$.
For all $x\in VH(VH(X))$, there exists $y\in VH(X)$ such that $x\in VH(\{y\})$.
Since $y\in VH(X)$, there exists $z\in E$ such that $y\in VH(\{z\})$.
Based on $y\in VH(\{y\})$ and $\{VH(\{x\}):x\in E\}$ forms a partition, we have $VH(\{y\})=VH(\{z\})$.
Thus $x\in VH(\{z\})$ because $x\in VH(\{y\})$.
Hence, $x\in VH(X)$, that is, $VH(VH(X))\\\subseteq VH(X)$.

"$\Rightarrow$": In order to prove $\{VH(\{x\}):x\in E\}$ forms a partition, we need to prove that for all $x,y\in E$, if $VH(\{x\})\bigcap VH(\{y\})\neq \emptyset$, then $VH(\{x\})=VH(\{y\})$.
If $VH(\{x\})\bigcap VH(\{y\})\neq \emptyset$, then there exists $z\in VH(\{x\})\bigcap VH(\{y\})$.
For $VH(VH$ $(\{x\}))=\bigcup \{VH(\{u\}):u\in VH(\{x\})\}$ and $z\in VH(\{x\})$, $VH(\{z\})\subseteq VH(VH\\(\{x\}))=VH(\{x\})$.
Based on (4) of Proposition 15 and $z\in VH(\{x\})$, we have $x\in VH(\{z\})$.
Thus it is easy to obtain $VH(\{x\})\subseteq VH(VH(\{z\}))=VH(\{z\})$.
Hence, $VH(\{x\})=VH(\{z\})$.
Similarly, we can obtain $VH(\{y\})=VH(\{x\})$, thus $VH(\{x\})=VH(\{z\})=VH(\{y\})$.
\end{proof}

\begin{theorem}
$VH$ is a closure operator of matroid $M$ if and only if $\{VH(\{x\}):x\in E\}$ forms a partition $E$.
\end{theorem}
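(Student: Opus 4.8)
The plan is to invoke the closure-operator characterization of Proposition 2, which asserts that a map $cl_{M}:2^{E}\rightarrow 2^{E}$ is the closure operator of some matroid on $E$ exactly when it satisfies the four axioms (1)--(4). It therefore suffices to check, one axiom at a time, which of these $VH$ satisfies and under what hypothesis on $\mathcal{C}$, exactly as was done for $SH$ in Theorem 8 and for $XH$ in Theorem 10.

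First I would observe that three of the four axioms hold for $VH$ over an \emph{arbitrary} covering, with no assumption on $\{VH(\{x\}):x\in E\}$. Indeed, the extensivity axiom (1), $X\subseteq VH(X)$, is precisely part (2) of Proposition 22; the monotonicity axiom (2), $X\subseteq Y\Rightarrow VH(X)\subseteq VH(Y)$, is part (5); and the exchange axiom (4), $y\in VH(X\bigcup\{x\})-VH(X)\Rightarrow x\in VH(X\bigcup\{y\})$, is part (6). Thus the conjunction of axioms (1), (2) and (4) is automatic and carries no information about whether the covering is special.

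Next I would isolate the idempotency axiom (3), $VH(VH(X))=VH(X)$, as the only condition whose validity depends on the covering. Here Proposition 23 performs the essential work: it establishes that $VH(VH(X))=VH(X)$ holds for all $X\subseteq E$ if and only if $\{VH(\{x\}):x\in E\}$ forms a partition of $E$. Hence axiom (3) is equivalent to the partition condition appearing in the statement.

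Finally I would assemble the equivalence. By Proposition 2, $VH$ is the closure operator of a matroid if and only if it satisfies all of (1)--(4) simultaneously; since (1), (2) and (4) always hold while (3) is equivalent to the partition condition by Proposition 23, the four axioms hold together if and only if $\{VH(\{x\}):x\in E\}$ forms a partition of $E$, yielding both directions of the theorem at once. I do not expect a genuine obstacle, as the substantive content has already been absorbed into Propositions 22 and 23; the only point that warrants care is confirming that idempotency is truly the sole axiom capable of failing, so that the partition condition is both necessary and sufficient rather than merely sufficient.
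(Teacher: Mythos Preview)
Your proposal is correct and follows essentially the same approach as the paper: the paper's proof is the one-line remark ``It comes from (2), (5) and (6) of Proposition 2, 22 and 23,'' which is precisely your argument---verify axioms (1), (2), (4) of Proposition~2 via parts (2), (5), (6) of Proposition~22, and reduce axiom (3) to the partition condition via Proposition~23.
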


\begin{proof}
It comes from (2), (5) and (6) of Proposition 2, 22 and 23.
\end{proof}

\begin{definition}
Let $\mathcal{C}$ be a covering. If $\{VH(\{x\}):x\in E\}$ forms a partition, then we define $\mathcal{I}^{'''}=\{I\subseteq E:|I\bigcap VH$ $(\{x_{i}\})|\leq 1,\forall i\in \{1,2,\cdots, t\}\}$.
\end{definition}

If $\{VH(\{x\}):x\in E\}=VH(\{x_{1}\}),VH(\{x_{2}\}),\cdots,$ $VH$ $(\{x_{r}\})$ forms a partition of $E$, then $VH$ is a closure operator of a matroid, and the matroid's independent set is shown as follows:
\begin{center}
$\mathcal{I}_{VH}(\mathcal{C})=\{I\subseteq E:\forall x\in I, x\notin VH(I-\{x\})\}$
\end{center}
Similar to Proposition 11, 12 and 13, we can also obtain the following two results.

\begin{proposition}
If $VH(\{x_{1}\}),VH(\{x_{2}\}),\cdots,VH(\{x_{r}\})$ forms a partition on $E$, then $M(E,\mathcal{I}_{VH}(\mathcal{C}))$ is a matroid and $\mathcal{I}_{VH}(\mathcal{C})=\mathcal{I}^{'''}$.
\end{proposition}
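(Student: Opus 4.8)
**Proof proposal for the final Proposition (that $M(E,\mathcal{I}_{VH}(\mathcal{C}))$ is a matroid with $\mathcal{I}_{VH}(\mathcal{C})=\mathcal{I}^{'''}$).**

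The plan is to mirror exactly the argument used for $SH$ in Proposition 11, since the excerpt explicitly signals that the proofs run in parallel. First I would establish that $M(E,\mathcal{I}_{VH}(\mathcal{C}))$ is a matroid. By Theorem 11, the hypothesis that $\{VH(\{x\}):x\in E\}$ forms a partition is exactly the condition for $VH$ to be a closure operator of a matroid, i.e. for $VH$ to satisfy conditions $(1)$--$(4)$ of Proposition 2. I would then invoke the standard fact (used in the proof of Proposition 11 via the auxiliary set $\mathcal{I}_{cl}=\{I\subseteq E:\forall x\in I,\ x\notin cl(I-\{x\})\}$) that whenever an operator $cl$ satisfies the closure axioms of Proposition 2, the family $\mathcal{I}_{cl}$ is the collection of independent sets of a matroid. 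Applying this with $cl=VH$ shows immediately that $M(E,\mathcal{I}_{VH}(\mathcal{C}))$ is a matroid.

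Next I would prove the set equality $\mathcal{I}_{VH}(\mathcal{C})=\mathcal{I}^{'''}$ by a double inclusion, using the two key structural facts about $VH$ under the partition hypothesis, namely $VH(I)=\bigcup_{y\in I}VH(\{y\})$ (from additivity, property $(3)$ of Proposition 22) and $x\in VH(\{x\})$ for all $x$ (from property $(2)$). For the inclusion $\mathcal{I}_{VH}(\mathcal{C})\subseteq\mathcal{I}^{'''}$, I would argue the contrapositive: if $I\notin\mathcal{I}^{'''}$, then some block satisfies $|I\cap VH(\{x_i\})|\geq 2$, so there are distinct $u,v\in I$ with $u,v\in VH(\{x_i\})$; since $u\in VH(\{u\})$, $v\in VH(\{v\})$ and the blocks partition $E$, we get $VH(\{u\})=VH(\{v\})=VH(\{x_i\})$, whence $u\in VH(\{v\})\subseteq VH(I-\{u\})$, contradicting $I\in\mathcal{I}_{VH}(\mathcal{C})$. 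For the reverse inclusion, I would again use the contrapositive: if $I\notin\mathcal{I}_{VH}(\mathcal{C})$, then some $x\in I$ lies in $VH(I-\{x\})=\bigcup_{y\in I-\{x\}}VH(\{y\})$, so $x\in VH(\{y\})$ for some $y\neq x$ in $I$; as $x\in VH(\{x\})$ and the blocks are disjoint, $VH(\{x\})=VH(\{y\})$, giving $x,y\in I\cap VH(\{x\})$ and thus $|I\cap VH(\{x\})|\geq 2$, so $I\notin\mathcal{I}^{'''}$.

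I do not anticipate a serious obstacle, as every ingredient is already in place: the matroid axioms via Theorem 11 and Proposition 2, and the two elementary facts about $VH$ from Proposition 22. The one point requiring care is the bookkeeping of which distinct neighborhoods $VH(\{x_1\}),\dots,VH(\{x_r\})$ are being indexed, so that when I collapse $VH(\{u\})$ and $VH(\{v\})$ I correctly identify them with a single block $VH(\{x_i\})$ of the partition; the disjointness of the partition blocks is precisely what licenses this identification, so I would state that dependence explicitly. Given the close formal analogy, I would expect to be able to conclude by simply remarking that the proof is identical in structure to that of Proposition 11 with $VH$ and its blocks $VH(\{x_i\})$ in place of $SH$ and the blocks $I(x_i)$.
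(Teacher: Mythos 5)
Your proposal is correct and is exactly the argument the paper intends: the paper omits this proof, stating only that it is "similar to that of Proposition 11," and your adaptation substitutes $VH$ for $SH$ and the blocks $VH(\{x_i\})$ for $I(x_i)$ in precisely the way the paper's proof of Proposition 11 proceeds (matroid via the closure axioms and $\mathcal{I}_{cl}$, then double inclusion using additivity, $x\in VH(\{x\})$, and the partition hypothesis).
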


\begin{proposition}
If $\{VH(\{x_{1}\}),VH(\{x_{2}\}),\cdots,VH(\{x_{r}\})\}$ forms a partition on $E$ and $M(E,\mathcal{I}_{VH}(\mathcal{C}))$ is the matriod induced by $VH$, then\\
(1) For all $X\subseteq E$, $r_{VH}(X)=|\{VH(\{x_{i}\}):VH(\{x_{i}\})\bigcap X\neq \emptyset,i=1,2,\cdots,r\}|$.\\
(2) $X$ is a base of $M(E,\mathcal{I}_{VH}(\mathcal{C}))$ if and only if $|X\bigcap VH(\{x_{i}\})|=1$ for all $1\leq i\leq r$, and $M(E,\mathcal{I}_{VH}(\mathcal{C}))$ has $|VH(\{x_{1}\})||VH(\{x_{2}\})|\cdots |VH(\{x_{r}\})|$ bases.\\
(3) $X$ is a circuit of $M(E,\mathcal{I}_{VH}(\mathcal{C}))$ if and only if there exists $VH(\{x_{i}\})$ such that $X\subseteq VH(\{x_{i}\})$ and $|X|=2$.\\
(4) $X$ is a dependent set of $M(E,\mathcal{I}_{VH}(\mathcal{C}))$ if and only if there exists $VH(\{x_{i}\})$ such that $|VH(\{x_{i}\})\bigcap X|>1$.\\
(5) $\{VH(\{x_{1}\}),VH(\{x_{2}\}),\cdots,VH(\{x_{r}\})\}$ are all atoms of $\mathcal{L}_{VH}(M(\mathcal{C}))$.\\
(6) There dose not exists $z\in E$ such that $x,y\in \{VH(\{z\}$ if and only if $VH(\{x,y\})$ covers $VH(\{x\})$ or $VH(\{y\})$.\\
(7) $\forall 1\leq i,j\leq r, (\{VH(\{x_{i}\}),\{VH(\{x_{j}\}))$ is a modular pair of lattice $\mathcal{L}_{VH}(M(\mathcal{C}))$.\\
(8) $\forall 1\leq i\leq r, VH(x_{i})$ is a modular element of lattice $\mathcal{L}_{VH}(M(\mathcal{C}))$.
\end{proposition}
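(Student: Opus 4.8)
The plan is to exploit Proposition 24, which under the hypothesis that $\{VH(\{x_{1}\}),\cdots,VH(\{x_{r}\})\}$ is a partition of $E$ identifies $\mathcal{I}_{VH}(\mathcal{C})$ with $\mathcal{I}^{'''}=\{I\subseteq E:|I\cap VH(\{x_{i}\})|\leq 1,\ \forall i\in\{1,\cdots,r\}\}$. This exhibits $M(E,\mathcal{I}_{VH}(\mathcal{C}))$ as the partition matroid whose blocks are the distinct upper-approximation classes $VH(\{x_{i}\})$, so each of the eight assertions is the literal $VH$-analogue of a statement already established for $SH$ in Propositions 12 and 13 (and for $XH$ in Proposition 19), with every occurrence of $I(x_{i})$ replaced by $VH(\{x_{i}\})$. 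Accordingly I would transport those proofs line by line, checking only that each property of $I$ that they invoke has a counterpart for $VH$ furnished by Proposition 22 (in particular additivity $VH(X\cup Y)=VH(X)\cup VH(Y)$ and monotonicity) together with Theorem 11.

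For the matroid-theoretic items (1)--(4) I would argue directly from the partition description of $\mathcal{I}^{'''}$. A set is maximal independent precisely when it selects exactly one point from each block, which yields the base characterization $|X\cap VH(\{x_{i}\})|=1$ for all $i$ and the count $|VH(\{x_{1}\})|\cdots|VH(\{x_{r}\})|$ of bases, giving (2). For the rank formula (1) I would take a maximal independent subset $B_{X}\subseteq X$ and run the standard exchange argument: $B_{X}$ meets each block in at most one point, and if some block satisfied $VH(\{x_{i}\})\cap X\neq\emptyset$ while $VH(\{x_{i}\})\cap B_{X}=\emptyset$, then adjoining a point of that block to $B_{X}$ would keep it independent and inside $X$, contradicting maximality; hence $r_{VH}(X)$ counts exactly the blocks meeting $X$. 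Items (4) and (3) then follow at once: dependence means some block is met in at least two points, and a minimal such set must consist of two points of a common block, forcing $|X|=2$ and $X\subseteq VH(\{x_{i}\})$.

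For the lattice items (5)--(8) I would first record the operations of $\mathcal{L}_{VH}(M(\mathcal{C}))$. Since $VH$ is the matroid closure operator by Theorem 11 and is additive by Proposition 22(3), closed sets satisfy $X\vee Y=VH(X\cup Y)=VH(X)\cup VH(Y)=X\cup Y$ and $X\wedge Y=X\cap Y$, exactly as in the $SH$ case. The atoms are the rank-one closed sets, which are precisely the blocks $VH(\{x_{i}\})$, giving (5); the covering criterion (6) is obtained as in Proposition 7, noting $r_{VH}(\{x,y\})=2$ iff $x$ and $y$ lie in no common block. Finally (7) and (8) follow from Theorem 3: for distinct atoms $VH(\{x_{i}\})$ and $VH(\{x_{j}\})$ one verifies $r_{VH}(VH(\{x_{i}\})\cup VH(\{x_{j}\}))=2$, so the submodular equality of ranks holds and the pair is modular, while modularity of each atom as an element is the same comparable/atom/non-atom case analysis used in Theorem 4(2) and Corollary 3.

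The argument is genuinely routine once Proposition 24 is in force, so there is no serious obstacle; the only steps demanding a real argument rather than a direct reading-off are the rank formula (1), whose exchange step must be written carefully, and the modular-element claim (8), where the non-comparable, non-atom case has to be ruled out exactly as in Theorem 4. Everything else reduces, block by block, to the partition-matroid picture already established for $SH$ and $XH$.
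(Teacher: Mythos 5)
Your proposal is correct and follows exactly the route the paper intends: the paper itself omits this proof, stating only that it is analogous to Propositions 11, 12 and 13 for $SH$, and your transport of those arguments to the partition matroid on the blocks $VH(\{x_{i}\})$ via Proposition 24, together with the additivity and closure properties from Proposition 22 and Theorem 11, is precisely that analogy carried out in detail.
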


The proof of Proposition 24, 25 is similar to that of Proposition 11,12 and 13, respectively.
We omit it here.

As we know, the seventh type of upper approximation is defined by neighborhood, and the sixth and the seventh types of upper approximations are
equivalent when $\{N(x):x\in E\}$ forms a partition.
Thus the covering which make the sixth type of upper approximation be a closure of a
matroid is the covering which make the seventh type of upper approximation be a closure operator of a matroid, hence we omit the discussion about what kind of covering can make the seventh type of upper approximation be a closure operator of a matroid.

\section{Relationships among four geometric lattice structures of covering-based rough sets}
\label{S:Therelationshipamongfourtypesofmatroidalstructuresandclosedsetlatticestructures}

In Section~\ref{S:Thepropertiesoftheclosedsetlatticeofmatroidinducedbycovering}, the properties of the geometric lattice have been studied by
matroid $M(E, \mathcal{I}(\mathcal{C}))$, and we also have studied the properties of matroids $M(E,\mathcal{I}_{SH}(\mathcal{C}))$, $M(E,\mathcal{I}_{XH}(\mathcal{C}))$.
Section~\ref{S:Matroidalstructuresbasedonthreetypesofupperapproximations} presents sufficient and necessary conditions for three types of covering upper approximation operators to be closure operators of matroids.
Moreover, we exhibit three types of matroids through closure axioms, and then obtain three geometric lattice structures of covering-based rough sets
In this section, we compare above four types of geometric lattices through corresponding matroids.
We also discuss the reducible element and the immured element's influence on the relationship among this four types of matroidal structures and geometric lattice structures.

The following proposition shows the relationship between $\mathcal{I}_{SH}(\mathcal{C})$ and $\mathcal{I}(\mathcal{C})$, and the relationship between $\mathcal{L}_{SH}(M(\mathcal{C}))$ and $\mathcal{L}(M(\mathcal{C}))$.

\begin{proposition}
Let $\mathcal{C}$ be a covering. if $SH$ induced by $\mathcal{C}$ is a closure operator, then $\mathcal{I}_{SH}(\mathcal{C})\subseteq \mathcal{I}(\mathcal{C})$ and $\mathcal{L}_{SH}(M(\mathcal{C}))\subseteq \mathcal{L}(M(\mathcal{C}))$.
\end{proposition}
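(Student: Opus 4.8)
The plan is to prove the two inclusions in turn, using throughout that, by Theorem 8, the hypothesis is equivalent to $\{I(x):x\in E\}$ being a partition, so that $M(E,\mathcal{I}_{SH}(\mathcal{C}))$ is a matroid whose closure operator is $SH$. For the first inclusion I would take $I\in\mathcal{I}_{SH}(\mathcal{C})$ and argue that no block of $\mathcal{C}$ contains two distinct points of $I$: if $x,x'\in I$ with $x\neq x'$ both lie in a block $K\in\mathcal{C}$, then $K\subseteq I(x)$ gives $x'\in I(x)\subseteq\bigcup_{y\in I-\{x'\}}I(y)=SH(I-\{x'\})$, contradicting the defining property $x'\notin SH(I-\{x'\})$ of $\mathcal{I}_{SH}(\mathcal{C})$. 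Since each block then meets $I$ in at most one point, assigning to every point of $I$ a block containing it is injective, so $I$ is a partial transversal of $\mathcal{C}$, i.e. $I\in\mathcal{I}(\mathcal{C})$.

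The heart of the second inclusion is the pointwise comparison $cl_{M(\mathcal{C})}(X)\subseteq SH(X)$ for every $X\subseteq E$, which I expect to be the main obstacle. I would prove it by contraposition on rank: fix $a\notin SH(X)$, so that every block containing $a$ is disjoint from $X$, and pick a partial transversal $B\subseteq X$ with $|B|=r_{M(\mathcal{C})}(X)$, each $b\in B$ represented by a distinct block $K_{b}\ni b$. Each such $K_{b}$ meets $X$ through $b$, so it differs from every block $K\ni a$ (each of which is disjoint from $X$); adjoining the representative pair $(a,K)$ therefore shows $B\bigcup\{a\}$ is a partial transversal of $X\bigcup\{a\}$, whence $r_{M(\mathcal{C})}(X\bigcup\{a\})>r_{M(\mathcal{C})}(X)$ and $a\notin cl_{M(\mathcal{C})}(X)$. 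This establishes $X\subseteq cl_{M(\mathcal{C})}(X)\subseteq SH(X)$ for all $X$.

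Finally, I would combine this chain of inclusions with the identification of $\mathcal{L}_{SH}(M(\mathcal{C}))$ as the fixed points of $SH$: if $X\in\mathcal{L}_{SH}(M(\mathcal{C}))$ then $SH(X)=X$, so $X\subseteq cl_{M(\mathcal{C})}(X)\subseteq SH(X)=X$ forces $cl_{M(\mathcal{C})}(X)=X$, meaning $X$ is closed in the transversal matroid and hence $X\in\mathcal{L}(M(\mathcal{C}))$. This yields $\mathcal{L}_{SH}(M(\mathcal{C}))\subseteq\mathcal{L}(M(\mathcal{C}))$. Besides the rank-extension step of the previous paragraph, the point needing care is precisely that the closed sets of $M(E,\mathcal{I}_{SH}(\mathcal{C}))$ really are the $SH$-fixed points, which rests on $SH$ being the closure operator of that matroid under the standing hypothesis.
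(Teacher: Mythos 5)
Your proof is correct, and the second half takes a genuinely different route from the paper's. For the inclusion $\mathcal{I}_{SH}(\mathcal{C})\subseteq \mathcal{I}(\mathcal{C})$ the two arguments end the same way (exhibit pairwise distinct representative blocks), but you reach the key fact --- no block of $\mathcal{C}$ contains two points of $I$ --- directly from the defining condition $x\notin SH(I-\{x\})$ via $K\subseteq I(x)$, whereas the paper routes through the partition $\{I(x):x\in E\}$ and the identification $\mathcal{I}_{SH}(\mathcal{C})=\mathcal{I}^{'}$; notably your version of this half needs no hypothesis on $\mathcal{C}$ at all. For the lattice inclusion the paper fixes $X=SH(X)=\bigcup_{x\in X}I(x)$, supposes $cl_{M(\mathcal{C})}(X)\neq X$, and derives a contradiction by extending a maximal independent set of $X$, an argument that leans on the partition structure throughout. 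You instead isolate the operator inequality $cl_{M(\mathcal{C})}(X)\subseteq SH(X)$ for \emph{every} covering and \emph{every} $X$ (proved by the clean rank-extension step: a point outside $SH(X)$ lies only in blocks disjoint from $X$, so any maximal partial transversal of $X$ extends), and then sandwich $X\subseteq cl_{M(\mathcal{C})}(X)\subseteq SH(X)=X$. This buys a reusable, hypothesis-free lemma and a shorter conclusion, at the cost of having to justify (as you correctly note) that under the standing hypothesis the closed sets of $M(E,\mathcal{I}_{SH}(\mathcal{C}))$ are exactly the $SH$-fixed points, which follows from Theorem 18 (that $SH$ is the closure operator of that matroid). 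Both proofs are sound; yours is arguably the more modular.
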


\begin{proof}
Since $SH$ induced by $\mathcal{C}$ is a closure operator, $\{I(x_{1}),I(x_{2}),\cdots,\\I(x_{s})\}$ forms a partition.
$\forall I\in \mathcal{I}_{SH}(\mathcal{C})$, suppose $I=\{i_{1},i_{2},\cdots,i_{\alpha}\}(\alpha \leq s)$ such that $i_{1} \in I(x_{i_{1}}), i_{2} \in I(x_{i_{2}}),\cdots,i_{\alpha} \in I(x_{i_{\alpha}})$ and $\{I(x_{i_{1}}),I(x_{i_{2}}),\cdots,I(x_{i_{\alpha}})\}\subseteq \{I(x_{1}),I(x_{2}),\cdots,I(x_{s})\}$.
According to the definition of $I(x)=\bigcup_{x\in K}K$, there exists $\{K_{i_{1}}, K_{i_{2}},\cdots K_{i_{\alpha}}\}\subseteq \mathcal{C}$ such that $i_{1}\in K_{i_{1}}, i_{2}\in K_{i_{2}}, \cdots, i_{\alpha}\in K_{i_{\alpha}}$.
Since $\{I(x_{1}),I(x_{2}),\cdots,I(x_{s})\}$ forms a partition, thus $\{K_{i_{1}}, K_{i_{2}},\cdots K_{i_{\alpha}}\}$ are different bloc-ks.
According to the definition of transversal matroid, we have $I\in \mathcal{I}$.
Hence, $\mathcal{I}_{SH}(\mathcal{C})\subseteq \mathcal{I}(\mathcal{C})$.

$\forall X\in \mathcal{L}_{SH}(M(\mathcal{C}))$, $X=SH(X)=\bigcup_{x\in X}I(x)$.
Now we prove $X\in \mathcal{L}(M(\mathcal{C}))$, that is, $cl_{M(\mathcal{C})}(X)=X=\{x|r_{M(\mathcal{C})}(X)=r_{M(\mathcal{C})}(X\bigcup \{x\})\}$. Since $X\subseteq cl_{M(\mathcal{C})}(X)$, if $cl_{M(\mathcal{C})}(X)\neq X$, then $cl_{M(\mathcal{C})}(X)\nsubseteq X$, that is, there exists $y\in E$ such that $r_{M(\mathcal{C})}(X)=r_{M(\mathcal{C})}(X\bigcup \{y\})$ and $y\notin X$.
Suppose $T=\{t_{1}, t_{2}, \cdots, t_{t}\}(t\leq s)$ is a maximal independent set included in $X$, then $\{t_{1}, t_{2}, \cdots, t_{t}\}\subseteq X=\bigcup_{x\in X}I(x)$ and there exist different $K_{1},K_{2},\cdots, K_{t}$ such that $\forall i\in \{1,2,\cdots,t\}, t_{i}\in K_{i}$.
Since $y\notin X$, $\forall x\in X$, $I(x)\bigcap I(y)=\emptyset$.
Based on $\{I(x_{1}),I(x_{2}),\cdots,I(x_{s})\}$ forms a partition, there exists $K\subseteq I(y)$ such that $K_{1},K_{2},\cdots, K_{t}, K$ are different blocks and $y\in K$, thus $T\bigcup \{y\}$ is a maximal independent set included in $X\bigcup \{y\}$.
Hence, we have $r_{M(\mathcal{C})}(X\bigcup \{y\})=r_{M(\mathcal{C})}(X)+1$ which contradicts $r_{M(\mathcal{C})}(X)=r_{M(\mathcal{C})}(X\bigcup \{y\})$.
Thus we can obtain $cl_{M(\mathcal{C})}(X)=X$.
\end{proof}

The following proposition illustrates that in what condition the indiscernible neighborhoods are included in the closed-set lattice induced by $\mathcal{C}$.

\begin{proposition}
Let $\mathcal{C}$ be a covering. If $SH$ induced by $\mathcal{C}$ is a closure operator, then $\forall x\in E, I(x)\in \mathcal{L}(\mathcal{M}(\mathcal{C}))$.
\end{proposition}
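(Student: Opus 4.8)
The plan is to derive this statement as a direct consequence of the preceding proposition, which already supplies the inclusion $\mathcal{L}_{SH}(M(\mathcal{C}))\subseteq\mathcal{L}(M(\mathcal{C}))$. Hence it suffices to show that each indiscernible neighborhood $I(x)$ lies in $\mathcal{L}_{SH}(M(\mathcal{C}))$, i.e. that $I(x)$ is a closed set of the matroid $M(E,\mathcal{I}_{SH}(\mathcal{C}))$; membership in $\mathcal{L}(M(\mathcal{C}))$ then follows at once. First I would record that the hypothesis is precisely the condition of Theorem 8, so $\{I(x):x\in E\}$ forms a partition, say $\{I(x_{1}),\cdots,I(x_{s})\}$, and in particular the matroid $M(E,\mathcal{I}_{SH}(\mathcal{C}))$ and its closed-set lattice $\mathcal{L}_{SH}(M(\mathcal{C}))$ are well defined.

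Second, I would verify $I(x)\in\mathcal{L}_{SH}(M(\mathcal{C}))$ in one of two equivalent ways. The quickest is to quote Proposition 13(1): under the partition hypothesis the blocks $I(x_{1}),\cdots,I(x_{s})$ are exactly the atoms of $\mathcal{L}_{SH}(M(\mathcal{C}))$, and an atom is in particular an element (closed set) of that lattice; since every $I(x)$ coincides with one of these blocks, we get $I(x)\in\mathcal{L}_{SH}(M(\mathcal{C}))$. Alternatively, one can argue directly from the operator $SH$: by definition $SH(\{x\})=\bigcup\{I(y):y\in\{x\}\}=I(x)$, and because $\{I(x):x\in E\}$ is a partition, $SH$ is idempotent by Proposition 10, so that $SH(I(x))=SH(SH(\{x\}))=SH(\{x\})=I(x)$. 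Thus $I(x)$ is fixed by the matroidal closure operator $SH$ of $M(E,\mathcal{I}_{SH}(\mathcal{C}))$, i.e. it is a closed set, whence $I(x)\in\mathcal{L}_{SH}(M(\mathcal{C}))$.

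Finally, combining the two steps, for every $x\in E$ we obtain $I(x)\in\mathcal{L}_{SH}(M(\mathcal{C}))\subseteq\mathcal{L}(M(\mathcal{C}))$, which is the claim. The substance of the argument is therefore inherited: the genuinely nontrivial work, namely showing that $SH$-closed sets remain closed in the transversal matroid $M(\mathcal{C})$, was already carried out in the preceding proposition. The only new point needing its own (routine) check is that a single partition block $I(x)$ is $SH$-closed, which is immediate from the idempotence of $SH$. Consequently I expect no real obstacle here beyond correctly invoking the earlier results; the only thing to guard against is a mismatch of hypotheses, so I would make explicit that ``$SH$ is a closure operator'' is the same condition that makes both Theorem 8 and the preceding proposition applicable.
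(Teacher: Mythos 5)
Your proposal is correct and follows essentially the same route as the paper: the paper likewise invokes the partition property of $\{I(x):x\in E\}$, verifies $SH(I(x))=\bigcup_{y\in I(x)}I(y)=I(x)$ so that $I(x)\in \mathcal{L}_{SH}(M(\mathcal{C}))$, and then concludes via the preceding proposition's inclusion $\mathcal{L}_{SH}(M(\mathcal{C}))\subseteq \mathcal{L}(M(\mathcal{C}))$. Your idempotence computation $SH(I(x))=SH(SH(\{x\}))=SH(\{x\})=I(x)$ is just a repackaging of that same verification, so there is no substantive difference.
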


\begin{proof}
Since $SH$ induced by $\mathcal{C}$ is a closure operator, $\{I(x):x\in E\}$ forms a partition.
Thus, for all $x\in E$, $SH(I(x))=\bigcup_{y\in I(x)}I(y)=I(x)$, that is, $I(x)\in \mathcal{L}_{SH}(M(\mathcal{C}))$.
According to proposition 26, $I(x)\in \mathcal{L}(\mathcal{M}_{\mathcal{C}})$.
\end{proof}

\begin{example}
Let $\mathcal{C}=\{\{1,2\},\{1,3\},\{2,3\},\{4,5\}\}$.
$I(1)=I(2)=I(3)=\{1,2,3\}$, $I(4)=I(5)=\{4,5\}$.
$\mathcal{L}_{SH}(M(\mathcal{C}))=\{\emptyset, \{1,2,3\},\{4,5\},\{1,2,3,4,5\}\}$.
$\mathcal{L}(M(\mathcal{C}))\\=\{\emptyset, \{1\},\{2\},\{3\},\{4,5\},\{1,2\},\{1,3\},\{2,3\},\{1,4,5\},\{2,4,5\},\{3,4,5\},\{1,2,3\},\\\{1,2,4,5\},\{1,3,4,5\},\{2,3,4,5\},\{1,2,3,4,5\}\}$, the structures of $\mathcal{L}(\mathcal{M}_{SH})$ and $\mathcal{L}(\mathcal{M}_{\mathcal{C}})$ are showed in figure 1.
\begin{figure*}[ht]

   \begin{center}
   \includegraphics[width=5in]{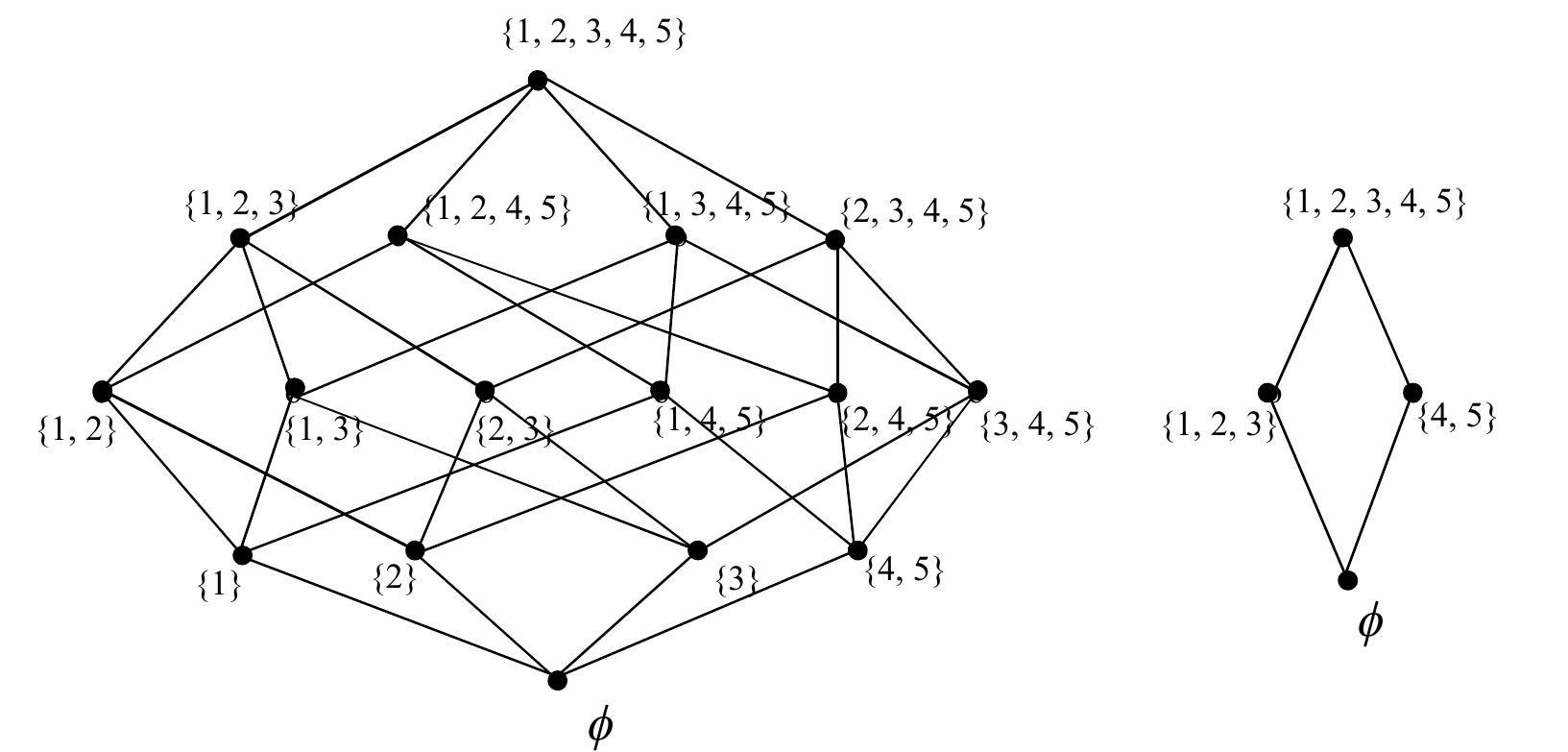}
   \caption{The lattice of $\mathcal{L}(M(\mathcal{C}))$(resp.$\mathcal{L}_{XH}(M(\mathcal{C}))$) and $\mathcal{L}_{SH}(M(\mathcal{C}))$}

    \end{center}

\end{figure*}
\end{example}

\begin{remark}
Let $\mathcal{C}$ is a covering and $XH$ induced by $\mathcal{C}$ a closure operator. However, it has no relationship between $\mathcal{I}_{XH}(\mathcal{C})$ and $\mathcal{I}(\mathcal{C})$, and has no relationship between $\mathcal{L}_{XH}(M(\mathcal{C}))$ and $\mathcal{L}(M(\mathcal{C}))$.
Similarly, those conclusions also hold for $VH$.
\end{remark}

The following example illustrates the above statements.

\begin{example}
$\mathcal{C}=\{K_{1},K_{2},K_{3},K_{4}\}$ is a covering of $E$, Where $K_{1}=\{a,b,i\}, K_{2}=\{a,b,c,d,e,f\}, K_{3}=\{f,g,h\}, K_{4}=\{c,d,e,g,h,i\}$.
Then $VH(\{a\})=VH(\{b\})=N(a)=N(b)=\{a,b\},VH(\{c\})=VH(\{d\})=VH(\{e\})=N(c)=N(d)=N(e)=\{c,d,e\}, VH(\{f\})=N(f)=\{f\}, VH(\{g\})=VH(\{h\})=N(g)=N(h)=\{g,h\}, VH(\{i\})=N(i)=\{i\}$.
Let $T=\{a,c,f,g,i\}$.
It is clear that $T\in \mathcal{I}_{XH}(\mathcal{C})=\mathcal{I}_{VH}(\mathcal{C})$, but $T\notin \mathcal{I}(\mathcal{C})$, thus $\mathcal{I}_{XH}(\mathcal{C})\nsubseteq \mathcal{I}(\mathcal{C})$ and $\mathcal{I}_{VH}(\mathcal{C})\nsubseteq \mathcal{I}(\mathcal{C})$.
Let $T^{'}=\{a,c,d\}$.
It is clear that $T^{'}\in \mathcal{I}(\mathcal{C})$, but $T^{'}\notin \mathcal{I}_{XH}\mathcal{I}_{VH}$ for $|T^{'}\bigcap N(a)|=2$ and $|T^{'}\bigcap VH(\{a\})|=2$, thus $\mathcal{I}(\mathcal{C})\nsubseteq \mathcal{I}_{XH}\mathcal{C}$ and $\mathcal{I}(\mathcal{C})\nsubseteq \mathcal{I}_{VH}(\mathcal{C})$.
Let $X=\{a,b,i\}$. $X\in \mathcal{L}_{XH}(M(\mathcal{C}))$ for $XH(X)=X$.
However, $X\notin \mathcal{L}(M(\mathcal{C}))$ for $cl_{M(\mathcal{C})}(X)=\{a,b,c,d,e,i\}$.
Let $X=\{a\}$. $X\in \mathcal{L}(M(\mathcal{C}))$ for $cl_{M(\mathcal{C})}(X)=X$.
However, $X\notin \mathcal{L}_{XH}(M(\mathcal{C}))$ for $XH(X)=\{a,b\}\neq X$.
\end{example}

\begin{proposition}
Let $\mathcal {C}$ be a covering. If $XH$ induced by $\mathcal{C}$ is a closure operator, then $VH$ induced by $\mathcal{C}$ is also a closure operator.
Moreover, $\mathcal{I}_{XH}(\mathcal{C})=\mathcal{I}_{VH}(\mathcal{C})$ and $\mathcal{L}_{XH}(M(\mathcal{C}))=\mathcal{L}_{VH}(M(\mathcal{C}))$.
\end{proposition}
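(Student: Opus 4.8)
The plan is to funnel the entire statement through the single identity $VH(\{x\})=N(x)$, which holds precisely when the neighborhoods partition $E$. First I would apply Theorem 10: since $XH$ is a closure operator of a matroid, the family $\{N(x):x\in E\}$ forms a partition of $E$. I would also record the elementary fact that $x\in N(x)$ for every $x$, since $x$ belongs to every block $K$ with $x\in K$ and hence to their intersection $N(x)=\bigcap\{K:x\in K\in\mathcal{C}\}$.

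The central computation is $VH(\{x\})=N(x)$. Unwinding the definition gives $VH(\{x\})=\bigcup\{N(y):N(y)\cap\{x\}\neq\emptyset\}=\bigcup\{N(y):x\in N(y)\}$. The key step is to collapse this union: if $x\in N(y)$, then $N(x)$ and $N(y)$ share the point $x$, and disjointness of the partition forces $N(y)=N(x)$; since also $x\in N(x)$ occurs in the union, I obtain $VH(\{x\})=N(x)$. Consequently $\{VH(\{x\}):x\in E\}=\{N(x):x\in E\}$ is a partition of $E$, and Theorem 11 delivers that $VH$ is a closure operator of a matroid, proving the first assertion.

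For the ``moreover'' part I would use that the identity $VH(\{x\})=N(x)$ makes the two enumerated block families $\{N(x_1),\dots,N(x_t)\}$ and $\{VH(\{x_1\}),\dots,VH(\{x_r\})\}$ one and the same collection (in particular $t=r$). Proposition 18 then describes $\mathcal{I}_{XH}(\mathcal{C})=\mathcal{I}^{''}=\{I\subseteq E:|I\cap N(x_i)|\leq 1\ \text{for all}\ i\}$, while Proposition 24 describes $\mathcal{I}_{VH}(\mathcal{C})=\mathcal{I}^{'''}=\{I\subseteq E:|I\cap VH(\{x_i\})|\leq 1\ \text{for all}\ i\}$; since the defining families coincide, $\mathcal{I}_{XH}(\mathcal{C})=\mathcal{I}_{VH}(\mathcal{C})$. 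Finally, $M(E,\mathcal{I}_{XH}(\mathcal{C}))$ and $M(E,\mathcal{I}_{VH}(\mathcal{C}))$ have identical independent sets, hence are the same matroid and share their closed sets, giving $\mathcal{L}_{XH}(M(\mathcal{C}))=\mathcal{L}_{VH}(M(\mathcal{C}))$.

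I expect no deep obstacle; the one point requiring care is the collapse establishing $VH(\{x\})=N(x)$, where both $x\in N(x)$ and the pairwise disjointness of the partition must be used together, not merely the definition of $VH$. As an alternative route for the second part, I could instead prove $XH(X)=VH(X)$ for every $X\subseteq E$ by the same disjointness argument, which identifies the operators outright and makes the equality of matroids and of closed-set lattices immediate; I would adopt whichever version reads more cleanly.
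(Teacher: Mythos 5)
Your proof is correct, and the one step that genuinely requires care --- collapsing $\bigcup\{N(y):x\in N(y)\}$ to $N(x)$ by combining $x\in N(x)$ with the pairwise disjointness of the partition --- is exactly the argument the paper uses. The difference is in the packaging: the paper proves the full operator identity $XH(X)=VH(X)$ for every $X\subseteq E$ in one stroke (the inclusion $XH(X)\subseteq VH(X)$ is immediate from the definitions, and the reverse inclusion is your disjointness collapse applied to a general $X$), and then reads off all three conclusions directly from the definitions of $\mathcal{I}_{XH}(\mathcal{C})$, $\mathcal{I}_{VH}(\mathcal{C})$, $\mathcal{L}_{XH}(M(\mathcal{C}))$ and $\mathcal{L}_{VH}(M(\mathcal{C}))$. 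You instead establish only the singleton identity $VH(\{x\})=N(x)$, then route the first claim through Theorem 11 and the ``moreover'' claims through the characterizations $\mathcal{I}_{XH}(\mathcal{C})=\mathcal{I}^{''}$ and $\mathcal{I}_{VH}(\mathcal{C})=\mathcal{I}^{'''}$ of Propositions 18 and 24. Both routes are valid; the paper's yields the slightly stronger and reusable fact that the two operators coincide outright under the hypothesis (which is what makes the lattice equality immediate without invoking the matroid characterizations), while yours is more economical in what it computes and leans on previously established structure theorems. The alternative you mention at the end is precisely the paper's proof.
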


\begin{proof}
Since $XH$ induced by $\mathcal{C}$ is a closure operator, $\{N(x):x\in E\}$ forms a partition.
$\forall X\subseteq E$, $XH(X)=\{x:N(x)\bigcap X\neq \emptyset\}$ and $VH(X)=\bigcup\{N(x):N(x)\bigcap X\neq \emptyset\}$.
It is clear that $XH(X)\subseteq VH(X)$.
$\forall x\in VH(X)$, there exists $y\in E$ such that $x\in N(y)$ and $N(y)\bigcap X\neq \emptyset$.
Based on $x\in N(x)$ and $\{N(x):x\in E\}$ forms a partition, $N(x)=N(y)$ and $N(y)\bigcap X\neq \emptyset$, thus $x\in XH(X)$, that is, $VH(X)\subseteq XH(X)$.
Hence, $VH(X)=XH(X)$.
According to the definition of $\mathcal{L}_{XH}(M(\mathcal{C}))$ and $\mathcal{L}_{VH}(M(\mathcal{C}))$, we can obtain $\mathcal{L}_{XH}(M(\mathcal{C}))=\mathcal{L}_{VH}(M(\mathcal{C}))$.
$\forall X\subseteq E$, the equality $XH(X)=VH(X)$ holds, so $\forall x\in E$, $VH(\{x\})=N(x)$.
According to the definition of $\mathcal{I}_{XH}(\mathcal{C})$ and $\mathcal{I}_{VH}(\mathcal{C})$, we have $\mathcal{I}_{XH}(\mathcal{C})=\mathcal{I}_{VH}(\mathcal{C})$.
\end{proof}

\begin{theorem}
Let $\mathcal{C}$ be a covering. If $XH$ and $SH$ induced by $\mathcal{C}$ are closure operators, then
$\mathcal{I}_{SH}(\mathcal{C})\subseteq \mathcal{I}_{XH}(\mathcal{C})=\mathcal{I}_{VH}(\mathcal{C})$ and $\mathcal{L}_{SH}(M(\mathcal{C}))\subseteq \mathcal{L}_{XH}(M(\mathcal{C}))= \mathcal{L}_{VH}(M(\mathcal{C}))$.
\end{theorem}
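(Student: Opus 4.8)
The plan is to split the statement into the two equalities, which are already available, and the two inclusions, which require a short structural argument. The equalities $\mathcal{I}_{XH}(\mathcal{C})=\mathcal{I}_{VH}(\mathcal{C})$ and $\mathcal{L}_{XH}(M(\mathcal{C}))=\mathcal{L}_{VH}(M(\mathcal{C}))$ are exactly the content of Proposition 28, whose hypothesis (that $XH$ is a closure operator) is assumed here; so I would invoke Proposition 28 directly and concentrate the work on the inclusions $\mathcal{I}_{SH}(\mathcal{C})\subseteq \mathcal{I}_{XH}(\mathcal{C})$ and $\mathcal{L}_{SH}(M(\mathcal{C}))\subseteq \mathcal{L}_{XH}(M(\mathcal{C}))$.

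The key observation is that, since both $SH$ and $XH$ are closure operators, Theorem 8 and Theorem 9 tell us that $\{I(x):x\in E\}$ and $\{N(x):x\in E\}$ are both partitions of $E$. Because $N(x)=\bigcap\{K:x\in K\in\mathcal{C}\}\subseteq \bigcup\{K:x\in K\in\mathcal{C}\}=I(x)$ for every $x$ (the intersection of the blocks through $x$ lies inside their union, as $x$ belongs to every one of them), the block of the $N$-partition containing any point is contained in the block of the $I$-partition containing that point. Here one uses that in a partition the block containing a point $z$ is exactly $N(z)$ (resp. $I(z)$), since $z\in N(z)\subseteq I(z)$ and the blocks are pairwise disjoint. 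Hence $\{N(x):x\in E\}$ refines $\{I(x):x\in E\}$, and each $I(x)$ is a disjoint union of neighborhoods.

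For the first inclusion I would use the transversal descriptions $\mathcal{I}_{SH}(\mathcal{C})=\mathcal{I}^{'}$ and $\mathcal{I}_{XH}(\mathcal{C})=\mathcal{I}^{''}$ from Proposition 11 and Proposition 18. If $I\in\mathcal{I}_{SH}(\mathcal{C})$, then $|I\bigcap I(x_{i})|\leq 1$ for every $I$-block; since every $N$-block is contained in some $I$-block, it follows that $|I\bigcap N(x_{j})|\leq |I\bigcap I(x_{i})|\leq 1$, so $I\in\mathcal{I}_{XH}(\mathcal{C})$. For the second inclusion I would use that closed sets of these matroids are precisely the fixed points of the respective closure operators, hence unions of blocks: a set $X\in \mathcal{L}_{SH}(M(\mathcal{C}))$ satisfies $X=SH(X)=\bigcup_{x\in X}I(x)$ and is thus a union of $I$-blocks; since each $I$-block is a union of $N$-blocks by the refinement, $X$ is a union of $N$-blocks and therefore satisfies $XH(X)=X$, placing it in $\mathcal{L}_{XH}(M(\mathcal{C}))$. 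Combining these two inclusions with Proposition 28 yields the full chain.

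The step I expect to carry the most weight is the refinement argument: once it is established that $\{N(x):x\in E\}$ refines $\{I(x):x\in E\}$, both inclusions reduce to the elementary facts that meeting each larger block at most once forces meeting each smaller sub-block at most once, and that a union of larger blocks is automatically a union of smaller blocks. The containment $N(x)\subseteq I(x)$ is immediate from the definitions, so no genuine difficulty arises; the only care needed is the identification of a partition block with $N(z)$ or $I(z)$ noted above, which is what lets me pass freely between the point-wise inclusion and the block-wise refinement.
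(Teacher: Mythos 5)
Your proposal is correct and follows essentially the same route as the paper: both proofs reduce everything to the observation that $N(x)\subseteq I(x)$ forces the partition $\{N(x):x\in E\}$ to refine $\{I(x):x\in E\}$, derive $\mathcal{I}_{SH}(\mathcal{C})\subseteq\mathcal{I}_{XH}(\mathcal{C})$ from the block descriptions of the independent sets and $\mathcal{L}_{SH}(M(\mathcal{C}))\subseteq\mathcal{L}_{XH}(M(\mathcal{C}))$ from closed sets being unions of blocks, and then invoke Proposition 28 for the $XH$/$VH$ equalities. Your write-up actually spells out the refinement-to-inclusion step that the paper leaves as ``based on this, we can obtain,'' but the underlying argument is identical.
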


\begin{proof}
If $XH$ and $SH$ induced by $\mathcal{C}$ are closure operators, then $\{N(x):x\in E\}$ and $\{I(x):x\in E\}$ forms a partition of $E$, respectively.
$\forall x\in X$, $N(x)=\bigcap_{x\in K}K\subseteq \bigcup_{i\in K}K=I(x)$, thus $\{N(x):x\in E\}$ is finer than $\{I(x):x\in E\}$.
Based on this, we can obtain $\mathcal{I}_{SH}(\mathcal{C})\subseteq \mathcal{I}_{XH}(\mathcal{C})$.
According to Proposition 28, we have  $\mathcal{I}_{SH}(\mathcal{C})\subseteq \mathcal{I}_{XH}(\mathcal{C})=\mathcal{I}_{VH}(\mathcal{C})$.

For all $X\in \mathcal{L}_{SH}(M(\mathcal{C}))$, $X=SH$$(X)=\bigcup_{x\in X}I(x)$.
Since $x\in N(x)\subseteq I(x)$, $X=\bigcup_{x\in X}\{x\}\subseteq \bigcup_{x\in X}N(x)\subseteq \bigcup_{x\in X}I(x)=X$, thus, $X=\bigcup_{x\in X}N(x)$, that is, $X\in \mathcal{L}_{XH}(M(\mathcal{C}))$.
Hence, $\mathcal{L}_{SH}(M(\mathcal{C}))\subseteq \mathcal{L}_{XH}(M(\mathcal{C}))$.
By the fact that $\{N(x):x\in E\}$ forms a partition and Proposition 28, $\mathcal{I}_{XH}(\mathcal{C})=\mathcal{I}_{VH}(\mathcal{C})$.
Thus $\mathcal{L}_{SH}(M(\mathcal{C}))\subseteq \mathcal{L}_{XH}(M(\mathcal{C}))= \mathcal{L}_{VH}(M(\mathcal{C}))$.
\end{proof}

\begin{example}
From Example 2, we know $N(1)=\{1\}, N(2)=\{2\}, N(3)=\{3\}, N(4)\\=N(5)=\{4,5\}$ and $I(1)=I(2)=I(3)=\{1,2,3\}$, $I(4)=I(5)=\{4,5\}$. $\mathcal{L}_{SH}(M(\mathcal{C}))=\{\emptyset, \{1,2,3\},\{4,5\},\{1,2,3,4,5\}\}$ and $\mathcal{L}_{XH}(M(\mathcal{C}))=\{\emptyset, \{1\},\\\{2\},$ $\{3\},\{4,5\},\{1,2\},\{1,3\},\{2,3\},\{1,4,5\},\{2,4,5\},\{3,4,5\},\{1,2,3\},\{1,2,4,5\},\\\{1,3,4,5\},\{2,3,4,5\},\{1,2,3,4,5\}\}$.
The structures of them are showed in figure 1.
\end{example}

When a covering degenerates into a partition, we can obtain the following result.

\begin{theorem}
If $\mathcal{C}$ is a partition, then $\mathcal{I}_{SH}(\mathcal{C})=\mathcal{I}_{XH}(\mathcal{C})=\mathcal{I}_{VH}(\mathcal{C})=\mathcal{I}(\mathcal{C})$ and $\mathcal{L}_{SH}(M(\mathcal{C}))=\mathcal{L}_{XH}(M(\mathcal{C}))=\mathcal{L}_{VH}(M(\mathcal{C}))=\mathcal{L}(M(\mathcal{C}))$.
\end{theorem}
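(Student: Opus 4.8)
The plan is to reduce the final theorem (Theorem~11) to the results already established, by showing that when $\mathcal{C}$ is a partition, all four constructions collapse onto the transversal matroid $M(\mathcal{C})$ and its closed-set lattice. The central observation is that a partition is a very special covering: for each $x\in E$ there is exactly one block $P_i$ containing $x$, so the indiscernible neighborhood $I(x)$, the neighborhood $N(x)$, and the single-point image $VH(\{x\})$ all coincide with that unique block $P_i$. First I would verify this coincidence directly from the definitions of $I(x)=\bigcup\{K:x\in K\in\mathcal{C}\}$ and $N(x)=\bigcap\{K:x\in K\in\mathcal{C}\}$, both of which equal $P_i$ when $\mathcal{C}=\mathcal{P}$ is a partition. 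Consequently the families $\{I(x):x\in E\}$, $\{N(x):x\in E\}$, and $\{VH(\{x\}):x\in E\}$ are each literally equal to $\mathcal{P}$, and in particular each forms a partition.

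Next I would invoke the characterization theorems to confirm that all three upper approximation operators are genuine matroidal closure operators in this setting. Since $\{I(x):x\in E\}=\mathcal{P}$ forms a partition, Theorem~8 guarantees that $SH$ is a closure operator of a matroid; since $\{N(x):x\in E\}=\mathcal{P}$ forms a partition, Theorem~11 (on $XH$) applies; and since $\{VH(\{x\}):x\in E\}=\mathcal{P}$ forms a partition, Theorem~12 (on $VH$) applies. Thus $\mathcal{I}_{SH}(\mathcal{C})$, $\mathcal{I}_{XH}(\mathcal{C})$, and $\mathcal{I}_{VH}(\mathcal{C})$ are all well-defined as families of independent sets of matroids, and by Definitions~14, 16, 18 together with Propositions~11, 18, 24 their independence conditions read uniformly as $|I\cap P_i|\le 1$ for every block $P_i\in\mathcal{P}$. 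Because the three conditions are defined relative to the same family $\mathcal{P}$, they describe the same collection of subsets, giving $\mathcal{I}_{SH}(\mathcal{C})=\mathcal{I}_{XH}(\mathcal{C})=\mathcal{I}_{VH}(\mathcal{C})$.

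To close the chain and bring in the transversal matroid, I would recall the remark preceding Lemma~9, which states that $\mathcal{I}(\mathcal{P})=\{X\subseteq E:|X\cap P_i|\le 1\}$ is exactly the family of partial transversals of the partition $\mathcal{P}$. This matches the common description of the other three families verbatim, so $\mathcal{I}_{SH}(\mathcal{C})=\mathcal{I}_{XH}(\mathcal{C})=\mathcal{I}_{VH}(\mathcal{C})=\mathcal{I}(\mathcal{C})$ as claimed. Since all four matroids have identical independent sets, they are the same matroid, and therefore their closure operators agree and their closed-set lattices coincide; hence $\mathcal{L}_{SH}(M(\mathcal{C}))=\mathcal{L}_{XH}(M(\mathcal{C}))=\mathcal{L}_{VH}(M(\mathcal{C}))=\mathcal{L}(M(\mathcal{C}))$. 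Alternatively, the lattice equality follows from Theorem~10 together with Proposition~28, since a partition is in particular a covering for which $SH$, $XH$, $VH$ are all closure operators.

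I do not expect any serious obstacle here, since the statement is essentially the degenerate specialization of the inclusions in Theorem~10, where all the inclusions become equalities. The only point requiring care is the bookkeeping identification of $I(x)$, $N(x)$, and $VH(\{x\})$ with a single block $P_i$ and the matching of the three independence conditions to the partial-transversal description of $\mathcal{I}(\mathcal{C})$; this is the step where one must be explicit that, for a partition, ``exactly one block contains $x$'' makes the union, the intersection, and the neighborhood-based upper approximation all collapse to the same set. Once this identification is in place, the equalities of matroids, and hence of closed-set lattices, are immediate.
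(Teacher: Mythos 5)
Your proposal is correct and follows essentially the same route as the paper: both rest on the observation that for a partition each $x$ lies in exactly one block, so $I(x)=N(x)=VH(\{x\})$ is that block, whence the three operators coincide with each other (and with $R^{\ast}$), the independence conditions all reduce to $|X\cap P_{i}|\leq 1$, and the matroids and their closed-set lattices are identical. Your version merely spells out the intermediate appeals to the closure-operator characterizations and the partial-transversal description of $\mathcal{I}(\mathcal{P})$ that the paper's terser proof leaves implicit (your internal theorem numbers are slightly off, but the cited content is the right one in each case).
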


\begin{proof}
Since $\mathcal{C}$ is a partition, $\forall x\in E$, $I(x)=N(x)=VH(\{x\})=K$ where $x\in K$, and $\forall X\subseteq E$, $SH(X)=XH(X)=VH(X)$.
Thus $\mathcal{I}_{SH}(\mathcal{C})=\mathcal{I}_{XH}(\mathcal{C})=\mathcal{I}_{VH}(\mathcal{C})=\mathcal{I}(\mathcal{C})$ and $\mathcal{L}_{SH}(M(\mathcal{C}))=\mathcal{L}_{XH}(M(\mathcal{C}))=\mathcal{L}_{VH}(M(\mathcal{C}))=\mathcal{L}(M(\mathcal{C}))$.
\end{proof}

Next, we discuss the reducible element and immured element's influence on the independent set and the closed-set lattice.

\begin{theorem}
Let $\mathcal{F}$ be a family subset on $E$, $K\in \mathcal{F}$.
$\mathcal{I}(\mathcal{F}-\{K\})\subseteq \mathcal{I}(\mathcal{F})$.
\end{theorem}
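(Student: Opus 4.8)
The plan is to argue straight from the definition of partial transversal, since by Proposition 4 the collection $\mathcal{I}(\mathcal{F})$ is precisely the family of all partial transversals of $\mathcal{F}$. The conceptual content is just the monotonicity of the partial-transversal notion: passing from the smaller family $\mathcal{F}-\{K\}$ to the larger family $\mathcal{F}$ only adds candidate index sets, so it cannot destroy an existing system of distinct representatives. Thus every partial transversal of $\mathcal{F}-\{K\}$ should remain a partial transversal of $\mathcal{F}$.

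First I would fix notation. Write $\mathcal{F}=\{F_{i}:i\in J\}$ with $J=\{1,2,\cdots,m\}$, and suppose $K=F_{l}$ for some $l\in J$, so that $\mathcal{F}-\{K\}=\{F_{i}:i\in J'\}$ is indexed by $J'=J-\{l\}\subseteq J$. Then I would take an arbitrary $X\in \mathcal{I}(\mathcal{F}-\{K\})$. By Definition 3, $X$ being a partial transversal of $\mathcal{F}-\{K\}$ means there is some $K'\subseteq J'$ such that $X$ is a transversal of $\{F_{i}:i\in K'\}$; that is, $X=\{e_{i}:i\in K'\}$ with $e_{i}\in F_{i}$ and the $e_{i}$ pairwise distinct.

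The key step is then the observation that $K'\subseteq J'\subseteq J$, so the very same system of distinct representatives $\{e_{i}:i\in K'\}$ witnesses that $X$ is a transversal of $\{F_{i}:i\in K'\}$ with $K'\subseteq J$. By Definition 3 this is exactly what it means for $X$ to be a partial transversal of $\mathcal{F}$, hence $X\in \mathcal{I}(\mathcal{F})$. Since $X$ was arbitrary, $\mathcal{I}(\mathcal{F}-\{K\})\subseteq \mathcal{I}(\mathcal{F})$.

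I do not expect any genuine obstacle here: the result is essentially bookkeeping on the index sets, and the only point requiring care is to keep the witnessing index set $K'$ fixed while re-reading it inside the larger index set $J$. One could alternatively invoke the fact that both $M(\mathcal{F})$ and $M(\mathcal{F}-\{K\})$ are matroids on $E$ by Proposition 4 and note that deleting a block cannot create new partial transversals, but the direct definitional argument is cleaner and self-contained.
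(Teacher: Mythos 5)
Your proposal is correct and follows essentially the same route as the paper: both arguments take an arbitrary partial transversal of $\mathcal{F}-\{K\}$ and observe that the witnessing system of distinct representatives, drawn from blocks all distinct from $K$, still lives inside the larger family $\mathcal{F}$, so the same witness certifies membership in $\mathcal{I}(\mathcal{F})$. Your version is slightly more careful with the index-set bookkeeping ($K'\subseteq J'\subseteq J$), but the substance is identical.
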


\begin{proof}
For all $I\in \mathcal{I}(\mathcal{F}-\{K\})$, we may as well suppose $I=\{i_{1}, i_{2}, \cdots, i_{t}\}, i_{1},i_{2},\cdots\\ i_{t}\in E$.
According to the definition of transversal matroid, there exist different blocks $K_{1},K_{2},\cdots, \\K_{t}\in \mathcal{F}$ satisfy $K_{i}\neq K$ and $i_{j}\in K_{j}$ for all $1\leq j\leq t$.
Thus $I\in \mathcal{I}(\mathcal{F})$.
\end{proof}

The following example illustrates $\mathcal{I}(\mathcal{F})\nsubseteq \mathcal{I}(\mathcal{F}-\{K\})$.

\begin{example}
Let $\mathcal{F}=\{K_{1},K_{2},K_{3}\}$ be a family subset of $E=\{1,2,3,4\}$, where $K_{1}=\{1,2\}$, $K_{2}=\{1,3\}$, $K_{3}=\{3\}$. $\mathcal{I}(\mathcal{F})=P(E)$, $\mathcal{I}(\mathcal{F}-\{K_{3}\})=\{\emptyset,\{1\},\{2\},\\\{3\},\{1,3\},\{1,2\} \{2,3\}\}$.
Hence, $\mathcal{I}(\mathcal{F})\nsubseteq \mathcal{I}(\mathcal{F}-\{K\})$.
\end{example}

\begin{corollary}
Let $\mathcal{C}$ be a covering on $E$, $K\in \mathcal{C}$.
If $K$ is reducible, then $\mathcal{I}(\mathcal{C}-\{K\})\subseteq \mathcal{I}(\mathcal{C})$.
\end{corollary}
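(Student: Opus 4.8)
The plan is to derive the corollary directly from the theorem establishing $\mathcal{I}(\mathcal{F}-\{K\})\subseteq\mathcal{I}(\mathcal{F})$ for an \emph{arbitrary} family $\mathcal{F}$ of subsets of $E$ and any $K\in\mathcal{F}$. Since every covering $\mathcal{C}$ is in particular such a family, I would simply instantiate that theorem with $\mathcal{F}=\mathcal{C}$ to obtain $\mathcal{I}(\mathcal{C}-\{K\})\subseteq\mathcal{I}(\mathcal{C})$ immediately. If one prefers a self-contained argument, the same reasoning unwinds as follows: each $I\in\mathcal{I}(\mathcal{C}-\{K\})$ is a partial transversal of $\mathcal{C}-\{K\}$, so its elements admit a system of distinct representatives drawn from distinct blocks of $\mathcal{C}-\{K\}$; these blocks all belong to $\mathcal{C}$ and remain distinct, so the very same system exhibits $I$ as a partial transversal of $\mathcal{C}$, i.e.\ $I\in\mathcal{I}(\mathcal{C})$.

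I would then isolate the role of the reducibility hypothesis, because the inclusion itself holds for \emph{every} $K$ and does not use it. What reducibility contributes is that $\mathcal{C}-\{K\}$ remains a covering: by the definition of a reducible element, $K$ is a union of blocks in $\mathcal{C}-\{K\}$, so every point of $K$ is already covered by the surviving blocks and hence $\bigcup(\mathcal{C}-\{K\})=E$. Recording this keeps both $\mathcal{I}(\mathcal{C}-\{K\})$ and $\mathcal{I}(\mathcal{C})$ within the covering-based rough set framework, as partial-transversal independence families of genuine coverings rather than of the more general families of the quoted theorem.

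There is essentially no obstacle here: the combinatorial content has already been absorbed by that theorem, and beyond a one-line citation the only thing to verify is the elementary covering property of $\mathcal{C}-\{K\}$ supplied by reducibility. I would, however, resist asserting the reverse inclusion, since it can fail even for reducible $K$; for instance, with $\mathcal{C}=\{\{1,2,3\},\{1,2\},\{3\}\}$ the block $\{1,2,3\}$ is reducible, yet $\{1,2\}$ is independent in $\mathcal{I}(\mathcal{C})$ but not in $\mathcal{I}(\mathcal{C}-\{\{1,2,3\}\})$. Thus the corollary is genuinely a one-directional statement, and I would present it as such.
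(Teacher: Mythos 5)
Your proposal is correct and follows the same route as the paper: the corollary is obtained by instantiating the general theorem $\mathcal{I}(\mathcal{F}-\{K\})\subseteq \mathcal{I}(\mathcal{F})$ with $\mathcal{F}=\mathcal{C}$, and the paper likewise states it without further proof. Your added observations --- that reducibility is only needed to keep $\mathcal{C}-\{K\}$ a covering, and that the reverse inclusion can fail --- are accurate and consistent with the paper's Example 5.
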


\begin{corollary}
Let $\mathcal{C}$ be a covering.
$\mathcal{I}(reduct(\mathcal{C}))\subseteq \mathcal{I}(\mathcal{C})$.
\end{corollary}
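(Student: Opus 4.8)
The plan is to obtain $reduct(\mathcal{C})$ from $\mathcal{C}$ by deleting reducible blocks one at a time, and then to chain the single-deletion inclusion supplied by Corollary 7 along this sequence. First I would recall that, by definition, $reduct(\mathcal{C})$ arises from $\mathcal{C}$ through the reduction procedure: one repeatedly removes a block that is reducible in the covering obtained so far, stopping once the covering is irreducible. Label the blocks removed, in order of removal, as $K_{1}, K_{2}, \cdots, K_{p}$, and set $\mathcal{C}_{0}=\mathcal{C}$ and $\mathcal{C}_{i}=\mathcal{C}_{i-1}-\{K_{i}\}$ for $1\leq i\leq p$, so that $\mathcal{C}_{p}=reduct(\mathcal{C})$. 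Each $\mathcal{C}_{i}$ is still a covering of $E$, since a reducible block is a union of other blocks and hence its removal uncovers no point of $E$; this guarantees that the hypothesis of Corollary 7 is met at every stage.

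Next I would apply Corollary 7 at each step. Because $K_{i}$ is, by construction, reducible in $\mathcal{C}_{i-1}$, Corollary 7 yields $\mathcal{I}(\mathcal{C}_{i})=\mathcal{I}(\mathcal{C}_{i-1}-\{K_{i}\})\subseteq \mathcal{I}(\mathcal{C}_{i-1})$ for every $i\in\{1,\cdots,p\}$. Composing these $p$ inclusions by transitivity of set inclusion gives $\mathcal{I}(reduct(\mathcal{C}))=\mathcal{I}(\mathcal{C}_{p})\subseteq \mathcal{I}(\mathcal{C}_{p-1})\subseteq \cdots \subseteq \mathcal{I}(\mathcal{C}_{0})=\mathcal{I}(\mathcal{C})$, which is precisely the desired conclusion. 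Formally this amounts to a straightforward induction on $p$, the number of reducible blocks discarded, with base case $p=0$ (where $reduct(\mathcal{C})=\mathcal{C}$) being trivial.

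There is no genuinely hard step here, as the corollary is essentially an iteration of Corollary 7. The only point demanding a little care is the bookkeeping of the reduction process: one must check that each intermediate family $\mathcal{C}_{i}$ remains a covering, so that Corollary 7 applies, and that $reduct(\mathcal{C})$ is reached after finitely many single-block deletions, which holds because $E$, and therefore $\mathcal{C}$, is finite. Alternatively, I could bypass reducibility entirely and invoke Theorem 14, which gives $\mathcal{I}(\mathcal{F}-\{K\})\subseteq \mathcal{I}(\mathcal{F})$ for an arbitrary family $\mathcal{F}$; applying it to the subfamily $reduct(\mathcal{C})\subseteq \mathcal{C}$ and chaining in the identical inductive manner would yield the same result without appealing to the covering property of the intermediate families.
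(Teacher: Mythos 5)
Your proof is correct and follows the route the paper intends: the corollary is stated without proof as an immediate consequence of Corollary 7 (equivalently Theorem 14), obtained by deleting the reducible blocks one at a time and chaining the resulting inclusions exactly as you describe. Your closing observation that Theorem 14 alone suffices (so that neither the covering property of the intermediate families nor the reducibility of the deleted blocks actually needs to be checked) is a valid and slightly cleaner shortcut.
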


\begin{corollary}
Let $\mathcal{C}$ be a covering, $K\in \mathcal{C}$.
If $K$ is an immured element, then $\mathcal{I}(\mathcal{C}-\{K\})\subseteq \mathcal{I}(\mathcal{C})$.
\end{corollary}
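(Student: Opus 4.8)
The plan is to obtain this corollary as an immediate specialization of the preceding Theorem asserting $\mathcal{I}(\mathcal{F}-\{K\})\subseteq \mathcal{I}(\mathcal{F})$ for an arbitrary family $\mathcal{F}$ of subsets of $E$ and any $K\in\mathcal{F}$. A covering $\mathcal{C}$ is in particular such a family, and an immured element $K$ is, by its definition, a member of $\mathcal{C}$. Hence, taking $\mathcal{F}=\mathcal{C}$ in that Theorem directly yields $\mathcal{I}(\mathcal{C}-\{K\})\subseteq \mathcal{I}(\mathcal{C})$, which is exactly the claim. This mirrors the argument for the analogous corollary on reducible elements, both being one-line consequences of the same general inclusion.

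The only conceptual point worth flagging is that the immured hypothesis is not actually used to establish the inclusion. The underlying Theorem holds for the removal of any single block, regardless of whether that block happens to be contained in another block, reducible, or entirely arbitrary. The immured condition serves only to situate the statement within the study of how covering reductions (here, exclusions) interact with the transversal matroid; it plays no role in the deduction itself. Consequently, I do not expect any genuine obstacle: the difficulty, if one insists on naming it, is merely recognizing that the hypothesis is decorative and that the result is already subsumed by the general Theorem.

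Should a self-contained argument be preferred to a citation, I would simply unwind the definition of a partial transversal. Given $I=\{i_{1},\ldots,i_{t}\}\in\mathcal{I}(\mathcal{C}-\{K\})$, there exist distinct blocks $K_{1},\ldots,K_{t}\in\mathcal{C}-\{K\}$ with $i_{j}\in K_{j}$ for each $j$; these same blocks lie in $\mathcal{C}$ and remain distinct, so they witness $I$ as a partial transversal of $\mathcal{C}$, giving $I\in\mathcal{I}(\mathcal{C})$. Either route terminates immediately.
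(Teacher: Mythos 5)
Your proposal is correct and matches the paper's approach: the paper states this corollary without proof as an immediate consequence of the preceding theorem that $\mathcal{I}(\mathcal{F}-\{K\})\subseteq \mathcal{I}(\mathcal{F})$ for any family $\mathcal{F}$ and any $K\in\mathcal{F}$, exactly as you argue. Your observation that the immured hypothesis plays no logical role in the inclusion is also accurate.
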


\begin{corollary}
Let $\mathcal{C}$ be a covering.
$\mathcal{I}(exclusion(\mathcal{C}))\subseteq \mathcal{I}(\mathcal{C})$
\end{corollary}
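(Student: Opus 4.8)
The plan is to view $exclusion(\mathcal{C})$ as the family obtained from $\mathcal{C}$ by deleting all of its immured blocks (Definition 10), so that $exclusion(\mathcal{C})$ is simply a subfamily of $\mathcal{C}$, and then to obtain $\mathcal{I}(exclusion(\mathcal{C}))\subseteq\mathcal{I}(\mathcal{C})$ by iterating the single-block deletion inclusion $\mathcal{I}(\mathcal{F}-\{K\})\subseteq\mathcal{I}(\mathcal{F})$ (Theorem 14), removing one immured block at a time. This parallels the way $\mathcal{I}(reduct(\mathcal{C}))\subseteq\mathcal{I}(\mathcal{C})$ (Corollary 5) and Proposition 14 are obtained, where the desired property is shown to persist as blocks are peeled off successively.

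First I would let $\{K_{1},K_{2},\cdots,K_{r}\}$ denote the set of all immured elements of $\mathcal{C}$, so that $exclusion(\mathcal{C})=\mathcal{C}-\{K_{1},K_{2},\cdots,K_{r}\}$ by Definition 10. Next I would set $\mathcal{C}_{0}=\mathcal{C}$ and $\mathcal{C}_{k}=\mathcal{C}_{k-1}-\{K_{k}\}$ for $1\leq k\leq r$, so that $\mathcal{C}_{r}=exclusion(\mathcal{C})$ and each $\mathcal{C}_{k}$ is again a family of subsets of $E$. Theorem 14 then applies at every stage and gives $\mathcal{I}(\mathcal{C}_{k})=\mathcal{I}(\mathcal{C}_{k-1}-\{K_{k}\})\subseteq\mathcal{I}(\mathcal{C}_{k-1})$; chaining these inclusions yields $\mathcal{I}(exclusion(\mathcal{C}))=\mathcal{I}(\mathcal{C}_{r})\subseteq\mathcal{I}(\mathcal{C}_{r-1})\subseteq\cdots\subseteq\mathcal{I}(\mathcal{C}_{0})=\mathcal{I}(\mathcal{C})$. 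Equivalently, this is a one-line induction on $r$ with base case $r=0$, where $exclusion(\mathcal{C})=\mathcal{C}$.

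The main point to watch is the iteration itself. One is tempted to invoke the immured-element corollary (Corollary 6) at each stage, but that would require each $K_{k}$ to remain immured in the already-shrunken family $\mathcal{C}_{k-1}$, and this is not automatic: the larger block $K'\supset K_{k}$ witnessing immured-ness in $\mathcal{C}$ may itself have been removed at an earlier step. I would sidestep this entirely by appealing to Theorem 14, which holds for the deletion of an arbitrary block from an arbitrary family and uses nothing about immured-ness, so the inclusion chain closes with no preservation hypothesis at all. I note in passing that coverage is never actually needed for the argument; should it be wanted, each $\mathcal{C}_{k}$ contains $exclusion(\mathcal{C})$, which is a covering, and is therefore itself a covering.
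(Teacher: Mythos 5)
Your proof is correct and matches the paper's intent: the paper states this corollary without an explicit proof, placing it as an immediate consequence of Theorem 14, and your iteration of the single-block deletion inclusion over all immured elements is precisely that argument. Your remark about why one should chain Theorem 14 rather than the immured-element corollary (since immured-ness need not persist after earlier deletions) is a sensible precaution the paper does not spell out.
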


\begin{theorem}
Let $\mathcal{F}$ be a family subset on $E$, $\forall K\in \mathcal{F}$.
$\mathcal{L}(M(\mathcal{F}-\{K\}))\subseteq \mathcal{L}(M(\mathcal{F}))$.
\end{theorem}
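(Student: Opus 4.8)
The plan is to show directly that every closed set $X$ of $M(\mathcal{F}-\{K\})$ is a closed set of $M(\mathcal{F})$. Write $M_{1}=M(\mathcal{F}-\{K\})$ and $M_{2}=M(\mathcal{F})$, and fix $X$ with $cl_{M_{1}}(X)=X$. Since $X\subseteq cl_{M_{2}}(X)$ always holds, it suffices to prove that $cl_{M_{2}}(X)\subseteq X$, i.e. that for every $y\in E-X$ one has $r_{M_{2}}(X\cup\{y\})=r_{M_{2}}(X)+1$. I would first record the two facts I will lean on: by the preceding theorem ($\mathcal{I}(\mathcal{F}-\{K\})\subseteq\mathcal{I}(\mathcal{F})$) we get $r_{M_{1}}\le r_{M_{2}}$ pointwise; and, because $X$ is a flat of $M_{1}$ with $y\notin X$, the definition of closure gives $r_{M_{1}}(X\cup\{y\})=r_{M_{1}}(X)+1$.

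The structural input that makes the estimate clean is that a transversal matroid is the union of the rank-one matroids of its blocks; concretely $M_{2}=M_{1}\vee L$, where $L=M(\{K\})$ is the rank-one matroid whose set of non-loops is exactly $K$, so that $r_{L}(Y)=1$ if $Y\cap K\neq\emptyset$ and $r_{L}(Y)=0$ otherwise. This is a standard property of the transversal matroids of~\cite{Oxley93Matroid} used throughout, and it supplies the union rank formula
\[
r_{M_{2}}(Z)=\min_{Y\subseteq Z}\big(|Z-Y|+r_{M_{1}}(Y)+r_{L}(Y)\big).
\]

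The computation then proceeds by splitting the minimum defining $r_{M_{2}}(X\cup\{y\})$ according to whether the optimal $Y$ avoids or contains $y$. For $Y\subseteq X$ (the case $y\notin Y$) the term is $|X-Y|+1+r_{M_{1}}(Y)+r_{L}(Y)\ge r_{M_{2}}(X)+1$. For $Y=Y'\cup\{y\}$ with $Y'\subseteq X$, monotonicity of $r_{M_{1}}$ and $r_{L}$ gives the term $\ge |X-Y'|+r_{M_{1}}(Y')+r_{L}(Y')\ge r_{M_{2}}(X)$, and I claim this last inequality is strict. Indeed, equality would force $r_{M_{1}}(Y'\cup\{y\})=r_{M_{1}}(Y')$, that is $y\in cl_{M_{1}}(Y')$; but $Y'\subseteq X$ and $X$ is closed in $M_{1}$, so $y\in cl_{M_{1}}(Y')\subseteq cl_{M_{1}}(X)=X$, contradicting $y\notin X$. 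Hence every term exceeds $r_{M_{2}}(X)$, giving $r_{M_{2}}(X\cup\{y\})=r_{M_{2}}(X)+1$ and $y\notin cl_{M_{2}}(X)$; letting $y$ range over $E-X$ yields $cl_{M_{2}}(X)=X$, so $X\in\mathcal{L}(M(\mathcal{F}))$.

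The main obstacle is the case where the extra block $K$ genuinely contributes to a maximal partial transversal of $X$, i.e. $r_{M_{2}}(X)=r_{M_{1}}(X)+1$; when $K$ is inessential the result follows at once from the inclusion of independent sets, but in the essential case that inclusion alone is \emph{not} enough, since for general matroids with nested families of independent sets a flat of the smaller need not be a flat of the larger. One therefore has to exploit the block structure, and the union rank formula packages precisely this structure, reducing the essential case to the short monotonicity-and-closure estimate above; the step $cl_{M_{1}}(Y')\subseteq cl_{M_{1}}(X)$ is what converts the hypothesis ``$X$ is closed'' into the required strict rank increase. If one insists on staying inside the transversal language and avoiding the union formula, this same case must instead be settled by an augmenting-path argument in the bipartite element–block incidence graph, rerouting a maximum matching of $X$ through $K$; that route is correct but technically fiddly, whereas the union formula keeps the argument to a two-line rank comparison.
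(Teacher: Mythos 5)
Your proof is correct, but it follows a genuinely different route from the paper's. The paper argues through atoms: it first shows $cl_{\mathcal{F}}(\{x\})\subseteq cl_{\mathcal{F}-\{K\}}(\{x\})$ by applying the inclusion $\mathcal{I}(\mathcal{F}-\{K\})\subseteq\mathcal{I}(\mathcal{F})$ to two-element sets, deduces that every atom of $\mathcal{L}(M(\mathcal{F}-\{K\}))$ is a flat of $M(\mathcal{F})$, and then uses atomicity of geometric lattices to write an arbitrary flat of $M(\mathcal{F}-\{K\})$ as a join of such atoms. You instead treat an arbitrary flat $X$ directly, via the decomposition $M(\mathcal{F})=M(\mathcal{F}-\{K\})\vee M(\{K\})$ and the matroid-union rank formula, showing $r_{M(\mathcal{F})}(X\cup\{y\})>r_{M(\mathcal{F})}(X)$ for every $y\notin X$; your case analysis is sound, and the strictness step (equality would force $y\in cl_{M_{1}}(Y')\subseteq cl_{M_{1}}(X)=X$) is exactly where the hypothesis that $X$ is closed in $M(\mathcal{F}-\{K\})$ enters. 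What your route buys is precision at the points the paper glosses over: the paper's second step applies the singleton inclusion $cl_{\mathcal{F}}(\cdot)\subseteq cl_{\mathcal{F}-\{K\}}(\cdot)$ to the non-singleton set $cl_{\mathcal{F}-\{K\}}(\{x\})$ without justification, and its third step silently identifies a join computed in $\mathcal{L}(M(\mathcal{F}-\{K\}))$ with one computed in $\mathcal{L}(M(\mathcal{F}))$, even though the two lattices have different join operations; your computation avoids both issues and, as you rightly observe, correctly diagnoses why ``nested independent sets imply nested flats'' fails for general matroids. The cost is that you import the matroid-union rank formula, a standard but nontrivial theorem from~\cite{Oxley93Matroid} that the paper never states; to keep the argument at the level of the paper's toolkit you would either prove that formula for the easy special case of union with a rank-one matroid, or fall back on the bipartite augmenting-path argument you sketch.
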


\begin{proof}
First, we prove the result $cl_{\mathcal{F}}(\{x\})\subseteq cl_{\mathcal{F}-\{K\}}(\{x\})$.
For all $y\notin cl_{\mathcal{F}-\{K\}}$, $\{x,y\}\in \mathcal{I}_{\mathcal{F}-\{K\}}$.
Since $\{x,y\}\in \mathcal{I}_{\mathcal{F}}$, thus $y\notin cl_{\mathcal{F}}(\{x\})$.
That implies that $cl_{\mathcal{F}}(\{x\})\subseteq cl_{\mathcal{F}-\{K\}}(\{x\})$.

Second, we prove that any atom of $\mathcal{L}(M(\mathcal{F}-\{K\}))$ is a closed set of $\mathcal{L}(M(\mathcal{F}))$, that is, $cl_{\mathcal{F}}(cl_{\mathcal{F}-\{K\}}(\{x\}))=cl_{\mathcal{F}-\{K\}}(\{x\})$.
Since $cl_{\mathcal{F}-\{K\}}(\{x\})\subseteq cl_{\mathcal{F}}(cl_{\mathcal{F}-\{K\}}(\\\{x\}))\subseteq cl_{\mathcal{F}-\{K\}}(cl_{\mathcal{F}-\{K\}}(\{x\}))=cl_{\mathcal{F}-\{K\}}(\{x\})$,
$cl_{\mathcal{F}}(cl_{\mathcal{F}-\{K\}}(\{x\}))=cl_{\mathcal{F}-\{K\}}\\(\{x\})$.

Third, we prove that $\mathcal{L}(M(\mathcal{F}-\{K\}))\subseteq \mathcal{L}(M(\mathcal{F}))$.
$\forall X\subseteq \mathcal{L}(M(\mathcal{F}-\{K\}))$,  $X=\bigvee_{i=1}^{m}cl_{\mathcal{F}-\{K\}}(\{x\})=\bigvee_{i=1}^{m}\bigvee_{j=1}^{s_{i}}cl_{\mathcal{F}}\\(\{x_{j}\})$ because $\mathcal{L}(M(\mathcal{F}-\{K\}))$ is a atomic lattice, thus $X\in \mathcal{L}(M(\mathcal{F}))$.
\end{proof}

\begin{example}
Based on Example 5, we have $\mathcal{L}(M(\mathcal{F}))=P(E)$ and $\mathcal{L}(M(\mathcal{F}-\{K_{3}\}))=\{\emptyset, \{1\},\\\{2\},\{3\},\{1,2,3\}\}$.
It is clear that $\mathcal{L}(M(\mathcal{F}))\nsubseteq \mathcal{L}(M(\mathcal{F}-\{K_{3}\}))$.
\end{example}

\begin{corollary}
Let $\mathcal{C}$ be a covering on $E$, $K\in \mathcal{C}$.
If $K$ is reducible, then $\mathcal{L}(M(\mathcal{C}-\{K\}))\subseteq \mathcal{L}(M(\mathcal{C}))$.
\end{corollary}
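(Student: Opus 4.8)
The plan is to obtain this corollary as an immediate specialization of Theorem 14, whose conclusion $\mathcal{L}(M(\mathcal{F}-\{K\}))\subseteq \mathcal{L}(M(\mathcal{F}))$ holds for an \emph{arbitrary} family $\mathcal{F}$ of subsets of $E$ and an arbitrary member $K\in\mathcal{F}$. Since a covering $\mathcal{C}$ is, in particular, a family of subsets of $E$, I would simply set $\mathcal{F}=\mathcal{C}$ in Theorem 14. For any $K\in\mathcal{C}$ this yields at once $\mathcal{L}(M(\mathcal{C}-\{K\}))\subseteq \mathcal{L}(M(\mathcal{C}))$, which is exactly the desired inclusion. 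Thus no new argument about closure operators or atomic lattices is required: all the work has already been carried out in the proof of Theorem 14, via the chain $cl_{\mathcal{F}}(\{x\})\subseteq cl_{\mathcal{F}-\{K\}}(\{x\})$ and the observation that each atom of $\mathcal{L}(M(\mathcal{F}-\{K\}))$ remains closed in $M(\mathcal{F})$.

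The only point that genuinely uses the hypothesis that $K$ is reducible is to keep the statement inside the covering framework rather than the more general family framework. I would verify that when $K$ is reducible in $\mathcal{C}$ --- i.e. $K$ is a union of some blocks of $\mathcal{C}-\{K\}$ --- every element of $K$ still lies in some remaining block, so $\bigcup(\mathcal{C}-\{K\})=E$ and $\mathcal{C}-\{K\}$ is again a covering of $E$. Consequently both $M(\mathcal{C}-\{K\})$ and $M(\mathcal{C})$ are transversal matroids induced by coverings, and by Lemma 4 their closed-set lattices are geometric lattices; the corollary then asserts a genuine containment between two such geometric lattices in the context of covering-based rough sets.

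Since the corollary is a direct instance of a theorem established just above, there is essentially no obstacle to overcome: the substantive content lives entirely in Theorem 14. The one item meriting a sentence of care is the covering-preservation check just described, which ensures that the reducibility assumption indeed leaves us with a covering, so that the corollary is stated in its intended setting.
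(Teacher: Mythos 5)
Your proposal is correct and matches the paper's intent exactly: the corollary is stated without proof precisely because it is the instance $\mathcal{F}=\mathcal{C}$ of Theorem 14, which holds for an arbitrary family of subsets. Your additional check that reducibility of $K$ keeps $\mathcal{C}-\{K\}$ a covering is a sensible remark on why the statement is phrased in the covering setting, but it adds nothing beyond what the paper already relies on.
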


\begin{corollary}
Let $\mathcal{C}$ be a covering on $E$.
$\mathcal{L}(M(reduct(\mathcal{C})))\subseteq \mathcal{L}(M(\mathcal{C}))$.
\end{corollary}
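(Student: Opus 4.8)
The plan is to derive this corollary from the single-removal statement of Corollary 8 by iterating it along the reduction process. First I would recall that, by Definition 18, $reduct(\mathcal{C})$ is the irreducible covering produced by repeatedly deleting reducible blocks, so there is a finite chain of coverings
\[\mathcal{C}=\mathcal{C}_{0}\supseteq \mathcal{C}_{1}\supseteq \cdots \supseteq \mathcal{C}_{n}=reduct(\mathcal{C}),\]
where $\mathcal{C}_{i+1}=\mathcal{C}_{i}-\{K_{i}\}$ and $K_{i}$ is reducible in $\mathcal{C}_{i}$ for each $0\leq i\leq n-1$. The chain is finite because $\mathcal{C}$ is finite and each step removes exactly one block, and it terminates precisely when the remaining covering is irreducible, which is $reduct(\mathcal{C})$.

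Next I would check that Corollary 8 applies at every step. Since $K_{i}$ is reducible in $\mathcal{C}_{i}$ (being the block deleted at that stage of the reduction), Corollary 8 gives $\mathcal{L}(M(\mathcal{C}_{i+1}))\subseteq \mathcal{L}(M(\mathcal{C}_{i}))$ for each $i$. Here it is essential that reducibility be read with respect to the \emph{current} covering $\mathcal{C}_{i}$ rather than the original $\mathcal{C}$; this is exactly how the reduct is built, and each $\mathcal{C}_{i}$ is still a covering of $E$ because deleting a reducible block---which is a union of other blocks---uncovers no point of $E$.

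Finally, chaining these inclusions by transitivity yields
\[\mathcal{L}(M(reduct(\mathcal{C})))=\mathcal{L}(M(\mathcal{C}_{n}))\subseteq \mathcal{L}(M(\mathcal{C}_{n-1}))\subseteq \cdots \subseteq \mathcal{L}(M(\mathcal{C}_{0}))=\mathcal{L}(M(\mathcal{C})),\]
which is the desired conclusion; formally this is a routine induction on $n$ with Corollary 8 as the inductive step. This mirrors exactly the way Corollary 5 is obtained from Corollary 4 at the level of independent sets. The only genuinely delicate point is the first paragraph: one must be sure that the reduction can be realized as a sequence of single-block deletions, each removing a block reducible in the covering present at that moment, and that the endpoint of every such sequence is $reduct(\mathcal{C})$. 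This is guaranteed by the well-definedness of the reduct in~\cite{ZhuWang03Reduction}, so no new argument is needed beyond invoking it.
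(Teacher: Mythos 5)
Your proof is correct and matches the paper's intended derivation: the corollary is stated without proof as an immediate consequence of Theorem 15 (via Corollary 8), obtained exactly as you do by iterating the single-removal inclusion along the reduction chain and chaining the inclusions by transitivity. One small simplification: since Theorem 15 holds for an arbitrary family $\mathcal{F}$ and an arbitrary $K\in\mathcal{F}$ (reducibility is never used there), you could delete the blocks of $\mathcal{C}-reduct(\mathcal{C})$ one at a time and invoke Theorem 15 directly at each step, which sidesteps the bookkeeping about each deleted block being reducible in the current covering and the appeal to the well-definedness of the reduct.
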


\begin{corollary}
Let $\mathcal{C}$ be a covering on $E$, $K\in \mathcal{C}$.
If $K$ is an immured element, then $\mathcal{L}(M(\mathcal{C}-\{K\}))\subseteq \mathcal{L}(M(\mathcal{C}))$.
\end{corollary}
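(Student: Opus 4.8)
The plan is to obtain this statement as an immediate specialization of the general inclusion $\mathcal{L}(M(\mathcal{F}-\{K\}))\subseteq \mathcal{L}(M(\mathcal{F}))$ established just above for an arbitrary family $\mathcal{F}$ of subsets of $E$ and an arbitrary member $K\in\mathcal{F}$. Since a covering $\mathcal{C}$ of $E$ is in particular such a family, and since $K\in\mathcal{C}$, the hypotheses of that theorem are satisfied by taking $\mathcal{F}=\mathcal{C}$. Hence $\mathcal{L}(M(\mathcal{C}-\{K\}))\subseteq \mathcal{L}(M(\mathcal{C}))$ follows at once, in exactly the same way as the two preceding corollaries dealing with reducible elements and with $reduct(\mathcal{C})$.

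The only point that merits a remark is the role of the immured hypothesis. The general theorem places no condition on $K$ beyond membership in the family, so the inclusion of closed-set lattices holds regardless of whether $K$ is immured. What the immured property actually contributes is that $\mathcal{C}-\{K\}$ remains a covering of $E$: by Definition of an immured element there is $K'\in\mathcal{C}$ with $K\subset K'$, so $\bigcup(\mathcal{C}-\{K\})=\bigcup\mathcal{C}=E$. This keeps the statement inside the covering-based rough set framework and ensures that $M(\mathcal{C}-\{K\})$ is the transversal matroid of a genuine covering. I would include this one-line verification for completeness, but it is not needed for the inclusion itself.

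Consequently there is no real obstacle: all of the substance was carried out in the general theorem, whose proof first showed $cl_{\mathcal{F}}(\{x\})\subseteq cl_{\mathcal{F}-\{K\}}(\{x\})$, then verified that every atom of $\mathcal{L}(M(\mathcal{F}-\{K\}))$ is closed in $\mathcal{L}(M(\mathcal{F}))$, and finally used atomicity to lift this to all closed sets. The corollary merely records the covering-theoretic instance relevant to immured elements, so the proof reduces to citing that theorem with $\mathcal{F}=\mathcal{C}$ and noting that $\mathcal{C}-\{K\}$ is still a covering.
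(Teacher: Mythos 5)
Your proposal is correct and matches the paper's approach exactly: the corollary is stated without proof as an immediate instance of the preceding theorem $\mathcal{L}(M(\mathcal{F}-\{K\}))\subseteq \mathcal{L}(M(\mathcal{F}))$ with $\mathcal{F}=\mathcal{C}$. Your added remark that the immured hypothesis serves only to keep $\mathcal{C}-\{K\}$ a covering, rather than being needed for the inclusion itself, is a correct and worthwhile clarification.
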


\begin{corollary}
Let $\mathcal{C}$ be a covering on $E$.
$\mathcal{L}(M(exclusion(\mathcal{C})))\subseteq \mathcal{L}(M(\mathcal{C}))$.
\end{corollary}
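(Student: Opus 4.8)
The plan is to reduce the statement to the already-established single-element removal result by peeling off the immured elements one at a time and chaining the resulting inclusions. First I would recall the definition of $exclusion(\mathcal{C})$: it is precisely the covering obtained from $\mathcal{C}$ by deleting all of its immured elements. So, writing $\{K_{1}, K_{2}, \ldots, K_{s}\}$ for the set of all immured elements of $\mathcal{C}$, we have $exclusion(\mathcal{C}) = \mathcal{C} - \{K_{1}, K_{2}, \ldots, K_{s}\}$.

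Next I would build the descending chain of families $\mathcal{C} \supseteq \mathcal{C} - \{K_{1}\} \supseteq \mathcal{C} - \{K_{1}, K_{2}\} \supseteq \cdots \supseteq \mathcal{C} - \{K_{1}, \ldots, K_{s}\} = exclusion(\mathcal{C})$, where at the $i$-th step the single element $K_{i}$ is removed from the current family. The key observation is that Theorem 15 is stated for an arbitrary family subset $\mathcal{F}$ on $E$ and an arbitrary member $K \in \mathcal{F}$: it requires neither that $K$ be immured nor that $\mathcal{F}$ be a covering. Hence at each step it applies directly, with $\mathcal{F} = \mathcal{C} - \{K_{1}, \ldots, K_{i-1}\}$ and $K = K_{i}$, yielding $\mathcal{L}(M(\mathcal{C} - \{K_{1}, \ldots, K_{i}\})) \subseteq \mathcal{L}(M(\mathcal{C} - \{K_{1}, \ldots, K_{i-1}\}))$. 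Equivalently, one could cite the immediately preceding corollary, since each $K_{i}$ is immured in $\mathcal{C}$.

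Finally I would compose these inclusions. By transitivity of set inclusion applied across the whole chain, $\mathcal{L}(M(exclusion(\mathcal{C}))) = \mathcal{L}(M(\mathcal{C} - \{K_{1}, \ldots, K_{s}\})) \subseteq \mathcal{L}(M(\mathcal{C}))$, which is the desired conclusion. Formally this is a finite induction on $s$, the number of immured elements, the base case $s = 0$ being the trivial equality $exclusion(\mathcal{C}) = \mathcal{C}$.

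There is essentially no hard step here: the entire content lives in Theorem 15, and the present corollary is merely its iterated form. The only point deserving care is the bookkeeping of the intermediate families, namely confirming that the element removed at each stage still belongs to the current family so that Theorem 15 is applicable. Since $K_{1}, \ldots, K_{s}$ are distinct members of $\mathcal{C}$, each $K_{i}$ survives the earlier deletions, so the iteration is well defined. I would also remark that the immured property itself is not needed to obtain the inclusion; it serves only to identify which elements are removed and to guarantee, by its definition, that $exclusion(\mathcal{C})$ is still a covering.
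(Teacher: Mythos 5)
Your proof is correct and follows exactly the route the paper intends: the corollary is stated without proof as an immediate consequence of Theorem 15 (or the preceding corollary on a single immured element), obtained by removing the immured elements one at a time and chaining the inclusions by transitivity. Your added observation that Theorem 15 needs neither the immured property nor the covering property of the intermediate families is accurate and makes the iteration unproblematic.
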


\begin{remark}
Let $\mathcal{C}$ is a covering and $SH$ induced by $\mathcal{C}$ a closure operator.
If a reducible element $K$ is removed from the covering $\mathcal{C}$, then $\mathcal{C}-\{K\}$ may not still be a covering such that $SH$ is a closure operator.
Hence, it is difficult to discuss the relationship between $\mathcal{I}_{SH}(\mathcal{C})$ and $\mathcal{I}_{SH}(\mathcal{C}-\{K\})$.
\end{remark}

\begin{example}
Let $E=\{1,2,3\}$ and $\mathcal{C}=\{K_{1}, K_{2}, K_{3}\}$ where $K_{1}=\{1,2\},K_{2}=\{1,3\},K_{3}=\{1,2,3\}$.
$I(1)=I(2)=I(3)=\{1,2,3\}$, thus $\{I(x):x\in E\}$ forms a partition.
Hence, $SH$ is a closure operator induced by $\mathcal{C}$.
It is clear that $K_{3}$ is a reducible element, $\mathcal{C}-\{K_{3}\}=\{K_{1},K_{2}\}$.
Then the indiscernible neighborhoods induced by $\mathcal{C}-\{K_{3}\}$ are $I(1)=\{1,2,3\},I(2)=\{1,2\},I(3)=\{1,3\}$.
we find that $\{I(x):x\in E\}$ can not form a partition.
Hence, $SH$ is not a closure operator induced by $\mathcal{C}-\{K_{3}\}$.
\end{example}

\begin{theorem}
Let $\mathcal{C}$ be a covering on $E$ and $K$ an immured element. If $SH$ induced by $\mathcal{C}$ is a closure operator, then $SH$ induced by $\mathcal{C}-\{K\}$ is also a closure operator.
Moreover, $\mathcal{I}_{SH}(\mathcal{C})=\mathcal{I}_{SH}(\mathcal{C}-\{K\})$ and $\mathcal{L}_{SH}(M(\mathcal{C}))=\mathcal{L}_{SH}(M(\mathcal{C}-\{K\}))$.
\end{theorem}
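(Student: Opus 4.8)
The plan is to reduce everything to the single observation that the operator $SH$ is completely determined by the family $\{I(x):x\in E\}$ of indiscernible neighborhoods, combined with Lemma 11, which says this family is unchanged when an immured element is deleted. First I would check that $\mathcal{C}-\{K\}$ is still a covering of $E$: since $K$ is immured, there is $K'\in\mathcal{C}$ with $K\subset K'$, so every point of $K$ already lies in $K'\in\mathcal{C}-\{K\}$, whence $\bigcup(\mathcal{C}-\{K\})=E$ and every remaining block is nonempty. Then Lemma 11 gives $I_{\mathcal{C}}(x)=I_{\mathcal{C}-\{K\}}(x)$ for all $x\in E$. Because $SH(X)=\bigcup\{I(x):x\in X\}$ depends on the covering only through these neighborhoods, the two operators $SH$ induced by $\mathcal{C}$ and by $\mathcal{C}-\{K\}$ are in fact the same function on $P(E)$.

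Next I would transfer the closure-operator property using Theorem 8. Since $SH$ induced by $\mathcal{C}$ is a closure operator of a matroid, Theorem 8 says $\{I_{\mathcal{C}}(x):x\in E\}$ forms a partition of $E$. By the neighborhood invariance just noted, $\{I_{\mathcal{C}-\{K\}}(x):x\in E\}$ is exactly the same collection, hence also a partition; applying Theorem 8 to the covering $\mathcal{C}-\{K\}$ then yields that $SH$ induced by $\mathcal{C}-\{K\}$ is a closure operator of a matroid.

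Finally, the two equalities fall out immediately. The family $\mathcal{I}_{SH}(\mathcal{C})=\{I\subseteq E:\forall x\in I, x\notin SH(I-\{x\})\}$ depends on $\mathcal{C}$ only through the operator $SH$; since the two $SH$ operators coincide, so do the defining conditions, giving $\mathcal{I}_{SH}(\mathcal{C})=\mathcal{I}_{SH}(\mathcal{C}-\{K\})$. Equal independent-set families mean that $M(E,\mathcal{I}_{SH}(\mathcal{C}))$ and $M(E,\mathcal{I}_{SH}(\mathcal{C}-\{K\}))$ are the same matroid, and therefore have the same closed sets, i.e. $\mathcal{L}_{SH}(M(\mathcal{C}))=\mathcal{L}_{SH}(M(\mathcal{C}-\{K\}))$.

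I expect no genuine obstacle here; the argument is essentially bookkeeping on top of Lemma 11 and Theorem 8. The only points requiring care are verifying that $\mathcal{C}-\{K\}$ remains a covering (so that the hypotheses of Theorem 8 apply on both sides) and making explicit that $SH$ sees the covering only through $\{I(x):x\in E\}$, which is precisely what lets Lemma 11 do all the work and collapses the problem to an equality of identical operators.
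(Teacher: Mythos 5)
Your proof is correct and is exactly the route the paper sets up: the paper actually states this theorem without a written proof, but Lemma 11 (invariance of $I(x)$ under deletion of an immured element) and Theorem 8 (the partition criterion for $SH$ to be a closure operator) are placed precisely so that the result follows by the bookkeeping you describe. Your added check that $\mathcal{C}-\{K\}$ remains a covering is a worthwhile detail the paper leaves implicit.
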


\begin{corollary}
Let $\mathcal{C}$ be a covering on $E$. If $SH$ induced by $\mathcal{C}$ is a closure operator, then $SH$ induced by
$exclusion(\mathcal{C})$ is also a closure operator.
Moreover, $\mathcal{I}_{SH}(\mathcal{C})=\mathcal{I}_{SH}(exclusion(\mathcal{C}))$ and $\mathcal{L}_{SH}(M(\mathcal{C}))=\mathcal{L}_{SH}(M(exclusion(\mathcal{C})))$.
\end{corollary}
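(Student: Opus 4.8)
The plan is to derive Corollary 12 by iterating Theorem 20 over all immured elements of $\mathcal{C}$. By Definition 11, $exclusion(\mathcal{C})$ is the covering obtained from $\mathcal{C}$ by deleting exactly the set $\{K_{1},K_{2},\cdots,K_{s}\}$ of all immured elements of $\mathcal{C}$. I would set up the descending chain $\mathcal{C}=\mathcal{C}_{0}\supseteq \mathcal{C}_{1}\supseteq \cdots \supseteq \mathcal{C}_{s}=exclusion(\mathcal{C})$, where $\mathcal{C}_{i}=\mathcal{C}_{i-1}-\{K_{i}\}$, and argue by induction on $i$ that $SH$ induced by $\mathcal{C}_{i}$ remains a closure operator while both equalities persist, taking the hypothesis on $\mathcal{C}=\mathcal{C}_{0}$ as the base case.

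The crux of the induction, and where I expect to spend the real effort, is verifying that $K_{i}$ is still an immured element of the intermediate covering $\mathcal{C}_{i-1}$, so that Theorem 20 is applicable at stage $i$. To see this, note that since $K_{i}$ is immured in $\mathcal{C}$ there is a block properly containing it; among all blocks of $\mathcal{C}$ containing $K_{i}$, choose one $K^{*}$ of maximal cardinality. The existence of a proper superset forces $K_{i}\subset K^{*}$ properly, and $K^{*}$ itself cannot be immured: otherwise it would be properly contained in some larger block, which would also properly contain $K_{i}$ and contradict the maximality of $K^{*}$. Since exclusion deletes only immured elements, a non-immured block is never removed, so $K^{*}\notin\{K_{1},\cdots,K_{s}\}$ and hence $K^{*}\in \mathcal{C}_{i-1}$. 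Thus $K_{i}\subset K^{*}$ with $K^{*}\in\mathcal{C}_{i-1}$, showing that $K_{i}$ remains immured in $\mathcal{C}_{i-1}$. I would also record the routine fact that deleting an immured element keeps the family a covering, since every point of $K_{i}$ already lies in $K^{*}$, so each $\mathcal{C}_{i}$ is a genuine covering.

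With this persistence established, the induction closes cleanly. At stage $i$, $K_{i}$ is immured in the covering $\mathcal{C}_{i-1}$ and, by the inductive hypothesis, $SH$ induced by $\mathcal{C}_{i-1}$ is a closure operator; Theorem 20 then yields that $SH$ induced by $\mathcal{C}_{i}$ is again a closure operator, together with $\mathcal{I}_{SH}(\mathcal{C}_{i-1})=\mathcal{I}_{SH}(\mathcal{C}_{i})$ and $\mathcal{L}_{SH}(M(\mathcal{C}_{i-1}))=\mathcal{L}_{SH}(M(\mathcal{C}_{i}))$. Chaining these equalities from $i=1$ to $i=s$ gives $\mathcal{I}_{SH}(\mathcal{C})=\mathcal{I}_{SH}(exclusion(\mathcal{C}))$ and $\mathcal{L}_{SH}(M(\mathcal{C}))=\mathcal{L}_{SH}(M(exclusion(\mathcal{C})))$, while the surviving closure-operator property at stage $s$ is exactly the first assertion of the corollary.

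I expect the main obstacle to be the bookkeeping of the inductive step, precisely the claim that each $K_{i}$ stays immured after the earlier deletions; the maximal-superset argument is the clean way to handle it, and it also transparently explains why the blocks carrying the witnesses are the non-immured ones, which are exactly the blocks preserved by exclusion. An alternative, slightly more direct route would bypass Theorem 20 and instead iterate Lemma 11 to conclude that every $I(x)$ is unchanged between $\mathcal{C}$ and $exclusion(\mathcal{C})$, then invoke Theorem 8 with the partition-based description $\mathcal{I}_{SH}=\mathcal{I}^{'}$ from Proposition 11 and the observation that $SH$ depends only on the family $\{I(x):x\in E\}$; however, the iterated-Theorem-20 formulation is cleaner to present and reuses the machinery already developed.
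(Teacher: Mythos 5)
Your proof is correct and takes essentially the route the paper intends: the corollary is stated immediately after the single-deletion theorem for immured elements with no written proof, the implicit argument being exactly your iteration of that theorem along the chain $\mathcal{C}=\mathcal{C}_{0}\supseteq\mathcal{C}_{1}\supseteq\cdots\supseteq\mathcal{C}_{s}=exclusion(\mathcal{C})$. Your maximal-cardinality witness $K^{*}$ in fact rigorously settles the one point the paper leaves tacit (compare the ``the rest may be deduced by analogy'' step in the proof of Proposition 14), namely that each $K_{i}$ remains an immured element of the intermediate covering $\mathcal{C}_{i-1}$ and that each $\mathcal{C}_{i}$ is still a covering, so the inductive application of the theorem is legitimate.
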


\begin{remark}
Let $\mathcal{C}$ be a covering and $XH$ induced by $\mathcal{C}$ a closure operator.
If an immured element $K$ is removed from the covering $\mathcal{C}$, then $\mathcal{C}-\{K\}$ may not still be a covering which makes $XH$ and $VH$ be closure operators.
So we omit the discussion of the relationship between $\mathcal{I}_{XH}(\mathcal{C})$ and $\mathcal{I}_{XH}(\mathcal{C}-\{K\})$, and the relationship between $\mathcal{I}_{VH}(\mathcal{C})$ and $\mathcal{I}_{VH}(\mathcal{C}-\{K\})$.
\end{remark}

The following example illustrates the above remark.

\begin{example}
Suppose $K_{1}=\{1\}, K_{2}=\{1,2\}, K_{3}=\{2,3\}, K_{4}=\{3\}, K_{5}=\{1,2,3\}$, $\mathcal{C}_{1}=\{K_{1}, K_{2}, K_{3}, K_{4}\}$.
$N(1)=\{1\}, N(2)=\{2\}, N(3)=\{3\}$, thus $\{N(x):x\in E\}$ forms a partition of $E$.
Hence, $XH$ is a closure operator.
It is clear that $K_{1}$ is an immured element, and the neighborhoods induced by $\mathcal{C}-\{K_{1}\}$ are $N(1)=\{1,2\}, N(2)=\{2\}, N(3)=\{3\}$, thus $\{N(x):x\in E\}$ can not form a partition of $E$.
Hence, $\mathcal{C}-\{K_{1}\}$ is not a covering which makes $XH$ be a closure operator.
Let $\mathcal{C}_{2}=\{K_{1}, K_{2}, K_{3}, K_{5}\}$.
$N(1)=\{1\}, N(2)=\{2\}, N(3)=\{2,3\}, VH(\{1\})=\{1\}, VH(\{2\})= VH(\{3\})=\{2,3\}$, thus $\{VH(\{x\}:x\in E)\}$ forms a partition of $E$.
Hence, $\mathcal{C}_{2}$ is a covering such that $VH$ is a closure operator.
However, $\{VH(\{x\}):x\in E\}$ induced by $\mathcal{C}_{2}-\{K_{1}\}$ does not form a partition because $VH(\{1\})=\{1,2\}, VH(\{2\})=\{1,2,3\}, VH(\{3\})=\{2,3\}$.
It is clear that $K_{1}$ is an immured element in $\mathcal{C}_{2}$ and $\mathcal{C}_{2}-\{K_{1}\}$ is a covering which dose not make $VH$ be a closure operator.
\end{example}

\begin{theorem}
Let $\mathcal{C}$ be a covering on $E$ and $K$ be a reducible element. If $XH$ induced by $\mathcal{C}$ is a closure operator, then $XH$ induced by
$\mathcal{C}-\{K\}$ is also a closure operator.
Moreover, $\mathcal{I}_{XH}(\mathcal{C})=\mathcal{I}_{XH}(\mathcal{C}-\{K\})$ and $\mathcal{L}_{XH}(M(\mathcal{C}))=\mathcal{L}_{XH}(M(\mathcal{C}-\{K\}))$.
\end{theorem}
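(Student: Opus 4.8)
The plan is to reduce the entire statement to a single observation: the operator $XH$ is completely determined by the neighborhood family $\{N(x):x\in E\}$, and by Lemma 12 this family is unaffected by the deletion of a reducible element. This parallels exactly the argument used for $SH$ and immured elements in the preceding theorem, with Lemma 12 playing the role there played by Lemma 11.

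First I would confirm that $\mathcal{C}-\{K\}$ is still a covering of $E$. Since $K$ is reducible, it is a union of members of $\mathcal{C}-\{K\}$, so deleting $K$ does not change the union of the remaining blocks; hence $\bigcup(\mathcal{C}-\{K\})=E$, and every remaining block is nonempty, so $\mathcal{C}-\{K\}$ is a covering. This guarantees that neighborhoods, and therefore Theorem 10, remain applicable to $\mathcal{C}-\{K\}$. Next I would invoke Lemma 12: because $K$ is reducible, for every $x\in E$ the neighborhood $N(x)$ computed in $\mathcal{C}-\{K\}$ coincides with $N(x)$ computed in $\mathcal{C}$. Since $XH(X)=\{x:N(x)\bigcap X\neq\emptyset\}$ depends on nothing but this neighborhood family, the operator $XH$ induced by $\mathcal{C}-\{K\}$ is literally the same function on $P(E)$ as the operator $XH$ induced by $\mathcal{C}$.

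From this identity the three assertions follow at once. Because $XH$ induced by $\mathcal{C}$ is a closure operator, Theorem 10 gives that $\{N(x):x\in E\}$ forms a partition; as this family is unchanged, it remains a partition relative to $\mathcal{C}-\{K\}$, so Theorem 10 applied to $\mathcal{C}-\{K\}$ yields that $XH$ induced by $\mathcal{C}-\{K\}$ is again a closure operator. Finally, since $\mathcal{I}_{XH}(\mathcal{C})=\{I\subseteq E:\forall x\in I,\ x\notin XH(I-\{x\})\}$ and $\mathcal{L}_{XH}(M(\mathcal{C}))$ are defined purely in terms of the operator $XH$, and that operator is the same in both cases, we obtain $\mathcal{I}_{XH}(\mathcal{C})=\mathcal{I}_{XH}(\mathcal{C}-\{K\})$ and $\mathcal{L}_{XH}(M(\mathcal{C}))=\mathcal{L}_{XH}(M(\mathcal{C}-\{K\}))$ immediately.

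The argument has no genuinely hard step; all the real content is hidden in Lemma 12. The only point requiring care is the conceptual one of recognizing that $XH$ is a function of the neighborhood system alone, which turns what might look like three separate claims into three trivial consequences of a single invariance. I would also double-check that the hypothesis used is exactly reducibility of $K$ (so that Lemma 12 applies) rather than mere membership, since Example 9 shows that for $SH$ the analogous invariance fails for reducible elements and succeeds only for immured ones; the contrast between the two cases is precisely the content one must keep straight.
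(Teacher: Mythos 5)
Your proposal is correct and matches the paper's own argument: the paper likewise proves this by combining Lemma 12 (neighborhoods are unchanged when a reducible element is deleted) with Theorem 10 (the partition criterion for $XH$ to be a closure operator), concluding that $XH$, and hence $\mathcal{I}_{XH}$ and $\mathcal{L}_{XH}$, are identical for $\mathcal{C}$ and $\mathcal{C}-\{K\}$. Your version is in fact slightly more careful than the paper's one-line proof, since you also verify that $\mathcal{C}-\{K\}$ remains a covering.
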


\begin{proof}
Since $XH$ induced by $\mathcal{C}$ is a closure operator, $\{N(x):x\in E\}$ induced by $\mathcal{C}$ forms a partition.
Based on the definition of $\mathcal{I}_{XH}(\mathcal{C})$ and Lemma 12, $XH$ induced by $\mathcal{C}-\{K\}$ is also a closure operator and $\mathcal{I}_{XH}(\mathcal{C})=\mathcal{I}_{XH}(\mathcal{C}-\{K\})$.
\end{proof}

\begin{corollary}
Let $\mathcal{C}$ be a covering on $E$. If $XH$ is a closure operator, then
$XH$ induced by $reduct(\mathcal{C})$ is a closure operator.
Moreover, $\mathcal{I}_{XH}(\mathcal{C})=\mathcal{I}_{XH}(reduct(\mathcal{C}))$ and $\mathcal{L}_{XH}(M(\mathcal{C}))=\mathcal{L}_{XH}(M(reduct(\mathcal{C})))$.
\end{corollary}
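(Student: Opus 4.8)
The plan is to deduce this corollary from the preceding theorem (removal of a single reducible element) by iterating it along a reduction sequence. Since $reduct(\mathcal{C})$ is obtained by successively deleting reducible elements until an irreducible covering remains, I would first fix such a sequence
\[
\mathcal{C}=\mathcal{C}_{0}\supseteq \mathcal{C}_{1}\supseteq \cdots \supseteq \mathcal{C}_{s}=reduct(\mathcal{C}),
\]
where $\mathcal{C}_{i}=\mathcal{C}_{i-1}-\{K_{i}\}$ and $K_{i}$ is reducible in $\mathcal{C}_{i-1}$. The argument is then a finite induction on the number $s$ of deletions.

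For the inductive step I would assume that $XH$ induced by $\mathcal{C}_{i-1}$ is a closure operator. Because $K_{i}$ is reducible in $\mathcal{C}_{i-1}$, the preceding theorem applies verbatim with $\mathcal{C}_{i-1}$ playing the role of $\mathcal{C}$: it yields that $XH$ induced by $\mathcal{C}_{i}$ is again a closure operator, together with $\mathcal{I}_{XH}(\mathcal{C}_{i-1})=\mathcal{I}_{XH}(\mathcal{C}_{i})$ and $\mathcal{L}_{XH}(M(\mathcal{C}_{i-1}))=\mathcal{L}_{XH}(M(\mathcal{C}_{i}))$. The decisive feature here is that the theorem reproduces the closure-operator hypothesis for $\mathcal{C}_{i}$, which is exactly what is needed to invoke it again at the next stage; this is what propels the induction forward. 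Starting from the given assumption for $\mathcal{C}_{0}=\mathcal{C}$ and iterating $s$ times, I conclude that $XH$ induced by $\mathcal{C}_{s}=reduct(\mathcal{C})$ is a closure operator, and chaining the two families of equalities by transitivity gives $\mathcal{I}_{XH}(\mathcal{C})=\mathcal{I}_{XH}(reduct(\mathcal{C}))$ and $\mathcal{L}_{XH}(M(\mathcal{C}))=\mathcal{L}_{XH}(M(reduct(\mathcal{C})))$.

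The one point requiring care is the set-up of the reduction sequence itself: I must use that $reduct(\mathcal{C})$ is reachable by deletions in which each $K_{i}$ is reducible in the intermediate covering $\mathcal{C}_{i-1}$, not merely in the original $\mathcal{C}$, and that every $\mathcal{C}_{i}$ remains a covering. Both facts are furnished by the reduction theory of~\cite{ZhuWang03Reduction} that defines $reduct(\mathcal{C})$, so once the sequence is fixed the induction is purely mechanical. This mirrors the ``and the rest may be deduced by analogy'' device already employed for the partition-stability propositions and for the analogous corollary for $SH$ and $exclusion(\mathcal{C})$, so I would expect the written proof to amount to a sentence or two invoking the preceding theorem and the definition of the reduct.
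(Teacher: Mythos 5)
Your proposal is correct and is exactly the argument the paper intends: the corollary is stated without proof, but the paper's own pattern (e.g., the proofs of Propositions 14 and 20, which peel off one reducible/immured element at a time via Lemma 12 and conclude "the rest may be deduced by analogy") is precisely your finite induction along a reduction sequence, iterating the preceding single-deletion theorem. Your explicit attention to the fact that each $K_{i}$ must be reducible in the intermediate covering $\mathcal{C}_{i-1}$ and that each $\mathcal{C}_{i}$ remains a covering is the only non-mechanical point, and it is correctly handled.
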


\begin{theorem}
Let $\mathcal{C}$ be a covering of $E$ and $K\in \mathcal{C}$ a reducible element. If $VH$ induced by $\mathcal{C}$ is a closure operator, then
$VH$ induced by $\mathcal{C}-\{K\}$ is also a closure operator.
Moreover, $\mathcal{I}_{VH}(\mathcal{C})=\mathcal{I}_{VH}(\mathcal{C}-\{K\})$ and $\mathcal{L}_{VH}(M(\mathcal{C}))=\mathcal{L}_{VH}(M(\mathcal{C}-\{K\}))$.
\end{theorem}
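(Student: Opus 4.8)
The plan is to reduce everything to a single observation: the operator $VH$ is determined entirely by the family of neighborhoods $\{N(x):x\in E\}$, and these neighborhoods are unchanged when a reducible block is deleted. Indeed, since $VH(X)=\bigcup\{N(x):N(x)\bigcap X\neq\emptyset\}$, the value $VH(X)$ for each $X\subseteq E$ is computed using only the sets $N(x)$; no block of $\mathcal{C}$ enters except through the neighborhoods it helps to form.

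First I would invoke Lemma 12. Because $K$ is a reducible element of $\mathcal{C}$, that lemma yields $N_{\mathcal{C}}(x)=N_{\mathcal{C}-\{K\}}(x)$ for every $x\in E$. Substituting this equality into the defining formula above gives $VH_{\mathcal{C}}(X)=VH_{\mathcal{C}-\{K\}}(X)$ for all $X\subseteq E$; that is, the two coverings induce literally the same operator $VH$.

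From this identity the three assertions follow with no further computation. Taking $X=\{x\}$ shows $VH_{\mathcal{C}}(\{x\})=VH_{\mathcal{C}-\{K\}}(\{x\})$ for each $x$, so the family $\{VH(\{x\}):x\in E\}$ is the same for both coverings; since $VH$ induced by $\mathcal{C}$ is a closure operator, this family forms a partition of $E$ by Theorem 11, and the same partition then certifies, again via Theorem 11, that $VH$ induced by $\mathcal{C}-\{K\}$ is a closure operator. Because $\mathcal{I}_{VH}(\mathcal{C})=\{I\subseteq E:\forall x\in I,\ x\notin VH(I-\{x\})\}$ is defined solely through the operator $VH$, equality of the operators forces $\mathcal{I}_{VH}(\mathcal{C})=\mathcal{I}_{VH}(\mathcal{C}-\{K\})$. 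Finally $\mathcal{L}_{VH}(M(\mathcal{C}))$ is the closed-set lattice of the matroid $M(E,\mathcal{I}_{VH}(\mathcal{C}))$, which is pinned down by $\mathcal{I}_{VH}(\mathcal{C})$; since the independent-set families coincide, so do the matroids and hence their closed-set lattices, giving $\mathcal{L}_{VH}(M(\mathcal{C}))=\mathcal{L}_{VH}(M(\mathcal{C}-\{K\}))$.

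There is no serious obstacle here: the whole theorem is a corollary of Lemma 12 once one notices that $VH$, $\mathcal{I}_{VH}$, and $\mathcal{L}_{VH}$ are built, in that order, purely from the neighborhoods. The only point needing a moment's care is that $\mathcal{C}-\{K\}$ is still a covering of $E$, so that all the objects are defined on the same ground set; this holds because a reducible block $K$ is a union of other blocks, so its removal leaves the union of the remaining blocks equal to $E$. This argument is the exact analogue, for the seventh-type operator, of the proof just given for the sixth-type operator $XH$.
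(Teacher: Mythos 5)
Your proof is correct and follows exactly the route the paper intends: the paper actually omits the proof of this theorem, giving only the analogous argument for $XH$ (via Lemma~12 and the invariance of the neighborhoods $N(x)$ under removal of a reducible block), which is precisely what you carry out here for $VH$. Your additional remarks --- that $VH$, $\mathcal{I}_{VH}$ and $\mathcal{L}_{VH}$ are all determined by the operator alone, and that $\mathcal{C}-\{K\}$ remains a covering --- are exactly the details the paper leaves implicit.
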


\begin{corollary}
Let $\mathcal{C}$ be a covering on $E$. If $VH$ induced by $\mathcal{C}$ is a closure operator, then $XH$ induced by $reduct(\mathcal{C})$ is also a closure operator.
Moreover, $\mathcal{I}_{VH}(\mathcal{C})=\mathcal{I}_{VH}(reduct(\mathcal{C}))$ and $\mathcal{L}_{VH}(M(\mathcal{C}))=\mathcal{L}_{VH}(M(reduct(\mathcal{C})))$.
\end{corollary}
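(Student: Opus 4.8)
The plan is to derive this corollary by iterating the immediately preceding theorem (the single-element version for $VH$) along a reduction sequence, in exactly the same way the analogous statements for $XH$ and for the transversal matroid were obtained from their one-block versions. By the definition of $reduct(\mathcal{C})$, the reduct is produced from $\mathcal{C}$ by deleting reducible blocks one at a time: there is a finite list $K_{1},K_{2},\cdots,K_{s}$ such that, writing $\mathcal{C}_{0}=\mathcal{C}$ and $\mathcal{C}_{i}=\mathcal{C}_{i-1}-\{K_{i}\}$, each $K_{i}$ is reducible in $\mathcal{C}_{i-1}$ and $\mathcal{C}_{s}=reduct(\mathcal{C})$. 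Each $\mathcal{C}_{i}$ is again a covering of $E$, since deleting a reducible block (which is a union of other blocks) never destroys the covering property.

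First I would run an induction on $i$ whose hypothesis is that $VH$ induced by $\mathcal{C}_{i}$ is a closure operator of a matroid and that $\mathcal{I}_{VH}(\mathcal{C})=\mathcal{I}_{VH}(\mathcal{C}_{i})$ and $\mathcal{L}_{VH}(M(\mathcal{C}))=\mathcal{L}_{VH}(M(\mathcal{C}_{i}))$. The base case $i=0$ is precisely the hypothesis of the corollary. For the inductive step, $K_{i}$ is reducible in $\mathcal{C}_{i-1}$ and, by the inductive hypothesis, $VH$ induced by $\mathcal{C}_{i-1}$ is a closure operator; hence the preceding theorem applies verbatim to the pair $(\mathcal{C}_{i-1},K_{i})$ and gives that $VH$ induced by $\mathcal{C}_{i}=\mathcal{C}_{i-1}-\{K_{i}\}$ is again a closure operator, together with $\mathcal{I}_{VH}(\mathcal{C}_{i-1})=\mathcal{I}_{VH}(\mathcal{C}_{i})$ and $\mathcal{L}_{VH}(M(\mathcal{C}_{i-1}))=\mathcal{L}_{VH}(M(\mathcal{C}_{i}))$. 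Composing these with the inductive hypothesis yields the three assertions for $\mathcal{C}_{i}$, and setting $i=s$ produces exactly the claimed conclusions for $reduct(\mathcal{C})$.

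The reason all three properties survive each deletion is Lemma~12: removing a reducible block leaves every neighborhood $N(x)$ unchanged, hence leaves the family $\{N(x):x\in E\}$ and the operator $VH(X)=\bigcup\{N(x):N(x)\bigcap X\neq\emptyset\}$ literally unchanged; since $\mathcal{I}_{VH}$ and the closed sets of $M(E,\mathcal{I}_{VH})$ are defined purely through $VH$, they cannot move. This also furnishes a one-step alternative proof: iterating Lemma~12 along the reduction sequence shows that $N(x)$ is the same in $\mathcal{C}$ as in $reduct(\mathcal{C})$ for every $x$, so $VH$ induced by the two coverings is the same operator on $P(E)$, whence all three conclusions are immediate.

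I expect the only point needing care to be the bookkeeping of the inductive step, namely verifying that at stage $i$ the block $K_{i}$ is still reducible in the intermediate covering $\mathcal{C}_{i-1}$ (so that the preceding theorem applies) and that $\mathcal{C}_{i-1}$ remains a covering. Both are guaranteed by the definition of a reduction sequence together with the neighborhood invariance of Lemma~12; everything else is a formal chaining of equalities, so no substantial obstacle remains once the reduction sequence is fixed.
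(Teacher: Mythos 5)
Your proposal is correct and follows exactly the route the paper intends: the paper states this corollary without proof, but its evident derivation is to iterate Theorem 18 along the reduction sequence (or, equivalently, to iterate Lemma 12 so that every $N(x)$, and hence the operator $VH$ itself, is literally unchanged in passing from $\mathcal{C}$ to $reduct(\mathcal{C})$), precisely mirroring the proof of Proposition 20 and the analogous corollary for $XH$. You also correctly read the statement's ``$XH$ induced by $reduct(\mathcal{C})$'' as a typo for $VH$, which is what the conclusion requires.
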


\section{Conclusions}
\label{S:Conclusions}
This paper has studied the geometric lattice structures of covering based-rough sets through matroids.
The important contribution of this paper is that we have established a geometric lattice structure of covering-based rough sets through the transversal matroid induced by a covering and have presented three geometric lattice structures of covering-based rough sets through three types of approximation operators. Moreover, we have discussed the relationship among the four geometric lattice structures.
To study other properties of this type of geometric lattice structure and to study other geometric lattices from the viewpoint of other upper approximation operators is our future work.

\section{Acknowledgments}
This work is supported in part by the National Natural Science Foundation of China under Grant No. 61170128,  the Natural Science Foundation of Fujian Province, China, under Grant Nos. 2011J01374 and 2012J01294, and the Science and Technology Key Project of Fujian Province, China, under Grant No. 2012H0043.




\end{document}